\documentclass[twoside]{article}
\usepackage[preprint]{aistats2027}
\usepackage{amsmath,amssymb,mathtools}
\usepackage[round]{natbib}
\usepackage{booktabs}
\usepackage{graphicx}
\usepackage{subcaption}
\usepackage{microtype}
\usepackage{url}
\usepackage[normalem]{ulem}
\usepackage{xcolor}
\usepackage{hyperref}
\usepackage{cleveref}
\usepackage{enumitem}
\usepackage{tikz}
\usetikzlibrary{decorations.pathreplacing}

\renewcommand{\refname}{References}

\graphicspath{{figures/}}

\definecolor{linkcolor}{RGB}{44,96,163}
\hypersetup{
    colorlinks=true,
    citecolor=linkcolor,
    linkcolor=linkcolor,
    urlcolor=linkcolor,
}

\definecolor{changecolor}{HTML}{A327F5}

\newcommand{\R}{\mathbb{R}}
\newcommand{\E}{\mathbb{E}}

\newcommand{\eg}{\emph{e.g.}}
\newcommand{\ie}{\emph{i.e.}}

\newcommand{\paperauthors}{
  Hamza Cherkaoui\textsuperscript{*} \And
  H\'el\`ene Halconruy\textsuperscript{*} \And
  Antonio Ocello
}

\newcommand{\paperaddresses}{
  SAMOVAR, T\'el\'ecom SudParis\\
  Institut Polytechnique de Paris\\
  Palaiseau, France\\
  \texttt{hamza.cherkaoui@telecom-sudparis.eu}
  \And
  SAMOVAR, T\'el\'ecom SudParis\\
  Institut Polytechnique de Paris\\
  Modal'X, Universit\'e Paris Nanterre\\
  Nanterre, France\\
  \texttt{helene.halconruy@telecom-sudparis.eu}
  \And
  CREST, ENSAE Paris\\
  Institut Polytechnique de Paris\\
  Palaiseau, France\\
  \texttt{antonio.ocello@ensae.fr}
}

\begin{document}

\runningtitle{Radial Tail Repair: Post-Training Correction for Diffusion and Flow Models}
\runningauthor{Cherkaoui, Halconruy, Ocello}

\twocolumn[
\aistatstitle{Radial Tail Repair: Post-Training Correction for Diffusion and Flow Models}
\aistatsauthor{\paperauthors}
\aistatsaddress{\paperaddresses}
]

\begingroup
    \begin{NoHyper}
    \renewcommand{\thefootnote}{*}
    \footnotetext{Equal contribution.}
    \end{NoHyper}
\endgroup

%------------------------------------------------------------------------------------------------------------------
% ABSTRACT
\begin{abstract}
    Diffusion and flow-matching models may struggle to accurately represent distributional tails, especially for heavy-tailed data.
    Existing approaches often use heavy-tailed noise or modify the generative dynamics, requiring dedicated retraining and more difficult estimation.
    We take a simpler approach: rather than relearning the full distribution, we correct only the model's behavior in the extreme regime.
    Our method builds on the radial-angular decomposition of multivariate heavy tails, where radius and direction become asymptotically independent.
    This allows us to preserve the directional structure learned by the pretrained model and correct only its radial tail.
    We estimate the tail onset from reference and generated samples, then rescale generated extremes so that their radial survival matches the target distribution.
    The resulting method is a lightweight post-training correction applied to a frozen diffusion or flow-matching model, requiring neither retraining nor changes to its sampling dynamics.
    Experiments on challenging synthetic heavy-tailed distributions and real weather and wildfire datasets show that this simple correction substantially improves extreme-tail coverage using only a small number of fitting samples.
\end{abstract}

%------------------------------------------------------------------------------------------------------------------
% SECTIONS
\section{Introduction}
\label{sec:introduction}

Extreme events occupy a peculiar place in statistical modeling: they are rare by definition, yet often drive the quantities most relevant to decision making.
Weather and climate extremes such as heatwaves, floods, or severe storms can have major societal and economic consequences \citep[see, \eg,][]{IPCC2022}, while rare market movements are central to risk assessment and stress testing in finance and insurance \citep{embrechts1999modelling}.
More broadly, reliability, environmental sciences, and risk management require accurate estimation of tail probabilities and extreme quantiles \citep{coles2001introduction,mcneil2015quantitative}.
This has motivated a long-standing literature, from classical extreme value theory to modern multivariate methods, focused on tail estimation, extrapolation beyond the observed range, and prediction of rare events \citep{deHaan2006extreme,heffernan2004conditional,davison2015statistics}.
Modern generative modeling extends this problem beyond estimation: the goal is now to learn and sample from the full data distribution, including its extreme regions. 
Diffusion \citep{sohl2015deep,song2019generative,ho2020denoising,song2020score,song2021denoising} and flow-based models \citep{lipman2023flowmatching,boffi2023probability,albergo2025stochastic,liu2023flowstraight} have achieved remarkable performance on complex high-dimensional data, but are primarily designed to capture the overall distribution rather than its extremes.

Accurately generating rare events therefore poses an additional challenge: these models may reproduce the bulk well while still misrepresenting the frequency or magnitude of tail events.
Indeed, performance in low-probability regions can deteriorate substantially, as empirically illustrated by \citet{shariatian2025dlpm}.
This is also reflected in convergence analyses, which typically rely on score regularity or confining assumptions, such as dissipativity or weak log-concavity, rather than directly controlling tail probabilities \citep{silveri2025beyond,arsenyan2026assessing,strasman2026forgetting,stephanovitch2026lipschitz}.
These standard models may consequently struggle to capture heavy-tailed targets, particularly when relying on Gaussian reference distributions.
Recent approaches address this by modifying the reference distribution, perturbation process, training objective, or data representation \citep{yoon2023score,pandey2025heavytailed,shariatian2025dlpm}.
These methods, however, typically require dedicated model design and retraining, even when the bulk is already well captured.

Such heavy-tailed constructions can also be computationally demanding and require substantial data precisely in regions where observations are scarce.
Moreover, \citet{cherkaoui2026heavytails} show that the benefits of heavy-tailed initialization can vanish under finite-sample estimation, as the increased learning difficulty offsets the improved initialization.
This naturally raises the question:

\begin{center}
    \emph{Can we preserve the bulk while efficiently repairing the tail from few samples?}
\end{center}

In this work, we focus on \textit{post-training correction} of the tail behavior of generative models.
Rather than modifying or retraining the whole generative model, we ask whether its tail can be \emph{repaired after training}.
Our approach finds its theoretical foundations in multivariate extreme value theory.
Under multivariate regular variation, extreme observations exhibit a joint radial--angular limit in which the \emph{radius is Pareto-type and asymptotically independent of the angle}.
This fundamental property suggests a simple principle for tail correction: once a sufficiently extreme region has been reached, the radial behavior may be corrected separately from the angular structure.

The method consists of three main steps: \emph{Detect}, \emph{Model}, and \emph{Repair}.
The method identifies a region where the radial and angular components are approximately independent, fits the radial tail behavior, and then applies a one-dimensional tail transformation to the radius while preserving the angular component.
In this way, we correct frequency and magnitude of extreme samples without modifying the dependence geometry learned by the generative model nor its behavior in the bulk.
This procedure being \emph{model-agnostic} can be applied directly to pre-trained diffusion or flow model, without modifying its architecture or retraining its parameters.

\section{Related Work}
\label{sec:related}

Our approach lies at the intersection of heavy-tailed generative modeling and post-training distribution correction.
We briefly summarize these lines of work and highlight their differences from our approach; a more detailed discussion is provided in \autoref{appendix-related-work}.

When dealing with \emph{heavy-tailed generation}, the main line of research incorporates tail behavior directly into the model design and training procedure.
These approaches typically train a model under a modified perturbation process rather than correct an existing generator. \citet{shariatian2025dlpm} replace Gaussian perturbations with $\alpha$-stable noise, while \citet{pandey2025heavytailed} introduce Student-$t$ perturbations and develop the $t$-EDM and $t$-Flow frameworks.
Related developments include the jump-diffusion construction of \citet{baule2025jump} and the refinement of Student-$t$ sampling dynamics through state-dependent diffusion coefficients by \citet{wakatsuki2026selfregulating}.
However, incorporating heavy-tailed noise does not guarantee improved performance.
Through a combined theoretical and empirical comparison, \citet{cherkaoui2026heavytails} identify a trade-off between initialization and statistical estimation: potential benefits of heavy-tailed initialization can be offset by a harder learning problem and less favorable sampling-error bounds.
\citet{fassina2026generalizing} propose a hybrid approach using a normalizing flow to learn a heavy-tailed initialization prior while retaining standard SGM training to refine structural details.

Building on the strong performance of standard diffusion and flow models, a second line of research adapts pretrained generative models to specific tasks or desired distributional properties through \emph{fine-tuning and inference-time guidance}.
These methods build on capabilities acquired during pretraining, either by updating model parameters or by learning an auxiliary correction to the sampling dynamics.
For example, \citet{denker2024deft} learn a compact generalized $h$-transform while retaining the pretrained diffusion model, whereas \citet{guo2026dohf} incorporate Doob-$h$ guidance through iterative distillation.
Other approaches use Feynman--Kac weighted simulation \citep{skreta2025feynman}, conditional reward expectations \citep{ouyang2026alignment}, or regularized estimation of the Doob $h$-function \citep{chang2026doob,suchkov2026weirdo}.

Beyond diffusion-specific adaptation, a complementary line of work revisits the data to identify and correct discrepancies in a trained generative distribution. Discriminator rejection sampling \citep{azadi2019discriminator} and Metropolis--Hastings GANs \citep{turner2019metropolis} use discriminator-based density-ratio estimates to correct generated samples.
For diffusion models, \citet{kim2023discriminator} train a discriminator after pretraining and use its gradient to correct the score during sampling. Similarly, \citet{beltranvelez2026reward} derive a density-ratio reward from a data-versus-model discriminator and use it to fine-tune a flow-matching model.
Our approach shares the objective of using data to correct distributional features that the original training procedure has not adequately captured, but targets tail behavior directly through a radial transformation of generated samples while preserving their angular component and leaving the pretrained generator unchanged.

\section{Methodology}
\label{sec:methodology}

Our \emph{Radial Tail Repair (RTR)} method can be divided into three pillars: \textbf{Detect}, \textbf{Model}, and \textbf{Repair} (see \autoref{fig:pipeline} for a schematic description of our pipeline).
Using two consequences of the radial--angular limit, a central result in the multivariate regular variation (MRV) theory, we select the specific regions that need correction, before acting on them.
First, exploiting the asymptotic independence between radius and angle, we \textbf{Detect} the onset of the MRV regime.
Then, within this selected regime, we \textbf{Model} the radial tail using the Pareto-type approximation of the conditional distribution of the radius beyond a high threshold.
Together, these two steps identify and model the extreme region in which the asymptotic theory is informative and carefully selected the possibly unestimated areas in the pretrained model.
Finally, \textbf{Repair} corrects the generated radial component according to the fitted tail model while preserving the angular component, thereby modifying tail magnitude without altering the learned extremal geometry.

\begin{figure}[t]
    \centering
    \includegraphics[scale=0.43]{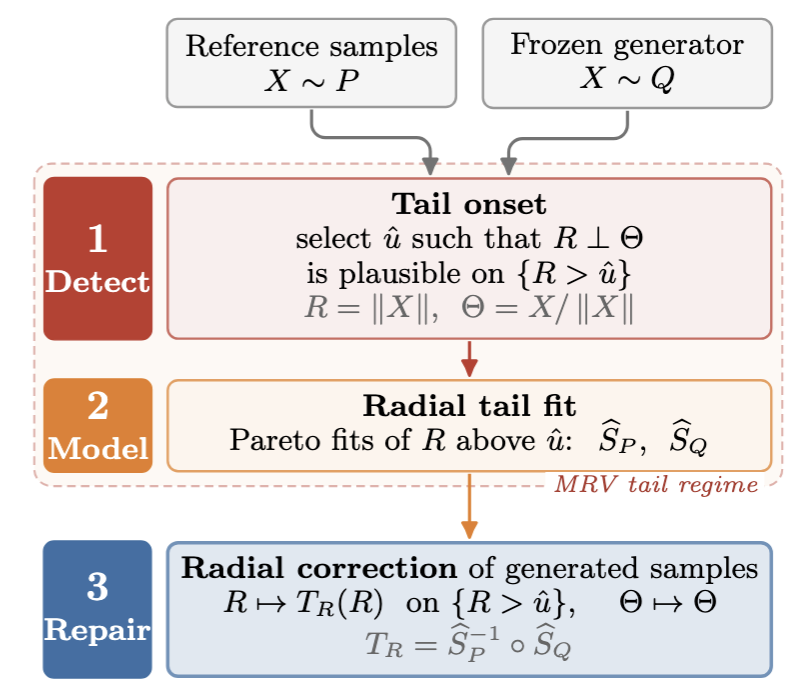}
    \caption{Radial Tail Repair pipeline. A tail onset $\hat u$ is selected where radius and direction are approximately independent; the radial tails of reference ($P$) and generated ($Q$) samples are fitted above $\hat u$ using the fitted radial survival functions $\widehat S_P,\widehat S_Q$; generated extremes are then rescaled radially using the quantile-matching map $T_R$ computed from the fitted functions, leaving their direction unchanged.}
    \label{fig:pipeline}
\end{figure}

\subsection{Multivariate regular variation regime: detect and model}

The \textbf{Detect} and \textbf{Model} steps are both based on the polar characterization of MRV.
We first recall the definition of MRV and its polar characterization, which is central to our method.
Let $\|\cdot\|$ specify the Euclidean norm and set $\mathbb S^{d-1}:=\{x:\|x\|_2=1\}$.
\begin{theorem}[Informal radial--angular characterization of MRV]
    Let $X \in \mathbb{R}^d$ be multivariate regularly varying with tail index
    $\alpha > 0$, and write $R := \|X\|,\Theta := X/\|X\| \in \mathbb{S}^{d-1}$.
    In the extreme regime $R>u$, as $u\to\infty$:
    \begin{enumerate}[leftmargin=1.1em,label=\arabic*.,itemsep=0.1em,topsep=0.1em]
        \item the radius and angle become asymptotically independent;
        \item the normalized radius $R/u$, conditional on $R>u$, converges to a
              $\operatorname{Pareto}(\alpha)$ distribution.
    \end{enumerate}
\end{theorem}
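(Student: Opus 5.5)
The plan is to make the informal statement precise through the standard definition of multivariate regular variation and then read off both conclusions from a single product-form limit.

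We take as definition the following: there exist $\alpha>0$ and a probability measure $H$ on $\mathbb S^{d-1}$, the spectral measure, such that for every $t>1$ and every Borel $A\subseteq\mathbb S^{d-1}$ with $H(\partial A)=0$,
\begin{equation*}
\lim_{u\to\infty}\frac{\Pr\bigl(R>tu,\ \Theta\in A\bigr)}{\Pr(R>u)} = t^{-\alpha}H(A).
\end{equation*}
If instead MRV is defined by vague convergence $n\Pr(X/a_n\in\cdot)\to\mu$ on $[-\infty,\infty]^d\setminus\{0\}$ with $\mu$ nonzero, we first derive this display. Homogeneity $\mu(sB)=s^{-\alpha}\mu(B)$ follows from a standard regular-variation argument: the limit of $\Pr(X\in uB)/\Pr(\|X\|>u)$ must be a regularly varying function of the scale. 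We then pass to polar coordinates. The map $x\mapsto(\|x\|,x/\|x\|)$ is a homeomorphism from $\mathbb R^d\setminus\{0\}$ onto $(0,\infty)\times\mathbb S^{d-1}$. Under this map $\mu$ becomes $c\,\nu_\alpha\otimes H$, where $\nu_\alpha(dr)=\alpha r^{-\alpha-1}dr$. The product structure is forced by homogeneity: the sets $\{r>t,\ \theta\in A\}$ form a $\pi$-system, and on it $\mu$ factorizes as $t^{-\alpha}\mu(\{r>1,\theta\in A\})$.

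With the display in hand, we condition on $\{R>u\}$ and consider the pair $(R/u,\Theta)$.
\begin{itemize}
\item Taking $A=\mathbb S^{d-1}$ gives $\Pr(R/u>t\mid R>u)\to t^{-\alpha}$ for $t\ge1$. This is exactly the $\operatorname{Pareto}(\alpha)$ survival function, which proves item 2.
\item Taking $t=1$ gives $\Pr(\Theta\in A\mid R>u)\to H(A)$.
\item The full display says that the joint conditional probability of the rectangles $(t,\infty)\times A$ converges to the product of the marginal limits.
\end{itemize}
These rectangles, with $t$ ranging over continuity points and $A$ over $H$-continuity sets, form a convergence-determining class on $[1,\infty)\times\mathbb S^{d-1}$. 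The Portmanteau theorem together with the $\pi$-system argument then gives weak convergence of $\mathcal L(R/u,\Theta\mid R>u)$ to $\operatorname{Pareto}(\alpha)\otimes H$. This product limit is the precise meaning of "asymptotically independent" in item 1.

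The main obstacle is the measure-theoretic passage from vague convergence to the polar product form. Two points need care. The first is that boundary sets in polar coordinates must be $\mu$-null; this holds because $\nu_\alpha$ has no atoms, so $\{r=t\}$ is null, and because $A$ is chosen $H$-continuous. The second is the relative compactness needed for vague convergence, which is why $t$ is restricted to values bounded away from $0$. If the paper instead adopts the polar display directly as its definition, both items follow in a few lines and no obstacle remains. Given the informal phrasing, we would state this convention explicitly and refer to Resnick's treatment for the equivalence of the two definitions.
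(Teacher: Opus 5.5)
Your proposal is correct and follows essentially the paper's route. The paper gives no proof of this statement: it recalls the polar product limit $\mathcal L(R/u,\Theta\mid R>u)\Rightarrow\operatorname{Pareto}(\alpha)\otimes H$ as a standard fact (restated in the appendix) and reads off the Pareto marginal and the asymptotic independence exactly as you do, so your sketch of the passage from vague convergence to the polar form is extra detail the paper leaves to the literature. One small slip: you state your defining display for $t>1$ but then use $t=1$ to get the angular limit, so state it for $t\geq 1$, as the paper does.
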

Thus, the tail-onset detection exploits asymptotic radial--angular independence \textit{(1)}, whereas the radial tail modeling exploits the Pareto limit of the normalized radius \textit{(2)}.

\paragraph{Detect: select tail onset $\widehat u$.}
To identify the onset of the extreme regime, we consider a grid of exceedance fractions $[f_{\min},f_{\max}]$, ranging from a small fraction of the most extreme observations to progressively larger fractions extending toward the bulk. For each $f_j\in [f_{\min},f_{\max}]$, we define the corresponding radial threshold
\begin{equation*}
    u_j = \widehat Q_R(1-f_j),
\end{equation*}
where $\widehat Q_R$ corresponds to the empirical radial quantile function over the previously selected fitting sample, and restrict attention to the exceedances $\{R>u_j\}$. As $f_j$ increases, the threshold $u_j$ decreases and progressively less extreme observations are included.
This increases the number of available samples but may reduce the accuracy of the asymptotic approximation underlying the theorem, requiring a balance between statistical precision and approximation bias.
At each candidate threshold, we test
\begin{equation*}
    \mathcal H_0:\ R \perp \Theta \mid R>u_j\quad\text{against}\quad
    \mathcal H_1:\ R \not\!\perp \Theta \mid R>u_j,
\end{equation*}
using the radial--angular independence test of \citet{wan2019threshold}, which yields an associated averaged $p$-value $\overline p_j$.
This averaging is performed over repeated subsamples of the exceedances at threshold $u_j$ to reduce the variance of the permutation test.
We refer to~\autoref{sec:supp-exp-details} for the full fitting protocol and the averaging formula.
We then identify the point at which the sequence $(\overline p_j)_j$ moves from values compatible with the null hypothesis $\mathcal H_0$ of radial--angular independence to values providing evidence against it.
Applying this procedure separately to $P$, the real underlying distribution of our data, and $Q$, the trained model, yields onset estimates $\widehat u_P$ and $\widehat u_Q$.
We finally set $\widehat u=\max\{\widehat u_P,\widehat u_Q\}$, so that both distributions are in their respective extreme-tail regimes above the common threshold.

\paragraph{Model: produce radial tail fits $\widehat S_P$ and $\widehat S_Q$.}
Above the common threshold $\widehat u=\max\{\widehat u_P,\widehat u_Q\}$, the radial and angular components
are approximately decoupled.
Therefore, we are able to model the tail through the radius alone.
For each distribution $D\in\{P,Q\}$, we fit a Pareto tail to the exceedances of $R$ above $\widehat u$, with tail index estimated by the Hill estimator \citep{hill1975simple},
\begin{equation*}
    \widehat \alpha_D =\bigg[\frac{1}{N_D(\widehat u)}\sum_{R_i>\widehat u}\log\Big(\frac{R_i}{\widehat u}\Big)\bigg]^{-1},
\end{equation*}
where $N_D(\widehat u)$ counts the radii exceeding $\widehat u$.
For each $D\in\{P,Q\}$, the fitted radial survival function is
\[
    \widehat S_D(r) =
    \widehat\pi_D \left(\frac{r}{\widehat u}\right)^{-\widehat\alpha_D}, \qquad r\geq\widehat u,
\]
where $\widehat\pi_D=N_D(\widehat u)/n_D$ is the empirical exceedance probability and $n_D$ is the total fitting sample size.
The factor $\widehat\pi_D$ ensures that $\widehat S_D(r)$ estimates the unconditional probability of exceeding $r$.
The fitted survival functions $\widehat S_P$ and $\widehat S_Q$ are then used to construct the radial correction.

\subsection{Repair: apply radial correction map \texorpdfstring{$T_R$}{TR}.}
The previous steps provide a common tail threshold $\widehat u$ together with Pareto fits $\widehat S_P$ and $\widehat S_Q$ for the radial tails of the reference and generated distributions.
We define the correction by
\[
    T(x)=
    \begin{cases}
        c(\|x\|)\,x, \quad & \text{ for } \|x\|>\widehat u,    \\
        x,                 & \text{ for } \|x\|\leq\widehat u,
    \end{cases}
\]
where, for $r>\widehat u$,
\begin{equation*}
    c(r)=\exp\left(\frac{g(r)}{\widehat\alpha_P}\left[ \log \widehat S_P(r)-\log \widehat S_Q(r)\right]_+ \right).
\end{equation*}

The gate $g:[0,\infty)\to[0,1]$ gradually activates the
correction between $\widehat u$ and $\kappa\widehat u$ (with \(\kappa=1.25\) fixed throughout the paper).
\[
    g(r)=
    \begin{cases}
        0,               & \text{for } r\leq\widehat u,               \\
        3z(r)^2-2z(r)^3, & \text{for } \widehat u<r<\kappa\widehat u, \\
        1,               & \text{for } r\geq\kappa\widehat u,
    \end{cases}
\]
with
\[
    z(r) = \frac{r-\widehat u}{(\kappa-1)\widehat u}.
\]
Thus, the correction leaves the bulk unchanged, preserves directions, and expands radii only where the fitted generated survival is smaller than its reference counterpart.

Because radius and angle are approximately decoupled in the extreme selected regime, this transformation can correct the radial tail without altering the angular structure.
Its purpose is to compensate for tail undercoverage of the generator: when $\widehat S_Q(r)<\widehat S_P(r)$, the radius is pushed outward so that, in the fully active regime,
\begin{equation*}
    \widehat S_P\!\left(T_R(r)\right)
    =
    \widehat S_Q(r).
\end{equation*}
Hence, the correction restores the appropriate frequency and magnitude of extreme events through radial tail matching, while preserving their directional structure.

\section{Theoretical justification}
\label{sec:theory}

From a theoretical perspective, we see that this methodology is grounded in two different levels.
At the \emph{population level}, the radial correction aligns the tail index of $Q$ with that of $P$ while preserving the spectral measure (cf.\ \Cref{prop:repair}).
At the \emph{statistical level}, we can see the consistency of the plug-in correction based on Pareto fits above a common threshold, with explicit rates highlighting the role of the threshold (\autoref{thm:consistency}).
Proofs are deferred to the \autoref{app:theory}.

Throughout, for $D\in\{P,Q\}$, we have that $R_D=\|X_D\|$ and $\Theta_D=X_D/\|X_D\|$ denote the radius and direction respectively under $D$.
Moreover, $S_D(r)=\mathbb P(R_D>r)$ is the radial survival function, $S^{-1}(p)=\inf\{y\geq0:\ S(y)\leq p\}$ the generalized inverse of a nonincreasing function $S$, and $T_R = S_P^{-1}\circ S_Q$ the population radial transport.

\subsection{Population level: radial repair under MRV}

We now give the MRV repair. We denote the pushforward $T_{\#}Q$ as the law of $T(X_Q)$ for $X_Q\sim Q$.

\begin{proposition}[MRV repair]\label{prop:repair}
    Let $P$ and $Q$ be multivariate regularly varying on $\R^d$ with tail indices
    $\alpha_P,\alpha_Q>0$ and spectral measures $H_P,H_Q$ (\autoref{th:MRV}) \ie, the limiting conditional distributions of $\Theta_D$ given $R_D>u$ as $u\to\infty$.
    Assume that $S_P$ and $S_Q$ are continuous and strictly decreasing on $[u_0,\infty)$ for some $u_0>0$. For $u\geq u_0$, define
    \begin{equation*}
        T(x)=
        \begin{cases}
            T_R(\|x\|)\,\dfrac{x}{\|x\|}, \quad & \text{ for } \|x\|>u,     \\[2mm]
            x,                                  & \text{ for } \|x\|\leq u,
        \end{cases}
    \end{equation*}
    and let $Q^T=T_{\#}Q$.
    Then:
    \begin{enumerate}[leftmargin=1.1em,label=\arabic*.,itemsep=0.1em,topsep=0.1em]
        \item[(i)] \emph{Radial tail matching:} $\mathbb P(\|T(X_Q)\|>r)=S_P(r)$ for every
              $r\geq\max\{u,T_R(u)\}$;
        \item[(ii)] \emph{Angular preservation:} $T(X_Q)/\|T(X_Q)\|=\Theta_Q$
              almost surely;
        \item[(iii)] \emph{MRV repair:} $Q^T$ is multivariate regularly varying with tail
              index $\alpha_P$ and spectral measure $H_Q$, \ie, as $v\to\infty$,
              \begin{equation*}
                  \mathcal L\left(\frac{R_{Q^T}}{v},\Theta_{Q^T}\middle|R_{Q^T}>v\right)
                  \stackrel{\mathcal{L}}{\longrightarrow}
                  \operatorname{Pareto}(\alpha_P)\otimes H_Q.
              \end{equation*}
    \end{enumerate}
    In particular, if $H_Q=H_P$, then $Q^T$ and $P$ have the same asymptotic polar law.
\end{proposition}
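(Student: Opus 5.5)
The proof works directly with the polar decomposition. Write $X_Q = R_Q\Theta_Q$ on $\{R_Q>0\}$. By construction $T(X_Q) = T_R(R_Q)\,\Theta_Q$ on $\{R_Q>u\}$ and $T(X_Q)=X_Q$ otherwise. Since $S_Q$ and $S_P$ are continuous and strictly decreasing on $[u_0,\infty)$, the map $T_R=S_P^{-1}\circ S_Q$ is continuous and strictly increasing on $[u_0,\infty)$ with $T_R(r)>0$. This gives (ii) immediately: on $\{R_Q>u\}$ we have $T(X_Q)/\|T(X_Q)\| = \Theta_Q$ because $T_R(R_Q)>0$, and on $\{R_Q\le u\}$ the point is unchanged.

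For (i), fix $r\ge\max\{u,T_R(u)\}$ and split $\{\|T(X_Q)\|>r\}$ according to whether $R_Q\le u$ or $R_Q>u$.
\begin{itemize}
\item On $\{R_Q\le u\}$ we have $\|T(X_Q)\|=R_Q\le u\le r$, so this part contributes nothing.
\item On $\{R_Q>u\}$ the event is $\{T_R(R_Q)>r\}$. Because $T_R$ is strictly increasing and continuous, and $r\ge T_R(u)$, this event equals $\{R_Q>T_R^{-1}(r)\}$ with $T_R^{-1}(r)=S_Q^{-1}(S_P(r))\ge u$.
\end{itemize}
Hence $\mathbb P(\|T(X_Q)\|>r)=S_Q(S_Q^{-1}(S_P(r)))=S_P(r)$. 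The last equality uses continuity and strict monotonicity of $S_Q$, which makes the generalized inverse a true inverse on the range. One needs $S_P(r)$ to lie in $S_Q([u_0,\infty))=(0,S_Q(u_0)]$. This holds since $S_P(r)\le S_P(T_R(u))=S_Q(u)$.

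For (iii), $R_{Q^T}=\|T(X_Q)\|$ and $\Theta_{Q^T}=\Theta_Q$ a.s.
\begin{itemize}
\item \emph{Radial part.} By (i), for $v$ large, $\mathbb P(R_{Q^T}>tv\mid R_{Q^T}>v)=S_P(tv)/S_P(v)\to t^{-\alpha_P}$, by regular variation of $S_P$ with index $-\alpha_P$ (a consequence of MRV of $P$, \autoref{th:MRV}).
\item \emph{Joint statement.} For $t\ge1$ and a continuity set $A\subset\mathbb S^{d-1}$ of $H_Q$, compute
\[
\mathbb P(R_{Q^T}>tv,\Theta_Q\in A)/\mathbb P(R_{Q^T}>v).
\]
For $v\ge \max\{u,T_R(u)\}$ the event $\{R_{Q^T}>tv\}$ equals $\{R_Q>w_t\}$ with $w_t:=S_Q^{-1}(S_P(tv))\to\infty$. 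So the numerator is $\mathbb P(R_Q>w_t,\Theta_Q\in A)=S_Q(w_t)\,\mathbb P(\Theta_Q\in A\mid R_Q>w_t)$. Since $S_Q(w_t)=S_P(tv)$, the ratio becomes
\[
\frac{S_P(tv)}{S_P(v)}\,\mathbb P(\Theta_Q\in A\mid R_Q>w_t)\;\longrightarrow\; t^{-\alpha_P}H_Q(A).
\]
The second factor tends to $H_Q(A)$ by the MRV of $Q$ in polar form, because $w_t\to\infty$.
\end{itemize}
Products of sets $(t,\infty)\times A$ form a convergence-determining class on $[1,\infty)\times\mathbb S^{d-1}$ for a product limit with continuous first marginal. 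This yields convergence of the conditional law to $\operatorname{Pareto}(\alpha_P)\otimes H_Q$. The final assertion follows by comparing with the polar limit of $P$ from \autoref{th:MRV}.

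The main obstacle is the careful handling of generalized inverses in (i)–(iii), in particular checking $w_t\to\infty$ and the range condition. This is where continuity and strict decrease on $[u_0,\infty)$ enter. Note that $w_t\to\infty$ holds because $S_P(tv)\to0$ and $S_Q>0$ is strictly decreasing, so $S_Q^{-1}(p)\to\infty$ as $p\downarrow0$.
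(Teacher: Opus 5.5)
Your proof is correct and follows the same route as the paper's: the monotone transport $T_R$ matches survival functions for (i), rays are preserved exactly for (ii), and (iii) combines these with the angular limit of $Q$. It is in fact more rigorous than the paper's sketch, since your identity $\{R_{Q^T}>tv\}=\{R_Q>w_t\}$ with $w_t\to\infty$ makes precise the paper's bare appeal to ``asymptotic radial--angular independence under $Q$''.

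There is one harmless slip. Because $S_P$ is unconstrained below $u_0$, $T_R$ is only guaranteed to be continuous and strictly increasing on the part of $[u_0,\infty)$ where $T_R>u_0$, and in general only $S_P(T_R(u))\leq S_Q(u)$ holds, not equality. Nothing breaks, however: only the events $\{T_R(R_Q)>r\}$ with $r\geq u\geq u_0$ enter your argument, and the inequality is all your range condition needs.
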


\begin{proof}
    The proof is deferred to \autoref{app:repair_proof}.
\end{proof}

\autoref{prop:repair} clarifies what the correction can reach: it replaces $\alpha_Q$ by $\alpha_P$, while preserving the spectral measure $H_Q$ learned by the generator.
Any discrepancy between $H_Q$ and the target spectral measure $H_P$ therefore remains unchanged.
It also explains why the \textbf{Detect} step is needed: (iii) is asymptotic, and below the onset of the MRV regime the radius may depend on the direction, in which case rescaling radii along fixed rays would distort the joint law of $(R_Q,\Theta_Q)$.

\subsection{Statistical level: consistency of the estimated correction}

In practice, $T_R$ is replaced by a plug-in estimate.
For each $D\in\{P,Q\}$, let $R_1^D,\dots,R_{n_D}^D$ be i.i.d.\ copies of the radius $R_D=\|X_D\|$, where $X_D\sim D$ and $n_D=n_D(n)$ denotes the sample size indexed by $n$.
Given a common threshold $u_n$, with all asymptotic statements understood as $n\to\infty$, let $N_D=\sum_i\mathbf 1_{\{R^D_i>u_n\}}$ be the number of exceedances, $\widehat\pi_D=N_D/n_D$, $\widehat\alpha_D$ the Hill estimator above $u_n$, and $\widehat S_D(r)=\widehat\pi_D(r/u_n)^{-\widehat\alpha_D}$, $r\geq u_n$, the fitted Pareto tails.
The estimated transport is defined for $r\geq u_n$ by
\begin{equation*}
    \widehat T_R(r) = \widehat S_P^{-1}\big(\widehat S_Q(r)\big) = u_n\left(\frac{\widehat\pi_P}{\widehat\pi_Q}\right)^{1/\widehat\alpha_P} \left(\frac{r}{u_n}\right)^{\widehat\alpha_Q/\widehat\alpha_P},
\end{equation*}
which corresponds to the empirical counterpart of the explicit Pareto map of \autoref{subsec:model}.
We measure its error on the logarithmic scale, which is the natural one for scale-free tails: $\Delta_n(r)=\log(\widehat T_R(r))-\log (T_R(r))$.

We assume that the radial tails are asymptotically Pareto with a polynomial rate (second-order condition of Hall type):
\begin{enumerate}[leftmargin=1.4em,label=\arabic*.,itemsep=0.4em,topsep=0.3em]
    \item[(H)] For $D\in\{P,Q\}$,
          $S_D(r)=c_D\,r^{-\alpha_D}\big(1+\eta_D(r)\big)$ with $c_D>0$ and
          $|\eta_D(r)|\leq Kr^{-\rho_D}$ for $r\geq r_0$, for some $K,\rho_D>0$.
\end{enumerate}
This holds, for instance, for Student radii ($\rho_D=2$) and Fréchet radii ($\rho_D=\alpha_D$).
The quantity $m_D=n_D\,S_D(u_n)$ is the expected number of exceedances of $u_n$ under $D$.

Under this assumption, the following theorem provides a uniform error bound for the estimated radial correction and establishes its relative consistency over an expanding range of radii.

\begin{theorem}[Consistency of the estimated radial correction]
    \label{thm:consistency}
    Suppose that (H) holds. Moreover, assume that the threshold $u_n$ is deterministic with $u_n\to\infty$, and that $m_P,m_Q\to\infty$. Let $\rho=\min(\rho_P,\rho_Q)$, $\rho_\star=\min\{\rho_Q,\ \rho_P\min(1,\alpha_Q/\alpha_P)\}$, and let $r_n\geq u_n$. Then,
    \begin{multline}\label{eq:rate}
        \sup_{u_n\leq r\leq r_n}|\Delta_n(r)| =\mathrm O_P\Big(\big(m_P^{-1/2}+m_Q^{-1/2}+u_n^{-\rho}\big)\\\times(1+\log r_n)\Big)+\mathrm O\big(u_n^{-\rho_\star}\big).
    \end{multline}
    Consequently, $\sup_{u_n\leq r\leq r_n}\big|\widehat T_R(r)/T_R(r)-1\big|\overset{P}{\longrightarrow}0$ as soon as $(m_P^{-1/2}+m_Q^{-1/2}+u_n^{-\rho})\log r_n\to0$.
\end{theorem}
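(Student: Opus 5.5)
Our plan is to write both $\log\widehat T_R(r)$ and $\log T_R(r)$ in the same affine-in-$\log r$ form and compare them term by term. From the explicit Pareto map,
\[
\log\widehat T_R(r)=\log u_n+\frac{1}{\widehat\alpha_P}\log\frac{\widehat\pi_P}{\widehat\pi_Q}+\frac{\widehat\alpha_Q}{\widehat\alpha_P}\log\frac{r}{u_n}.
\]
For the population map we define the idealized Pareto surrogate $\widetilde T_R(r)=u_n(\pi_P/\pi_Q)^{1/\alpha_P}(r/u_n)^{\alpha_Q/\alpha_P}$, where $\pi_D=S_D(u_n)$. We then split
\[
\Delta_n(r)=\big[\log\widehat T_R(r)-\log\widetilde T_R(r)\big]+\big[\log\widetilde T_R(r)-\log T_R(r)\big].
\]
The first bracket will produce the $\mathrm O_P$ term and the second the deterministic $\mathrm O(u_n^{-\rho_\star})$ term.

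\textbf{Stochastic part.} We use three standard estimates.
\begin{itemize}
\item Since $N_D\sim\mathrm{Bin}(n_D,\pi_D)$ and $m_D=n_D\pi_D\to\infty$, Chebyshev gives $\widehat\pi_D/\pi_D=1+\mathrm O_P(m_D^{-1/2})$, hence $\log(\widehat\pi_D/\pi_D)=\mathrm O_P(m_D^{-1/2})$.
\item Conditionally on $N_D$, the exceedances $\log(R_i/u_n)$ are i.i.d.\ with law given by $S_D(u_ne^t)/S_D(u_n)$. Under (H) this equals $e^{-\alpha_D t}(1+\eta_D(u_ne^t))/(1+\eta_D(u_n))$, so its mean is $1/\alpha_D+\mathrm O(u_n^{-\rho_D})$. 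Its variance is bounded, because its tail is dominated by an exponential tail with rate close to $\alpha_D$.
\item The CLT/Chebyshev conditional on $N_D$, combined with $N_D/m_D\to1$, then gives $1/\widehat\alpha_D-1/\alpha_D=\mathrm O_P(m_D^{-1/2}+u_n^{-\rho})$, and the delta method transfers this bound to $\widehat\alpha_D$ itself.
\end{itemize}
Substituting into the display, the intercept error is $\mathrm O_P(m_P^{-1/2}+m_Q^{-1/2}+u_n^{-\rho})$. Here $\log(\pi_P/\pi_Q)=\mathrm O(\log u_n)$ multiplies $1/\widehat\alpha_P-1/\alpha_P$. The slope error $\widehat\alpha_Q/\widehat\alpha_P-\alpha_Q/\alpha_P$ is of the same order and is multiplied by $\log(r/u_n)\le\log r_n$. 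Taking the supremum over $r\in[u_n,r_n]$ is then trivial, since the dependence on $r$ is only through this linear factor.

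The intercept term involving $\log(\pi_P/\pi_Q)$ deserves care. Naively it contributes an extra $\log u_n$ factor. We plan to absorb it by rewriting $\log\widehat T_R(r)$ around $\log u_n$ so that only $\log(r/u_n)$ and $\log(\widehat\pi_P/\widehat\pi_Q)$ appear, and to note $\log u_n\le\log r_n$. That accounts for the $(1+\log r_n)$ factor.

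\textbf{Deterministic bias.} We compute $T_R(r)=S_P^{-1}(S_Q(r))$ exactly under (H).
\begin{itemize}
\item For $r\ge u_n$, set $y=T_R(r)$, which solves $c_Py^{-\alpha_P}(1+\eta_P(y))=c_Qr^{-\alpha_Q}(1+\eta_Q(r))$.
\item Taking logarithms gives $\log y=\alpha_P^{-1}[\log(c_P/c_Q)+\alpha_Q\log r+\log(1+\eta_P(y))-\log(1+\eta_Q(r))]$.
\item The idealized map satisfies the analogous identity with the $\eta$ terms evaluated at $u_n$ instead of $y$ and $r$.
\item The difference is therefore bounded by $|\eta_Q(r)|+|\eta_Q(u_n)|+|\eta_P(y)|+|\eta_P(T_R(u_n))|$, up to constants.
\end{itemize}
Now $|\eta_Q|\lesssim u_n^{-\rho_Q}$ on $[u_n,\infty)$. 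Since $y\ge T_R(u_n)\asymp u_n^{\alpha_Q/\alpha_P}$, we get $|\eta_P(y)|\lesssim u_n^{-\rho_P\alpha_Q/\alpha_P}$. When $T_R(u_n)$ may be smaller than $u_n$, we also need to keep $y$ at least of order $u_n^{\min(1,\alpha_Q/\alpha_P)}$ and at least $r_0$, so that (H) applies. This yields exactly $\mathrm O(u_n^{-\rho_\star})$.

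\textbf{Main obstacle and conclusion.} The expected main obstacle is the Hill-estimator bias and variance control under (H) with a deterministic threshold and random $N_D$. This requires a conditioning argument and a uniform bound on the second moment of the conditional log-exceedance law. The consequence follows from $|e^{x}-1|\le 2|x|$ for small $x$, using that $u_n^{-\rho_\star}\to0$ since $u_n\to\infty$.
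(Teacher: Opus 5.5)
Your proposal is correct and follows essentially the same route as the paper. Both arguments compare with the oracle Pareto map $T^\circ_n$ (your $\widetilde T_R$). Both control $\widehat\pi_D$ and $\widehat\gamma_D$ by Chebyshev plus an $O(u_n^{-\rho})$ Hill bias with bounded second log-moment; you condition on $N_D$, while the paper applies Chebyshev to the unconditional average $\overline Y$, and the two are equivalent. Both bound $|L_n|=O(1+\log u_n)\le O(1+\log r_n)$, and both handle the deterministic term by taking logarithms in $S_P(y)=S_Q(r)$ under (H).

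One small correction. In the oracle identity the $P$-remainder is $\eta_P(u_n)$, not $\eta_P(T_R(u_n))$. This $O(u_n^{-\rho_P})$ term, combined with $|\eta_P(T_R(r))|=O(u_n^{-\rho_P\alpha_Q/\alpha_P})$, is what produces the factor $\min(1,\alpha_Q/\alpha_P)$ in $\rho_\star$. That factor does not come from keeping $y$ large, but your final $O(u_n^{-\rho_\star})$ rate is unaffected.
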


The previous bound separates three sources of error: the \emph{sampling} error $m_P^{-1/2}+m_Q^{-1/2}$ on the Pareto parameters, the \emph{Hill bias} $u_n^{-\rho}$, and the \emph{model misspecification} $u_n^{-\rho_\star}$ due to fitting an exact Pareto tail to an only asymptotically Pareto one.
The factor $(1+\log r_n)$ is the price of extrapolation: $\widehat T_R(r)$ evaluates the fitted quantile of $P$ at the level $\widehat S_Q(r)$, which may lie far beyond the observed exceedances, amplifying tail-index estimation errors through the logarithmic factor characteristic of Weissman extrapolation \citep{weissman1978estimation,deHaan2006extreme}.
When the tails are exactly Pareto above a fixed threshold $u$, the two bias terms vanish and the rate becomes parametric, $O_P\big(n^{-1/2}(1+\log(r_n/u))\big)$ with $n=\min(n_P,n_Q)$ (\Cref{prop:pareto} in \autoref{app:theory}).

From this analysis, we make the following observations:
\begin{enumerate}[leftmargin=1.4em,label=\arabic*.,itemsep=0.4em,topsep=0.3em]
    \item \emph{Threshold choice and the role of} \textbf{Detect}: The sampling term decreases when $u_n$ decreases (more exceedances), whereas both bias terms decrease when $u_n$ increases: this is the classical bias--variance trade-off of extreme-value statistics. If $n_P\asymp n_Q\asymp n$, then $m_D\asymp n u_n^{-\alpha_D}$, and balancing the two contributions with $\bar\alpha=\max(\alpha_P,\alpha_Q)$ gives $u_n\asymp n^{1/(\bar\alpha+2\rho_\star)}$ and the Hall-type rate $O_P\!\left(n^{-\rho_\star/(\bar\alpha+2\rho_\star)}(1+\log r_n)\right)$. Since $\rho_\star$ is unknown, the threshold is instead chosen by \textbf{Detect}, which targets the lowest level above which both $P$ and $Q$ are in their MRV regime.

    \item \emph{From the estimated transport to the implemented map}: On the region where the weight $g$ equals one, the map of the \textbf{Repair} step is $r\mapsto c(r)\,r=\max\{r,\widehat T_R(r)\}$: indeed $\log r+\widehat\alpha_P^{-1}\big(\log\widehat S_P(r)-\log\widehat S_Q(r)\big) =\log\widehat T_R(r)$, and the positive part activates the correction only when $\widehat S_Q(r)<\widehat S_P(r)$, \ie, when $\widehat T_R(r)>r$. Since $x\mapsto\max\{\log r,x\}$ is $1$-Lipschitz, \autoref{thm:consistency} transfers verbatim to the implemented map, with target $\max\{r,T_R(r)\}$, which coincides with $T_R$ in the undercoverage regime targeted by the method.

    \item\label{rem:threshold} \emph{Data-driven threshold}: \autoref{thm:consistency} is stated for a deterministic threshold. If $\widehat u$ is selected on an independent subsample (sample splitting), it holds conditionally on $\widehat u$, provided $m_D\to\infty$ along the selected thresholds with probability tending to one. A guarantee for the threshold produced by the radial--angular scan itself, whose successive tests are strongly dependent, is left for future work.
\end{enumerate}

\section{Experiments}

We evaluate Radial Tail Repair (RTR) on synthetic heavy-tailed and real-world datasets, comparing frozen generators before and after correction against heavy-tailed diffusion baselines.
Experiments are implemented in PyTorch and run on NVIDIA V100 GPUs.

\subsection{Frozen reference models}

For each dataset, we first train the generative baselines independently and freeze their parameters before applying RTR.
We consider DDPM~\citep{ho2020denoising}, DDIM~\citep{song2021denoising}, DLPM~\citep{shariatian2025dlpm}, and \(t\)-EDM~\citep{pandey2025heavytailed}, with DDIM reusing the corresponding DDPM checkpoint rather than being trained separately.
Full details on model architectures, training procedures, sampling configurations, and hyperparameter selection are provided in~\autoref{sec:supp-exp-details}.

\subsection{Experimental settings}

\paragraph{Datasets.}
We evaluate on two synthetic and three real-data benchmarks (\autoref{tab:experiment-settings}).
The synthetic datasets are a \(150\)-dimensional isotropic symmetric \(\alpha\)-stable distribution with \(\alpha=1.7\), and an imbalanced mixture of \(16\) stable modes with the same \(\alpha\) and dimension.
HRRR east and west contain hourly precipitation measurements at \(150\) cities per region, derived from the High-Resolution Rapid Refresh (HRRR) system~\citep{dowell2022hrrr}.
The wildfire dataset contains positive final fire sizes, in acres, from the USDA Fire Program Analysis Fire-Occurrence Database (FPA-FOD)~\citep{short2026fpa}, spanning 1992--2024.
Real data are randomly split into training, validation, and held-out evaluation sets.

\begin{table}[t]
    \centering
    \small
    \resizebox{0.8\linewidth}{!}{%
        \begin{tabular}{@{}lrrrr@{}}
            \toprule
            Dataset     & Dim. & Train/Val. & Eval. & \(p_{\min}\) \\
            \midrule
            Iso. stable & 150 & 25k/1k & 5M  & \(10^{-5}\) \\
            Stable mix. & 150 & 25k/1k & 5M  & \(10^{-5}\) \\
            HRRR east   & 150 & 25k/1k & 65k & \(10^{-3}\) \\
            HRRR west   & 150 & 25k/1k & 65k & \(10^{-3}\) \\
            Wildfire    & 1   & 25k/1k & 1M  & \(10^{-4}\) \\
            \bottomrule
        \end{tabular}%
    }
    \caption{Benchmark datasets and evaluation budgets. Tail probabilities are evaluated at \(21\) log-spaced points from \(10^{-1}\) to \(p_{\min}\).}
    \label{tab:experiment-settings}
\end{table}

\paragraph{Protocol.}
We compare DDPM, DDIM, DLPM, and \(t\)-EDM, together with zero-network DLPM and \(t\)-EDM controls that isolate tail behavior induced by the prior and sampler alone.
For each corrected model, RTR is applied to exactly the same generated samples as the uncorrected baseline, isolating the effect of the post-training correction.
Synthetic evaluation uses fresh reference draws, while real-data evaluation uses held-out samples.
A test--test baseline compares two disjoint halves of the reference evaluation sample.

\paragraph{RTR fitting.}
RTR fits the reference tail from \(5\mathrm{k}\) samples and the generated tail from \(5\mathrm{k}\) DDPM or DDIM samples.
We scan \(30\) candidate tail fractions from roughly \(6\%\) to \(30\%\), requiring at least \(N_{\min}=300\) exceedances.
The onset is selected using a radial--angular independence diagnostic based on the mean of repeated permutation-test \(p\)-values, with cutoff \(0.4\), after which a Hill/Pareto exponent is fitted.
The smooth gate then applies positive radial expansion with \(\kappa = 1.25\).

\paragraph{Metrics.}
In the experiments, we assess tail coverage with TCE and global distributional agreement with MMD.
Let \(X=\{x_i\}_{i=1}^{N_P}\) and \(Y=\{y_i\}_{i=1}^{N_Q}\) denote the reference and generated evaluation samples.
We evaluate tail coverage over \(21\) log-spaced probabilities \(p\) from \(10^{-1}\) to a dataset-specific \(p_{\min}\).
For coordinate \(j\), let \(t_j(p)\) denote the reference \((1-p)\)-quantile of \(F_j(x)\), where \(F_j(x)=|x_j|\) for synthetic data and \(F_j(x)=x_j\) for tabular data. We define
\begin{align}
    \widehat S_{D,j}(p) &= \frac{1}{|D|}\sum_{z\in D}\mathbf 1\{F_j(z)>t_j(p)\}, \quad D\in\{X,Y\}, \\
    \mathrm{TCE}(p) &= \frac{1}{d}\sum_{j=1}^d \left|\log\frac{\widehat S_{Y,j}(p)+\varepsilon}{\widehat S_{X,j}(p)+\varepsilon}\right|.
\end{align}
Here, \(\varepsilon=10^{-12}\), and lower TCE indicates better tail coverage.
We additionally report radial and maximum-event variants of TCE, their integrated areas, the signed maximum-event log-mass ratio, and RBF-MMD\(^{2}\).
MMD uses at most \(1\mathrm{k}\) samples for synthetic and wildfire data and \(512\) for HRRR.
Full definitions are given in~\autoref{sec:supp-metrics}.

\subsection{Results: Synthetic data}

The isotropic $\alpha$-stable case isolates heavy-tail behavior, while the 150-dimensional imbalanced stable mixture adds multimodal structure.
For each case, the curve resolves coordinate TCE across the quantile grid and the area barplot summarizes it by one value.
The MMD$^2$ plots provide a bulk-sensitive complement to these tail-focused metrics.

\begin{figure}[t]
    \centering
    \captionsetup[subfigure]{labelformat=empty,justification=centering}
    \begin{subfigure}[t]{0.5\linewidth}
        \centering
        \includegraphics[width=\linewidth]{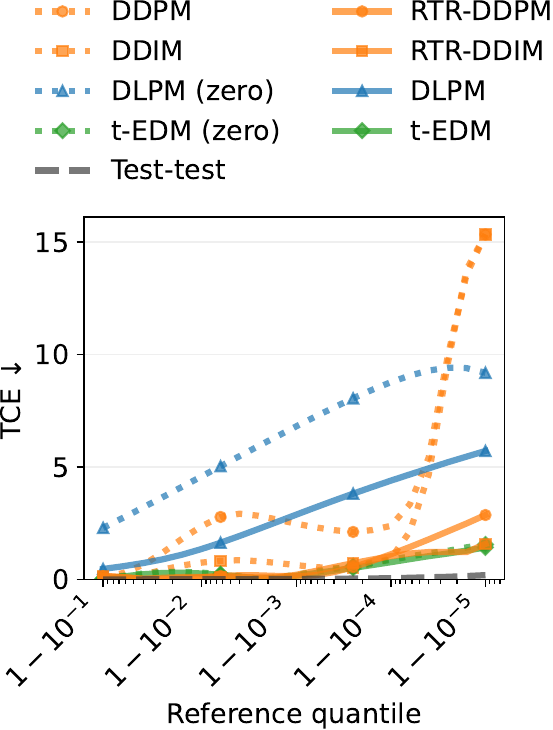}
        \caption{Isotropic $\alpha$-stable.}
        \label{fig:iso-tce}
    \end{subfigure}\hfill
    \begin{subfigure}[t]{0.48\linewidth}
        \centering
        \includegraphics[width=\linewidth]{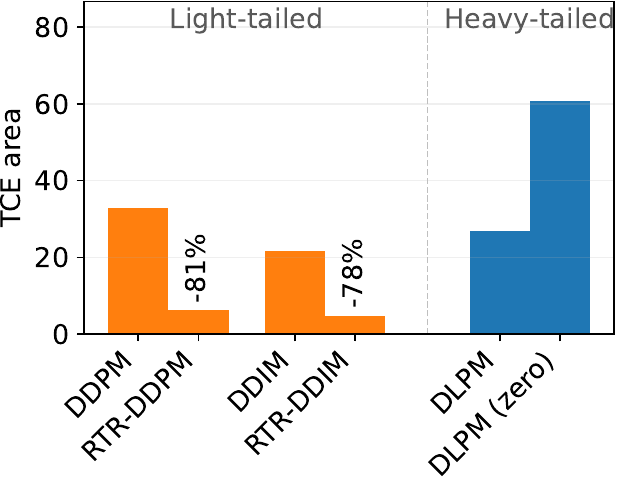}
        \caption{Isotropic $\alpha$-stable.}
        \label{fig:iso-area}
    \end{subfigure}
    \caption{Coordinate tail-coverage results for the 150-dimensional isotropic $\alpha$-stable distribution; lower TCE is better.}
    \label{fig:iso-results}
\end{figure}

\begin{figure}[t]
    \centering
    \captionsetup[subfigure]{labelformat=empty,justification=centering}
    \begin{subfigure}[t]{0.5\linewidth}
        \centering
        \includegraphics[width=\linewidth,trim={0mm 0mm 0mm 35mm},clip]{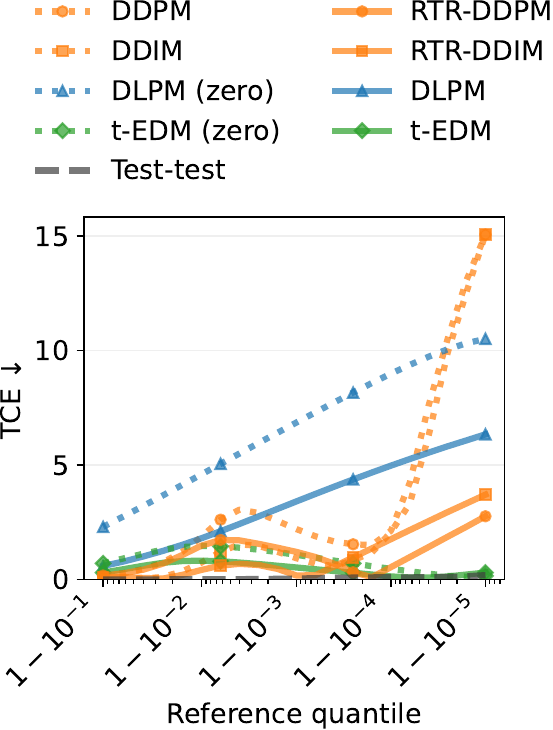}
        \caption{Stable mixture.}
        \label{fig:mixture-tce}
    \end{subfigure}\hfill
    \begin{subfigure}[t]{0.48\linewidth}
        \centering
        \includegraphics[width=\linewidth]{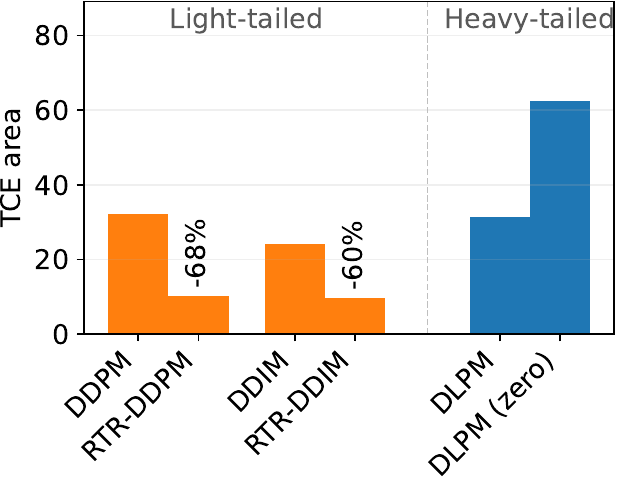}
        \caption{Stable mixture.}
        \label{fig:mixture-area}
    \end{subfigure}
    \caption{Coordinate tail-coverage results for the 150-dimensional imbalanced $\alpha$-stable mixture; lower TCE is better.}
    \label{fig:mixture-results}
\end{figure}

As shown in~\autoref{fig:iso-results} and~\autoref{fig:mixture-results}, the light-tailed DDPM/DDIM baselines remain reasonably accurate up to about the \(1-10^{-4}\) quantile, but increasingly underrepresent the far tail beyond that point.
DLPM departs from its zero-network control, indicating that it learns some tail structure, although its tail coverage remains inaccurate.
By contrast, trained \(t\)-EDM is nearly indistinguishable from its zero-network counterpart in the far tail, suggesting that little tail structure is learned there and that the apparent coverage mainly comes from the heavy-tailed prior and sampler.
RTR substantially improves the tail adequacy of the light-tailed models, showing that a post-training correction can recover much of the missing extreme-tail mass without changing the learned generator.

Because trained \(t\)-EDM is essentially indistinguishable from its zero-network control, we omit it from the barplots.
This lack of learned structure is itself disqualifying for a generative model, even if TCE and MMD may partly obscure it by focusing on tail mass and global discrepancy.

\begin{figure}[t]
    \centering
    \captionsetup[subfigure]{labelformat=empty,justification=centering}
    \begin{subfigure}[t]{0.46\linewidth}
        \centering
        \includegraphics[width=\linewidth]{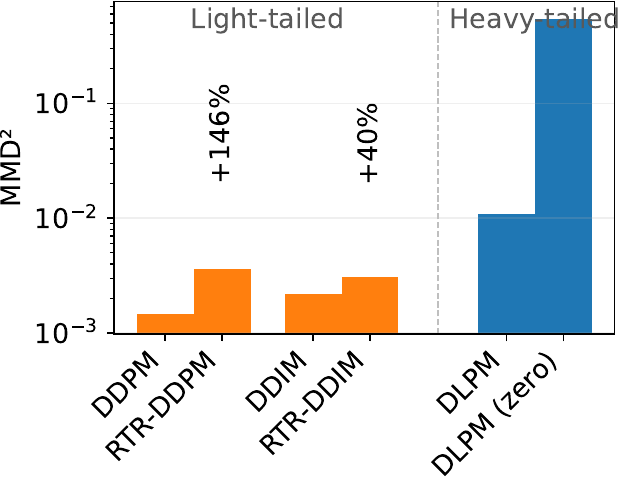}
        \caption{Isotropic $\alpha$-stable.}
        \label{fig:iso-mmd}
    \end{subfigure}\hfill
    \begin{subfigure}[t]{0.46\linewidth}
        \centering
        \includegraphics[width=\linewidth]{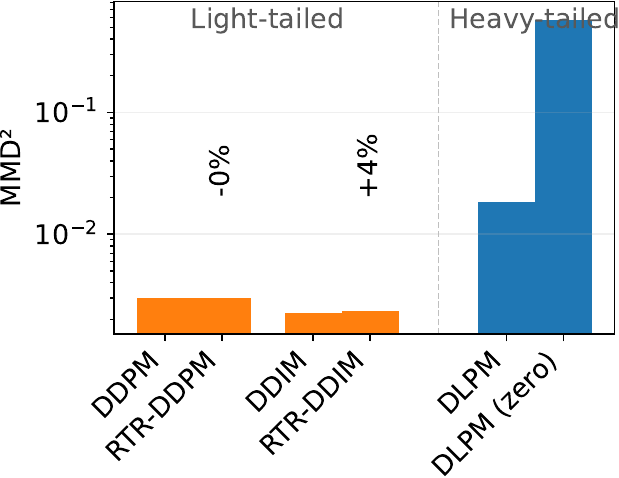}
        \caption{Stable mixture.}
        \label{fig:mixture-mmd}
    \end{subfigure}
    \caption{MMD$^2$ values on the synthetic datasets; lower values indicate closer distributional agreement.}
    \label{fig:synthetic-mmd}
\end{figure}

As shown in \autoref{fig:synthetic-mmd}, RTR introduces only a small absolute change in MMD compared with the much larger discrepancy of DLPM, suggesting that the bulk distribution is largely preserved.
The remaining MMD shift is consistent with the correction of extreme radii: although only tail samples are modified, their displacement can still affect this global metric.
We discuss this point further in \autoref{sec:discussion}.

\subsection{Results: Real data}

We next turn to real data to assess whether the behavior observed in the synthetic benchmarks extends to more realistic settings.

\begin{figure}[t]
    \centering
    \captionsetup[subfigure]{labelformat=empty,justification=centering}
    \includegraphics[width=0.49\linewidth,trim={0mm 90mm 0mm 0mm},clip]{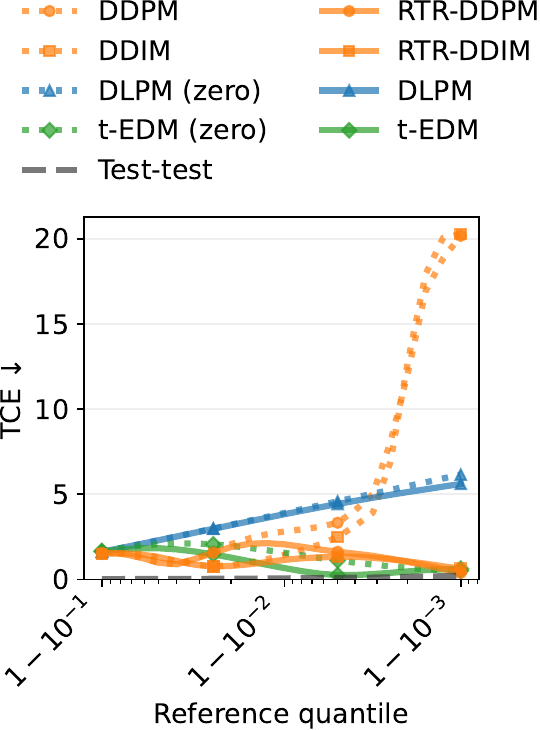}
    \par\medskip
    \begin{subfigure}[t]{0.49\linewidth}
        \centering
        \includegraphics[width=\linewidth,trim={0mm 0mm 0mm 35mm},clip]{figures/23__hrrr_east_hourly_precipitation_150d__coordinate_tce__curve.pdf}
        \caption{HRRR east precipitation.}
        \label{fig:hrrr-east-tce}
    \end{subfigure}\hfill
    \begin{subfigure}[t]{0.49\linewidth}
        \centering
        \includegraphics[width=\linewidth,trim={0mm 0mm 0mm 35mm},clip]{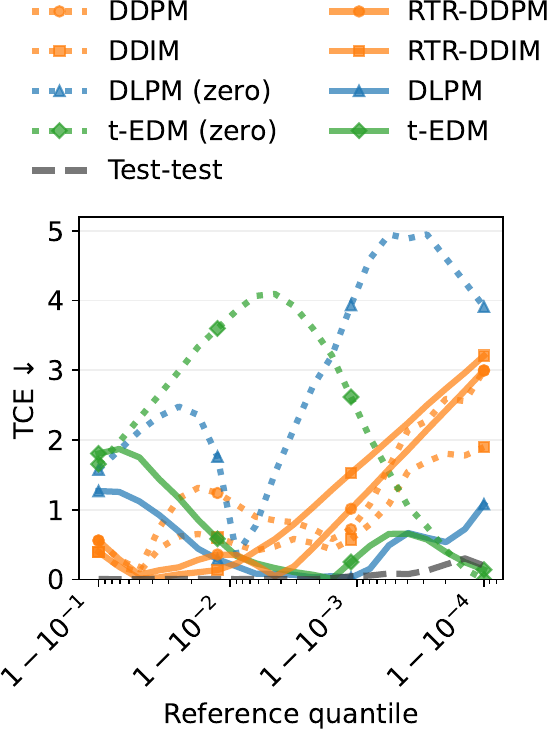}
        \caption{US wildfire size.}
        \label{fig:wildfire-tce}
    \end{subfigure}
    \caption{Coordinate tail-coverage error on real datasets.}
    \label{fig:real-tce}
\end{figure}

As shown in~\autoref{fig:real-tce}, HRRR exhibits the same qualitative behavior as the synthetic benchmarks: the heavy-tailed models fail to learn reliable tail structure, with DLPM now also remaining close to its zero-network control, while RTR substantially corrects the tail of the light-tailed models.
The wildfire case is particularly informative. In one dimension with the same \(25\mathrm{k}\) training samples, both DLPM and \(t\)-EDM clearly depart from their zero-network controls and recover meaningful tail behavior.
This suggests that their previous failures are driven less by model design than by finite-sample estimation in higher dimension, consistent with~\citet{cherkaoui2026heavytails}.
Conversely, wildfire exposes the approximation error of RTR: fitting an asymptotic Pareto tail can lead to residual misspecification, including far-tail overcorrection (see~\autoref{thm:consistency}).

Overall, RTR improves far-tail coverage across synthetic and real sparse-tail settings while largely preserving the bulk, with its extreme-value inductive bias enabling effective correction from scarce tail samples.
Additional results and diagnostics are provided in~\autoref{sec:supp-exp-additional}.

\section{Discussion}
\label{sec:discussion}

The experiments clarify both where RTR helps and what a radial post-training correction can reach.

\paragraph{Finite-sample learning of the tail.}
Our results reproduce the initialization--training trade-off highlighted in~\citet{cherkaoui2026heavytails}: accurate tail reconstruction depends critically on the amount of informative tail data available for training.
In the one-dimensional wildfire setting, the same training budget provides substantially more information about the tail structure, and the trained model separates more clearly from its zero-network control.
In the higher-dimensional settings, this separation largely disappears.

\paragraph{Tail mass is not the same as learned structure.}
The \(t\)-EDM results also show that good tail coverage can arise largely from inductive bias.
In the far tail, the trained model closely tracks its zero-network counterpart, suggesting that the prior and sampler can produce well-calibrated tail mass while learning little of the underlying structure.
This highlights a limitation of TCE: it measures extreme probability mass, but not whether the corresponding structure has been learned.
A heavy-tailed prior may therefore score well on TCE while remaining a poor generative model.

\paragraph{Finite-sample correction remains an extreme-value estimation problem.}
\autoref{thm:consistency} highlights the main trade-off in RTR: lower thresholds provide more exceedances, while higher thresholds reduce Pareto approximation bias, and extrapolation amplifies tail-index error.
This reflects a central challenge of extreme value theory: inference targets rare events, for which finite samples necessarily provide limited information, requiring a balance between the accuracy of the asymptotic approximation and the number of observations available for estimation.
Threshold selection is therefore a critical component of the method. Our current \textbf{Detect} step selects it through a data-driven scan, but more advanced threshold-selection strategies could substantially improve the stability and accuracy of RTR.

\paragraph{Bulk preservation is difficult to assess with global metrics.}
By construction, RTR leaves samples below the selected onset unchanged, preserving the learned bulk at the sample level.
Standard global metrics do not directly certify this: RBF-MMD\(^{2}\) mixes bulk and tail discrepancies and may react strongly to moving a small number of extreme samples.
On the isotropic stable benchmark, RTR greatly improves tail coverage while increasing MMD\(^{2}\), whereas on HRRR large TCE gains come with almost no MMD\(^{2}\) change.
MMD\(^{2}\) is therefore a useful global sanity check, but not a direct measure of bulk preservation.

\section{Conclusion}
\label{sec:conclusion}

In this paper, we asked whether accurate tail coverage can be achieved through lightweight post-hoc correction without costly retraining.
We proposed \emph{Radial Tail Repair (RTR)}, a smoothly gated radial rescaling that corrects undercovered tails while preserving directions and leaving the bulk unchanged.
Under multivariate regular variation, we characterize the population repair and establish consistency of its estimate, with explicit rates highlighting threshold choice and extrapolation.
Across challenging synthetic distributions and real precipitation and wildfire data, RTR improves tail coverage from few correction samples, including when extremes are scarce.
This allows existing generators to be retained and their tails corrected \textit{a posteriori}, providing a complementary alternative to heavy-tailed generative models that may require substantially more tail data.

More broadly, our results suggest a simple division of labor between generative modeling and extreme value theory: \textbf{expressive models can learn the data-rich bulk, while principled extreme-value methods handle the tail, where observations are inherently scarce and purely data-driven learning becomes unreliable.}

%------------------------------------------------------------------------------------------------------------------
% STATEMENT & BIBLIOGRAPHY
\clearpage

\section*{Disclosure of Funding}
The work of Hamza Cherkaoui and H\'el\`ene Halconruy was supported by the French National Research Agency (ANR) under grant ANR-24-CE40-3341 (project DECATTLON).
The work of Antonio Ocello was supported by Hi! PARIS and received government funding managed by the Agence Nationale de la Recherche under the France 2030 program, references (ANR-23-IACL-0005) and it was partially supported by the Labex Ecodec (reference project ANR-11-LABEX-0047).
This work was granted access to the HPC resources of IDRIS under the allocation 2026-AD011016818R1, provided by GENCI.

% \section*{AI use statement}
% In this work, we did use generative AI tools for tasks requiring disclosure.
% %
% Generative AI tools were \textbf{not} used to develop theoretical or conceptual contributions, formulate mathematical claims or proofs, design the methodology or experiments, process datasets, or interpret the results.
% %
% We used generative AI tools only for \textbf{drafting and editing parts of the manuscript to improve readability, and code proofreading}.
% %
% We have reviewed all AI-assisted work.
% %
% \textbf{All AI-assisted text and code edits were reviewed by the authors}.
% %
% We take responsibility for the final content of this work, including text, claims, or artifacts produced with the aid of generative AI.

\bibliographystyle{apalike}
\bibliography{4_bibliography}

% \input{3_checks}

%------------------------------------------------------------------------------------------------------------------
% APPENDIX
\clearpage
\appendix
\thispagestyle{empty}
\onecolumn
\aistatstitle{Radial Tail Repair: Post-Training Correction for Diffusion and Flow Models: Supplementary Materials}

\section*{Supplementary Contents}

\begin{enumerate}[align=left,itemsep=0.5cm,label=\Alph*.]
  \item \autoref{sec:supp-notations}: notation \dotfill (p.~\pageref{sec:supp-notations}).
  \item \autoref{app:theory}: theoretical analysis \dotfill (p.~\pageref{app:theory}).
  \item \autoref{sec:supp-exp-details}: experimental details \dotfill (p.~\pageref{sec:supp-exp-details}).
  \item \autoref{sec:supp-exp-additional}: additional experiments \dotfill (p.~\pageref{sec:supp-exp-additional}).
  \item \autoref{sec:supp-metrics}: evaluation metrics \dotfill (p.~\pageref{sec:supp-metrics}).
  \item \autoref{appendix-related-work}: additional related work \dotfill (p.~\pageref{appendix-related-work}).
\end{enumerate}

\section{Notation}
\label{sec:supp-notations}

For a frozen sampler $s$, write $Q_s$ for its endpoint law and omit the subscript when discussing one sampler.
Hats denote empirical estimates; fitting and evaluation samples are specified separately.

\begin{center}
  \small
  \begin{tabular}{@{}p{0.27\linewidth}p{0.66\linewidth}@{}}
    \toprule
    Symbol & Meaning \\
    \midrule
    $P,Q_s,Q_s^{\mathrm{RTR}}$ & Reference law, uncorrected endpoint law, and law after radial tail repair. \\
    $X_D,R_D,\Theta_D$ & A draw under $D\in\{P,Q_s\}$, its radius $R_D=\|X_D\|_2$, and its direction $\Theta_D=X_D/R_D$ when $R_D>0$. \\
    $\alpha_D,\gamma_D,H_D$ & MRV tail index, extreme-value index $\gamma_D=1/\alpha_D$, and spectral measure of $D$. \\
    $S_D,S_D^{-1}$ & Radial survival $S_D(r)=\mathbb P_D(R_D>r)$ and generalized inverse $S_D^{-1}(p)=\inf\{r\geq0:S_D(r)\leq p\}$. \\
    $\widehat u_D,u$ & Estimated tail onset for $D$ and common correction onset $u=\max(\widehat u_P,\widehat u_{Q_s})$. \\
    $f_j,\overline p_j$ & Candidate exceedance fraction and mean radial--angular independence-test $p$-value at that fraction. \\
    $n_D,N_D(u),\widehat\pi_D$ & Sample size in the current calculation, number of radii exceeding $u$, and fitted exceedance fraction $\widehat\pi_D=N_D(u)/n_D$. \\
    $\widehat\gamma_D,\widehat\alpha_D,\widehat S_D$ & Hill extreme-value index $\widehat\gamma_D$, fitted tail index $\widehat\alpha_D=1/\widehat\gamma_D$, and fitted radial Pareto survival. \\
    $T_R,\widehat T_R,T,g$ & Population radial transport $T_R=S_P^{-1}\!\circ S_{Q_s}$, fitted radial map, endpoint correction, and smooth gate. \\
    $a_t,\mu$ & MRV scaling sequence and its homogeneous limit measure on $\mathbb R^d\setminus\{0\}$. \\
    $\beta,\widehat\beta$ & Population and fitted Pareto transport exponents; $\beta=\alpha_{Q_s}/\alpha_P$ and $\widehat\beta=\widehat\alpha_{Q_s}/\widehat\alpha_P$. \\
    $c_D,\eta_D,\rho_D$ & Second-order tail model $S_D(r)=c_Dr^{-\alpha_D}(1+\eta_D(r))$, with $|\eta_D(r)|=O(r^{-\rho_D})$. \\
    $m_D,\mu_D,\nu_D$ & Expected exceedance count $m_D=n_DS_D(u)$ and the first two conditional moments of $\log(R_D/u)$ given $R_D>u$. \\
    $B_D,\delta_D$ & Hill bias $B_D=\mu_D-\gamma_D$ and log-relative exceedance-mass error $\delta_D=\log(\widehat\pi_D/S_D(u))$. \\
    $T_n^\circ,\Delta_n(r),r_n$ & Oracle Pareto map, log error $\Delta_n(r)=\log\widehat T_R(r)-\log T_R(r)$, and upper radius for a uniform error bound. \\
    $p,F,t_F,L_F$ & Evaluation tail probability, scalar statistic, reference quantile threshold, and signed generated-to-reference log exceedance ratio. \\
    $\mathrm{TCE}_{*},\mathcal A_E$ & Tail-coverage error curve and its area over reference-quantile logit. \\
    $\widehat{\mathrm{MMD}}_{\mathrm b}^2,\gamma$ & Biased squared RBF-MMD and its reference-derived kernel bandwidth; $\gamma_D$ separately denotes the extreme-value index. \\
    \bottomrule
  \end{tabular}
\end{center}

\section{Theoretical analysis}\label{app:theory}

The theoretical analysis can be organized around the three stages of the method: \emph{Detect}, \emph{Model}, and \emph{Repair}.
The first two rely on the multivariate regular variation (MRV) structure, while the third is justified by a radial transport argument.

\subsection{Background}\label{app-subsec:background}

Multivariate regular variation is the cornerstone of the whole method.
It is the standard framework for heavy-tailed random vectors: 
\begin{definition}
    $X\in\R^d$ is said to be multivariate regularly varying (MRV) with tail index $\alpha>0$ if there exist a
    sequence $a_t\to\infty$ and a nonzero Radon measure $\mu$ on $\R^d\setminus\{0\}$ such that
    \begin{equation*}
        t\,\mathbb P\left(\frac{X}{a_t}\in\cdot\right)\longrightarrow\mu(\cdot) \qquad\text{vaguely, as } t\to\infty,
    \end{equation*}
    where the limit measure is necessarily homogeneous of order $-\alpha$: $\mu(sA)=s^{-\alpha}\mu(A)$ for all $s>0$. 
\end{definition}
In polar coordinates, MRV means that far in the tail the law of $X$ splits into two asymptotically independent ingredients (\autoref{fig:polar}): a radial part, Pareto with index $\alpha$, which tells \emph{how heavy} the tail is, and an angular part, a probability measure on $\mathbb S^{d-1}$, which tells \emph{in which directions} extremes occur.
This decoupling is what allows the tail to be corrected by acting on the radius alone.

\begin{figure}[ht]
    \centering
    \begin{tikzpicture}[scale=1]
        \draw[gray] (0,0) circle (1);
        \node[gray] at (-0.55,-1.15) {$\mathbb S^{d-1}$};
        \fill (0,0) circle (1pt) node[below left] {$0$};
        % angular mass H
        \draw[line width=3pt, blue!60, opacity=.7] (20:1) arc (20:50:1);
        \draw[line width=3pt, blue!60, opacity=.7] (195:1) arc (195:220:1);
        % extreme points, sparser as the radius grows
        \foreach \a/\r in {28/1.35,42/1.5,33/1.7,24/1.95,46/2.2,37/2.6,30/3.1}
            \fill[black!70] (\a:\r) circle (1.2pt);
        \foreach \a/\r in {200/1.4,212/1.6,205/1.9,216/2.3,203/2.8}
            \fill[black!70] (\a:\r) circle (1.2pt);
        % one point and its polar decomposition
        \draw[->, thick] (0,0) -- (36:3.7) node[above right] {$X$};
        \fill[red] (36:1) circle (1.8pt);
        \node[red] at (60:1.25) {$\Theta$};
        \draw[decorate, decoration={brace, amplitude=5pt, mirror}]
            (36:0.05) ++(-54:0.25) -- ++(36:3.6)
            node[midway, below right=3pt] {$R=\|X\|$};
        % legend
        \node[align=left, anchor=west] at (4.3,1.0)
            {$R/u\mid R>u\ \approx\ \operatorname{Pareto}(\alpha)$\\[1pt]
             \quad\emph{how heavy}};
        \node[align=left, anchor=west, blue!70!black] at (4.3,-0.4)
            {$\Theta\mid R>u\ \approx\ H$\\[1pt]
             \quad\emph{in which directions}};
        \node[anchor=west] at (4.3,-1.5) {$R/u\ \perp\!\!\!\perp\ \Theta$ \ asymptotically};
    \end{tikzpicture}
    \caption{Polar picture of MRV. Extreme points become sparser along the radius at a Pareto rate (how heavy) and concentrate in the directions charged by the spectral measure $H$ (in which directions).}
    \label{fig:polar}
\end{figure}

We recall the polar characterization of MRV.

\begin{theorem}\label{th:MRV}
    Let $X \in \mathbb{R}^d$ be multivariate regularly varying with tail index $\alpha > 0$, and write $R = \|X\|,\Theta=X/\|X\|\in\mathbb{S}^{d-1}$.
    Then there exists a probability measure $H$ on $\mathbb{S}^{d-1}$, called the spectral measure, such that, as $u \to \infty$,
    \begin{equation*}
        \mathcal{L}\left( \frac{R}{u}, \Theta \big | R > u \right) \Longrightarrow \operatorname{Pareto}(\alpha) \otimes H.
    \end{equation*}
    Equivalently, for every $r \geq 1$ and every Borel set $A \subset \mathbb{S}^{d-1}$ such that $H(\partial A)=0$,
    \begin{equation*}
        \mathbb{P}\left( \frac{R}{u} > r, \Theta \in A\middle|\, R > u\right)\longrightarrow r^{-\alpha} H(A).
    \end{equation*}
    Hence,
    \begin{equation*}
        \frac{R}{u}\big| R>u \Longrightarrow \operatorname{Pareto}(\alpha),
    \end{equation*}
    while the normalized radius and angular component become asymptotically independent:
    \begin{equation*}
        \frac{R}{u} \perp\!\!\!\perp \Theta \qquad \text{asymptotically, conditionally on } R>u.
    \end{equation*}   
\end{theorem}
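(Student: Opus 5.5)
The statement to prove is \autoref{th:MRV}, the polar characterization of MRV. The plan is to transfer the vague convergence in the definition to the polar cone sets
\[
C(r,A)=\{x\neq0:\ \|x\|>r,\ x/\|x\|\in A\},\qquad A\subset\mathbb S^{d-1}\ \text{Borel}.
\]
Homogeneity of $\mu$ will then force the product structure.

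\textbf{Step 1: define the candidate limit.} Put
\[
\mu_1:=\mu(\{\|x\|>1\})\quad\text{and}\quad H(A):=\mu(C(1,A))/\mu_1.
\]
I first check $0<\mu_1<\infty$. Finiteness holds because $\{\|x\|>1\}$ is relatively compact in $\R^d\setminus\{0\}$ and $\mu$ is Radon. Positivity uses homogeneity: if $\mu_1=0$, then $\mu(\{\|x\|>s\})=s^{-\alpha}\mu_1=0$ for all $s$, hence $\mu=0$, contradicting nontriviality. Homogeneity also gives $\mu(C(r,A))=r^{-\alpha}\mu_1H(A)$.

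\textbf{Step 2: check $\mu$-continuity of the cones.} The boundary of $C(r,A)$ is contained in
\[
\{\|x\|=r\}\cup\{x:\ x/\|x\|\in\partial A,\ \|x\|\ge r\}.
\]
For the sphere, $\mu(\{\|x\|=r\})=\lim_{\epsilon\downarrow0}\big[(r-\epsilon)^{-\alpha}-(r+\epsilon)^{-\alpha}\big]\mu_1=0$. The second piece has mass $r^{-\alpha}\mu_1H(\partial A)=0$ whenever $H(\partial A)=0$. Both sets are relatively compact in $\R^d\setminus\{0\}$, so vague convergence gives
\[
t\,\mathbb P\big(X\in a_tC(r,A)\big)\to r^{-\alpha}\mu_1H(A).
\]

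\textbf{Step 3: pass from $a_t$ to a continuous level $u$.} This is the main obstacle, since the definition uses a sequence while the statement concerns $u\to\infty$. Since $a_t\to\infty$, I take $a_t$ nondecreasing (standard, e.g.\ a quantile of $R$), so that for large $u$ there is $t=t(u)$ with $a_t\le u<a_{t+1}$. Taking ratios at the same $t$,
\[
\mathbb P\big(R>ur,\Theta\in A\mid R>u\big)=\frac{\mathbb P(X\in uC(r,A))}{\mathbb P(X\in uC(1,\mathbb S^{d-1}))}.
\]
To control this ratio I need $a_{t+1}/a_t\to1$. This follows from
\[
t\,\mathbb P(R>a_ts)\to s^{-\alpha}\mu_1
\]
together with $(t+1)/t\to1$: if $a_{t+1}/a_t\ge\lambda>1$ along a subsequence, then comparing $t\,\mathbb P(R>a_{t+1})$ with $t\,\mathbb P(R>a_t\lambda)$ contradicts the limit $\mu_1$. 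I then sandwich $u$ between $a_t$ and $a_t(1+\epsilon)$ and use monotonicity of the cone sets in the radius. The numerator lies between $t$-normalized limits of the form $(r(1+\epsilon))^{-\alpha}\mu_1H(A)$ and $r^{-\alpha}\mu_1H(A)$, and similarly for the denominator. Letting $\epsilon\to0$ gives the displayed limit $r^{-\alpha}H(A)$.

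\textbf{Step 4: conclude.} The sets $\{R/u>r,\ \Theta\in A\}$ with $r\ge1$ and $H(\partial A)=0$ form a convergence-determining $\pi$-system for laws on $[1,\infty)\times\mathbb S^{d-1}$, because the $H$-continuity sets generate the Borel $\sigma$-field. Hence weak convergence to $\operatorname{Pareto}(\alpha)\otimes H$ follows, which is the first display of the theorem. Two further consequences are immediate:
\begin{itemize}
\item taking $A=\mathbb S^{d-1}$ gives the Pareto limit for $R/u$ given $R>u$;
\item the product form of the limit is exactly the asymptotic independence of $R/u$ and $\Theta$.
\end{itemize}
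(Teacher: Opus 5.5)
The paper gives no proof of this statement: it is recalled as the classical polar characterization of MRV, so there is no argument to compare with. Your outline is the standard textbook route: evaluate $\mu$ on the cones $C(r,A)$, read the product structure off homogeneity, then pass from the sequence $a_t$ to a continuous level $u$ by bracketing. However, two steps are not justified as written.

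The substantive gap is the claim in Steps 1--2 that $\{\|x\|>1\}$ and $C(r,A)$ are relatively compact in $\R^d\setminus\{0\}$. In the topology inherited from $\R^d$ they are not, because relatively compact sets there must be bounded as well as bounded away from $0$. Vague convergence in that topology therefore only yields $t\,\mathbb P(a<\|X\|/a_t\le b)\to(a^{-\alpha}-b^{-\alpha})\mu_1$ on bounded annuli. It gives no control of $t\,\mathbb P(\|X\|>Ma_t)$ for large $M$, so mass may escape to infinity between the scales $a_t$. The limit $t\,\mathbb P(X\in a_tC(r,A))\to r^{-\alpha}\mu_1H(A)$ then does not follow, and every later step rests on it, including your proof that $a_{t+1}/a_t\to1$. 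Finiteness of $\mu_1$ can be recovered by summing homogeneity over dyadic annuli, but Step 2 cannot. You need to state the convention the paper's definition tacitly uses: vague convergence on the compactified space $[-\infty,\infty]^d\setminus\{0\}$, equivalently convergence on $\mu$-continuity sets bounded away from the origin. In that setting, sets bounded away from $0$ are relatively compact. The added boundary points at infinity are $\mu$-null, since $\mu$ lives on $\R^d\setminus\{0\}$. With this stated, Steps 1--3 go through verbatim.

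The second gap is in Step 4. That $H$-continuity sets generate the Borel $\sigma$-field shows only that your class determines a measure, not that it determines weak convergence. You can close it with tightness:
\begin{itemize}
\item Your limit at $A=\mathbb S^{d-1}$ gives $\mathbb P(R>Mu\mid R>u)\to M^{-\alpha}$, so the conditional laws are tight on $[1,\infty)\times\mathbb S^{d-1}$.
\item Any subsequential limit $\nu$ then coincides with $\operatorname{Pareto}(\alpha)\otimes H$ on those sets $(r,\infty)\times A$ that are continuity sets of both $\nu$ and the target. This holds for all but countably many $r$, and for $A$ one can take, e.g., geodesic balls with all but countably many radii.
\item These sets still form a generating $\pi$-system, hence $\nu=\operatorname{Pareto}(\alpha)\otimes H$.
\end{itemize}
Alternatively, apply Billingsley's criterion to the bounded rectangles $(r_1,r_2]\times A$, whose probabilities converge as differences of yours. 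The converse half of ``Equivalently'' is just the portmanteau theorem.

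Finally, Step 3 can be shortened. You need neither a monotone quantile choice of $a_t$, which would itself require a convergence-of-types argument, nor $a_{t+1}/a_t\to1$. Setting $t(u)=\max\{t:a_t\le u\}$ gives $a_{t(u)}\le u<a_{t(u)+1}$ for any $a_t\to\infty$. Bracketing numerator and denominator between their values at $a_{t+1}$ and $a_t$, together with $(t+1)/t\to1$, then yields the limit directly.
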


\subsection{Justification of the method}
The method rests on the premise that, beyond some radius, both the reference distribution and the generated one are in their MRV regime, \ie in the region where the factorization of \autoref{th:MRV} is (approximately) in force.
The three arguments below explain why this regime is precisely the one in which a radial correction is legitimate, and hence why the first stage of the method must be to detect it:
\begin{enumerate}
    \item[(i)] any discrepancy between two radial laws can be removed by a monotone one-dimensional map (\emph{radial tail matching});
    \item[(ii)] applying such a map along rays leaves directions exactly unchanged (\emph{preservation of the angular structure});
    \item[(iii)] combining (i) and (ii) with the asymptotic radial--angular independence provided by MRV, the corrected law inherits the tail index of $P$ and keeps the spectral measure of $Q$ (\emph{MRV repair theorem}).
\end{enumerate}
Steps (i) and (ii) are elementary and hold for any distributions; it is step (iii), and therefore the MRV regime, that turns them into a sound correction of the tail.

\subsubsection{Proof of Proposition \autoref{prop:repair}}\label{app:repair_proof}

\paragraph{(i) Population radial tail matching}
Let $P$ and $Q$ denote the reference and generated distributions, respectively. For $D\in\{P,Q\}$, let
\begin{equation*}
    S_D(r) = \mathbb P(R^D>r)
\end{equation*}
denote the radial survival function in the tail regime, and assume that $S_D$ is positive and continuous on $[0,\infty)$. For any nonincreasing survival function $S$, we write
\begin{equation*}
    S^{-1}(p) = \inf\{y\geq 0:\ S(y)\leq p\},\qquad p\in(0,1),
\end{equation*}
for its generalized inverse. \\
We define the population radial transport by
\begin{equation*}
    T_R(r) = S_P^{-1}\left(S_Q(r)\right).
\end{equation*}
Since both $S_Q$ and $S_P^{-1}$ are nonincreasing, $T_R$ is nondecreasing.
Moreover, if $R_Q$ denotes the radius under $Q$, then $T_R(R_Q)$ has the same tail law as the radius under $P$, equivalently
\begin{equation*}
    S_P\circ T_R=S_Q.
\end{equation*}
Thus, at the population level, the correction amounts to a monotone one-dimensional transport that matches the radial tail law of $Q$ to that of $P$.
This step, however, only acts on the norm: it says nothing about what happens to the full vector, which is the object of the next paragraph.

\paragraph{(ii) Preservation of the angular structure}
For $x\neq 0$, write $x=r\theta$, $r=\|x\|$, $\theta=x/\|x\|$.
Consider the radial transformation
\begin{equation*}
    T(x)=T_R(r)\theta.
\end{equation*}
Then the angular component is preserved exactly: $\Theta_{T(X)}=\Theta_X$, so that $\mathcal L(\Theta_{T(X)})=\mathcal L(\Theta_X)$.
The correction therefore modifies tail magnitude while leaving the directional
structure learned by the generator unchanged.
Note, however, that preserving the marginal law of $\Theta$ is not enough to control the joint law of $(R,\Theta)$: if the radius and the direction are dependent, rescaling radii while keeping directions fixed distorts their joint behaviour in an uncontrolled way.
The radial correction is thus meaningful only in a region where $R$ and $\Theta$ are (approximately)
decoupled, which is exactly what MRV guarantees in the tail.

\paragraph{(iii) MRV repair theorem}
Assume that $P$ and $Q$ are multivariate regularly varying, and that their polar limit laws as $u\to\infty$ are characterized by \autoref{th:MRV}:
\begin{equation*}
    \mathcal L\left( \frac{R_P}{u},\Theta_P\middle | R_P>u\right) \Longrightarrow \operatorname{Pareto}(\alpha_P)\otimes H_P ,
\end{equation*}
and
\begin{equation*}
    \mathcal L\left( \frac{R_Q}{u},\Theta_Q\middle | R_Q>u\right) \Longrightarrow\operatorname{Pareto}(\alpha_Q)\otimes H_Q .
\end{equation*}
If the radial correction transforms the radial tail of $Q$ into that of $P$, then the corrected law $Q^T=T_{\#}Q$ satisfies
\begin{equation*}
    \mathcal L\left( \frac{R_{Q^T}}{u},\Theta_{Q^T} \,\middle|\, R_{Q^T}>u \right) \Longrightarrow \operatorname{Pareto}(\alpha_P)\otimes H_Q ,
\end{equation*}

Indeed, the identity $S_P\circ T_R=S_Q$ ensures that the transformed radius has the same survival function as the reference radius, and hence tail index $\alpha_P$.
Since the transformation preserves directions, the asymptotic radial--angular independence under $Q$
then yields the product limit $\operatorname{Pareto}(\alpha_P)\otimes H_Q$.
Thus, the correction replaces the asymptotic radial component of $Q$ by that of $P$, while preserving the spectral measure of $Q$.
In particular, if $H_Q=H_P$, then the corrected distribution has the same asymptotic polar law as $P$.

\paragraph{Consequence: detecting the MRV regime}
Steps (i) and (ii) are exact, but (iii) is asymptotic: it requires that, above some level $u$ and for both $D\in\{P,Q\}$, the radius $R_D$ be approximately Pareto and approximately independent of $\Theta_D$.
Below this level, $R_D$ is not of Pareto type and may depend on $\Theta_D$, so the radial map would
distort the bulk, which the generator is assumed to reproduce well.
Detecting the MRV regime therefore amounts to estimating a threshold $u$, \emph{common to $P$ and $Q$}, beyond which both are in their MRV regime; the correction is applied for $\|x\|>u$ and the identity is kept below (Figure~\ref{fig:detect}).

\begin{figure}[ht]
    \centering
    \begin{tikzpicture}[scale=1]
        \fill[gray!15] (0,0) circle (1.6);
        \draw[dashed, thick] (0,0) circle (1.6);
        \node[anchor=north east] at (-130:1.6) {$\|x\|=u$};
            \foreach \x/\y in {0.2/0.3,-0.4/0.5,-0.7/-0.4,-0.2/-0.1,
                               -1.0/0.2,0.4/0.9,-0.5/-0.9,-0.9/0.8,0.7/0.3,-0.1/1.2}
                \fill[black!50] (\x,\y) circle (1.1pt);
            \node[align=center, font=\small] at (0.35,-0.8) {bulk: $T=\mathrm{Id}$};
            \draw[gray] (0,0) -- (20:5.2);
            \fill (20:2.3) circle (1.6pt) node[above=2pt] {$x=r\theta$};
            \fill[red] (20:4.3) circle (1.6pt) node[above, red] {$T(x)=T_R(r)\,\theta$};
            \draw[->, red, thick] (20:2.3) ++(-70:0.3) -- ++(20:2)
                node[midway, below right] {$T_R$};
            \node[align=left, anchor=west, font=\small] at (3.0,-0.7)
                {MRV regime ($\|x\|>u$):\\ $R\approx$ Pareto, $R\perp\!\!\!\perp\Theta$};
    \end{tikzpicture}
    \caption{Detect and repair.
    Inside the ball of radius $u$, the bulk is left unchanged; beyond $u$, where both $P$ and $Q$ are in their MRV regime, each point is moved along its own ray by the radial map $T_R$.}
    \label{fig:detect}
\end{figure}

\subsection{Model: Explicit form in the Pareto case}\label{subsec:model}
Suppose that, above a common threshold $u$,
\begin{equation*}
    S_P(r) = \pi_P\left(\frac{r}{u}\right)^{-\alpha_P}\quad\text{and}\quad
    S_Q(r) = \pi_Q\left(\frac{r}{u}\right)^{-\alpha_Q}.
\end{equation*}
Define the population exceedance probabilities as $\pi_D=S_D(u)=\mathbb P(R_D>u)$ for $D\in\{P,Q\}$.
Note that these are distinct from their estimates $\widehat\pi_D$. Then,
\begin{equation*}
    T_R(r) = u \left[ \frac{\pi_P}{\pi_Q} \left(\frac{r}{u}\right)^{\alpha_Q} \right]^{1/\alpha_P}.
\end{equation*}
Equivalently, $T_R(r)=Cr^{\alpha_Q/\alpha_P}$, for a suitable constant $C$.
This makes the effect of the correction explicit: if $\alpha_Q>\alpha_P$, the generated tail is lighter and the map expands large radii; if $\alpha_Q=\alpha_P$, the correction reduces to a multiplicative rescaling.

In practice, the tail parameters $(\pi_P,\alpha_P)$ and $(\pi_Q,\alpha_Q)$ are unknown, and the \emph{Model} stage consists in estimating this map.
Given the threshold $u$ provided by the \emph{Detect} stage, we fit a Pareto tail to each sample above $u$, which yields estimated survival functions $\widehat S_P$ and $\widehat S_Q$, and define the plug-in map $\widehat T_R$ as the solution of the empirical counterpart of the matching equation $S_P\circ T_R=S_Q$, namely
\begin{equation*}
    \widehat S_P\circ\widehat T_R=\widehat S_Q \qquad\text{on } [u,\infty).
\end{equation*}
The quality of the correction is thus governed by how well the fitted tails approximate the true ones above $u$; this is the object of \autoref{app-subsec:consistency}.

\subsection{Consistency of the estimated correction}
\label{app-subsec:consistency}

The practical map is obtained by replacing the unknown survival functions with fitted Pareto tails above a common threshold.
We study it under two assumptions on the radial tails, which play complementary roles.
\begin{itemize}
    \item \emph{Exact Pareto tails above a fixed threshold} (\autoref{prop:pareto}).
    The model of \autoref{subsec:model} is then well specified and only the estimation error remains: the plug-in map is consistent at the parametric rate, and the cost of extrapolating far beyond $u$ is made explicit.
    \item \emph{Asymptotically Pareto tails under a second-order condition} (\autoref{thm:consistency}).
    The Pareto model is now only approximately correct above any fixed level, so the threshold must grow with $n$.
    The theorem quantifies the resulting bias--variance trade-off and gives explicit rates, which in turn guide the choice of $u$ (\autoref{rem:threshold}).
\end{itemize}
Throughout, $\gamma_D=1/\alpha_D$ denotes the extreme-value index of $D\in\{P,Q\}$ and
\begin{equation*}
    \beta=\frac{\alpha_Q}{\alpha_P}=\frac{\gamma_P}{\gamma_Q}
\end{equation*}
is the exponent of the population map in the Pareto case.

\subsubsection*{Setting}

For each $n$, we observe two i.i.d. samples of radii $R^P_1,\dots,R^P_{n_P}\sim S_P$ and $R^Q_1,\dots,R^Q_{n_Q}\sim S_Q$, with $n_P=n_P(n)$, $n_Q=n_Q(n)$.
The survival functions $S_D$ are continuous on $[0,\infty)$ and positive.
The threshold $u=u_n>0$ is deterministic and common to $P$ and $Q$ (see \autoref{rem:threshold} for a data-driven threshold).

\subsubsection*{Estimators}

For $D\in\{P,Q\}$, let $ N_D=\sum_{i=1}^{n_D}\mathbf 1_{\{R^D_i>u\}}$, $ \widehat\pi_D=N_D/n_D$ and the exponent estimators
\begin{equation*}
    \widehat\gamma_D = \frac{1}{N_D}\sum_{i=1}^{n_D}\log\Big(\frac{R^D_i}{u}\Big)\mathbf 1_{\{R^D_i>u\}}\quad\text{and}\quad \widehat\alpha_D =\frac1{\widehat\gamma_D},
\end{equation*}
on the event $E_n=\{N_P\geq1,\ N_Q\geq1\}$ (on which $\widehat\gamma_D>0$, since exceedances are strict). The fitted tails are $\widehat S_D(r)=\widehat\pi_D(r/u)^{-\widehat\alpha_D}$, $r\geq u$, and $\widehat S_P^{-1}(p)=u(\widehat\pi_P/p)^{\widehat\gamma_P}$.
Hence
\begin{equation}\label{eq:That}
    \widehat T_R(r) = \widehat S_P^{-1}\big(\widehat S_Q(r)\big) =
        u\left(\frac{\widehat\pi_P}{\widehat\pi_Q}\right)^{\widehat\gamma_P} \left(\frac{r}{u}\right)^{\widehat\beta}
        \quad\text{where}\quad
        \widehat\beta=\frac{\widehat\gamma_P}{\widehat\gamma_Q}=\frac{\widehat\alpha_Q}{\widehat\alpha_P},
        \qquad r\geq u .
\end{equation}
Off $E_n$, set $\widehat T_R(r)=r$ (any convention will do, since $\mathbb P(E_n)\to1$ below).

\subsubsection*{Auxiliary quantities}

For $D\in\{P,Q\}$, define the \emph{effective number of exceedances}, the conditional log-moments above $u$, the Hill bias and the relative error of $\widehat\pi_D$:
\begin{gather}\label{eq:auxiliary_quantities}
    m_D = n_D\,S_D(u), \qquad
    \mu_D = \E\Big[\log\Big(\frac{R^D}{u}\Big)\Big|\,R^D>u\Big], \qquad
    \nu_D = \E\Big[\log^2\Big(\frac{R^D}{u}\Big)\Big|\,R^D>u\Big], \nonumber \\
    B_D = \mu_D-\gamma_D, \qquad \delta_D = \log\Big(\frac{\widehat\pi_D}{S_D(u)}\Big).
\end{gather}
All of them depend on $n$ through $u=u_n$ and $n_D$.
Finally, define the \emph{oracle Pareto map} at level $u$, the log-ratio of exceedance probabilities, and the approximation bias:
\begin{equation*}
    T^\circ_n(r)= u\left(\frac{S_P(u)}{S_Q(u)}\right)^{\gamma_P}\left(\frac{r}{u}\right)^{\beta},
    \qquad L_n=\log\Big(\frac{S_P(u)}{S_Q(u)}\Big), \qquad b_n(r)=\log (T_R(r))-\log(T^\circ_n(r)).
\end{equation*}
$T^\circ_n$ is what $\widehat T_R$ would be if $(\widehat\pi_D,\widehat\gamma_D)$ were replaced by their targets $(S_D(u),\gamma_D)$ for $D\in\{P,Q\}$, and $b_n(r)$ is the bias of the Pareto model at level $u$.

\subsubsection*{Tail assumptions}
We will rely on the following assumptions on the radial tails, which are standard in extreme-value theory.
They hold for $D\in\{P,Q\}$.
\begin{enumerate}
    \item[(P)] \emph{Exact Pareto tail above a fixed threshold}: $u_n\equiv u$ and $S_D(r)=\pi_D(r/u)^{-\alpha_D}$ for $r\geq u$, with $\pi_D\in(0,1)$.
    \item[(H)] \emph{Second-order Pareto tail} (Hall type): $S_D(r)=c_D\,r^{-\alpha_D}\big(1+\eta_D(r)\big)$ with $c_D>0$ and $|\eta_D(r)|\leq K r^{-\rho_D}$ for $r\geq r_0$, with $K,\rho_D>0$.
\end{enumerate}
\emph{Under (P)}, the Pareto model is exact above the fixed threshold $u$, so the targets of the estimators do not depend on $n$: $S_D(u)=\pi_D$, and $\log(R^D/u)$ given $R^D>u$ is exponential with mean $\gamma_D$, so that the Hill bias vanishes, $B_D=0$.
The only source of error is then statistical, and consistency means $\widehat\pi_D\to\pi_D$ and $\widehat\gamma_D\to\gamma_D$. \\
\emph{Under (H)}, the tail is only asymptotically Pareto: it deviates from a Pareto tail by a relative error $\eta_D(r)$ that vanishes at the polynomial rate $r^{-\rho_D}$.
This holds for instance for Student radii ($\rho_D=2$) and Fréchet radii ($\rho_D=\alpha_D$).
The Pareto approximation becomes accurate only far in the tail, which requires a threshold $u_n\to\infty$.
The target of $\widehat\pi_D$ then moves with $n$ and vanishes, $S_D(u_n)\to0$, so that $\widehat\pi_D\to\pi_D$ is no longer meaningful; the relevant statement is the relative one, $\widehat\pi_D/S_D(u_n)\to1$, i.e.\ $\delta_D\to0$.

\subsubsection*{Technical lemmas}
The proofs of \autoref{prop:pareto} and \autoref{thm:consistency} both start from a decomposition of $\Delta_n$, established in the proof of \autoref{prop:pareto} (see \autoref{eq:decomp} and \autoref{eq:master}): on $E_n$, for every $r_n\geq u$ and every set $I\subset[u,r_n]$,
\begin{equation*}
    \sup_{r\in I}|\Delta_n(r)| \leq \underbrace{|\widehat\gamma_P-\gamma_P|\,|L_n|
    + \widehat\gamma_P\big(|\delta_P|+|\delta_Q|\big)
    + |\widehat\beta-\beta|\log\frac{r_n}{u}}_{\text{stochastic part: estimation of }(\pi_D,\gamma_D)}
    + \underbrace{\sup_{r\in I}|b_n(r)|}_{\text{deterministic part: Pareto approximation}}.
\end{equation*}
The stochastic part is due to the estimation of $(\pi_D,\gamma_D)$; it also contains the (deterministic) Hill bias $B_D$, through $\widehat\gamma_D-\gamma_D=(\widehat\gamma_D-\mu_D)+B_D$.
The deterministic part is due to the Pareto approximation of the tails at level $u$.
The following lemmas control these terms separately.
\begin{itemize}
    \item \autoref{lem:stoch} controls the sampling fluctuations: the number of exceedances $N_D$ concentrates around $m_D$, and $\delta_D$ and $\widehat\gamma_D-\mu_D$ are of order $m_D^{-1/2}$.
    It only requires moment conditions, and thus applies under both (P) and (H).
    \item \autoref{lem:bias} quantifies the Hill bias $B_D=\mu_D-\gamma_D$, which vanishes under (P) and is of order $u^{-\rho_D}$ under (H), and checks the moment condition $\sup_n\nu_D<\infty$ of
    \autoref{lem:stoch}.
    \item \autoref{lem:param} combines \autoref{lem:stoch} and~\autoref{lem:bias} into rates for the estimated parameters $\widehat\gamma_D$, $\widehat\pi_D$ (through $\delta_D$) and $\widehat\beta$, and shows that $\mathbb P(E_n)\to1$.
    \item \autoref{lem:approx} controls the approximation bias $b_n$ of the Pareto model: it vanishes above $u_\star$ under (P), and is of order $u_n^{-\rho_\star}$ under (H).
\end{itemize}

\begin{lemma}[Stochastic part]\label{lem:stoch}
Fix $D\in\{P,Q\}$.
Assume $m_D\to\infty$ and $\sup_n\nu_D<\infty$ where $m_D$ and $\nu_D$ are defined as in \autoref{eq:auxiliary_quantities}.
Then,
    \begin{enumerate}
        \item[(a)] $N_D/m_D-1=\mathrm O_P(m_D^{-1/2})$ and $\delta_D=\mathrm O_P(m_D^{-1/2})$;
        \item[(b)] $\mathbb P(N_D=0)\leq 1/m_D\to0$;
        \item[(c)] $\widehat\gamma_D-\mu_D=\mathrm O_P(m_D^{-1/2})$.
    \end{enumerate}
\end{lemma}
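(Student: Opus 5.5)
The argument uses only that $N_D\sim\mathrm{Bin}(n_D,S_D(u))$, whose mean is $m_D$. Conditionally on $N_D$, the log-excesses $Y_i=\log(R^D_i/u)$ of the exceedances are i.i.d. with the conditional law of $\log(R^D/u)$ given $R^D>u$. Their mean is $\mu_D$ and their second moment is $\nu_D$. We prove (b), (a) and (c) in that order, using Chebyshev's inequality and conditioning on $N_D$.

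\emph{Step (b) and the first half of (a).} We have $\mathbb E N_D=m_D$ and $\operatorname{Var}N_D=n_DS_D(u)(1-S_D(u))\leq m_D$. Chebyshev's inequality gives
\[
\mathbb P\big(|N_D-m_D|\geq t\sqrt{m_D}\big)\leq t^{-2}.
\]
Hence $N_D/m_D-1=\mathrm O_P(m_D^{-1/2})$. For (b), the event $\{N_D=0\}$ is contained in $\{|N_D-m_D|\geq m_D\}$, so Chebyshev gives $\mathbb P(N_D=0)\leq m_D/m_D^2=1/m_D$, which tends to $0$. One could also use $(1-S_D(u))^{n_D}\leq e^{-m_D}\leq 1/m_D$.

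\emph{Second half of (a).} By definition,
\[
\delta_D=\log\big(\widehat\pi_D/S_D(u)\big)=\log(N_D/m_D).
\]
Since $N_D/m_D\to1$ in probability, on the event $\{|N_D/m_D-1|\leq 1/2\}$ we have $|\log x|\leq 2|x-1|$. Therefore $\delta_D=\mathrm O_P(m_D^{-1/2})$. Off $E_n$ the logarithm is $-\infty$, but that event has vanishing probability by (b), so it does not affect an $\mathrm O_P$ statement.

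\emph{Step (c).} On $\{N_D\geq1\}$ we write
\[
\widehat\gamma_D-\mu_D=\frac1{N_D}\sum_{i\leq N_D}(Y_i-\mu_D).
\]
Conditionally on $N_D=k\geq1$, this centred average has variance at most $\nu_D/k$. Hence
\[
\mathbb P\big(|\widehat\gamma_D-\mu_D|>t\,m_D^{-1/2}\,\big|\,N_D=k\big)\leq \frac{\nu_D\, m_D}{t^2 k}.
\]
We then intersect with the event $\{N_D\geq m_D/2\}$, whose probability tends to one by (a). On it the bound becomes $2\sup_n\nu_D/t^2$, uniformly in $n$. The complementary event contributes $o(1)$, and together these give tightness of $m_D^{1/2}(\widehat\gamma_D-\mu_D)$.

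The only delicate point is justifying the conditional i.i.d. structure, i.e. that given $N_D$ the exceedances are i.i.d. draws from the conditional law. This is standard for thinned i.i.d. samples, since exceedance indicators and values factorize. The other care needed is handling the random denominator $N_D$ through the event $\{N_D\geq m_D/2\}$, rather than attempting an unconditional variance computation.
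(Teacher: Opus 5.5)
Your proof is correct. Parts (a) and (b) coincide with the paper's argument: Chebyshev on the binomial count, the inclusion $\{N_D=0\}\subset\{|N_D-m_D|\geq m_D\}$, and $|\log x|\leq 2|x-1|$ on $\{|N_D/m_D-1|\leq 1/2\}$.

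Part (c) takes a different route.

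\emph{The paper's route.} It never conditions on $N_D$. It introduces the unconditional sum with deterministic normaliser
\[
\overline Y=m_D^{-1}\sum_{i=1}^{n_D}\log(R_i^D/u)\mathbf 1_{\{R_i^D>u\}},
\]
which has mean $\mu_D$ and variance at most $\nu_D/m_D$. It then writes $\widehat\gamma_D=\overline Y/(N_D/m_D)$, so that
\[
\widehat\gamma_D-\mu_D=\frac{(\overline Y-\mu_D)-\mu_D(N_D/m_D-1)}{N_D/m_D}.
\]
The random denominator is controlled on $\{N_D/m_D\geq 1/2\}$. Part (a) handles the second numerator term, and $\mu_D\leq\nu_D^{1/2}$ keeps its coefficient bounded.

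\emph{Your route.} You condition on $N_D=k$ and use the thinning property. This gives a conditionally centred i.i.d.\ average with variance at most $\nu_D/k$, and you then restrict to $k\geq m_D/2$.

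\emph{What each buys.} The paper's argument is fully elementary: it uses only i.i.d.\ sums over the whole sample and Chebyshev, with no conditioning lemma to justify. The cost is the cross term $\mu_D(N_D/m_D-1)$ and the extra Cauchy--Schwarz step. Your argument removes that cross term. It also shows directly that the fluctuation is governed by the conditional variance $\nu_D-\mu_D^2$, the quantity that becomes $\gamma_D^2$ in the classical Hill CLT. The price is reliance on the conditional i.i.d.\ structure of the exceedances, which you rightly flag as the one point needing justification. It holds because, given the indicator vector, the $R_i^D$ are conditionally independent, the exceedances are distributed as $R^D\mid R^D>u$, and $\widehat\gamma_D$ is symmetric in them.

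One small slip, harmless to the argument: $\delta_D=-\infty$ exactly on $\{N_D=0\}$, not on all of $E_n^c$. Your opening claim that the argument uses only the binomial law of $N_D$ is also slightly overstated, since (c) uses the conditional structure as well.
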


\begin{proof}
    For the sake of readability, we drop the index $D$. \\
    
    $(a)$ Since $N\sim\mathrm{Bin}(n,S(u))$, we have $\E[N]=nS(u)=m$ and $\mathrm{Var}[N]=m(1-S(u))\leq m$.
    By Bienaymé--Tchebychev, $\mathbb P(|N/m-1|>t\,m^{-1/2})\leq t^{-2}$ for all $t>0$, i.e.\ $N/m-1=\mathrm O_P(m^{-1/2})$.
    Since $\widehat\pi/S(u)=N/m$ and $|\log(1+x)|\leq|x|/(1-|x|)\leq2|x|$ for $|x|\leq1/2$, we get
    $|\delta|\leq 2|N/m-1|$ on $\{|N/m-1|\leq1/2\}$, an event of probability at least $1-4/m\to1$.
    Hence $\delta=\mathrm O_P(m^{-1/2})$.
    
    \noindent $(b)$ It is enough to note that $\mathbb P(N=0)\leq\mathbb P(|N/m-1|\geq1)\leq\mathrm{Var}(N/m)\leq 1/m$.
    
    \noindent $(c)$ Let the random variables $Y_i=\log(R_i/u)\mathbf 1_{\{R_i>u\}}$, so that $\E[Y_i]=S(u)\mu$ and
    $\E[Y_i^2]=S(u)\nu$, and let $\overline Y=m^{-1}\sum_{i=1}^n Y_i$.
    Then $\E[\overline Y]=nS(u)\mu/m=\mu$ and $\mathrm{Var}[\overline Y]\leq n\E[Y_1^2]/m^2=\nu/m$, so $\overline Y-\mu=\mathrm O_P(m^{-1/2})$ again by Bienaymé-Tchebychev.
    On $\{N\geq1\}$, $\widehat\gamma=\overline Y/(N/m)$, hence
    \begin{equation*}
        \widehat\gamma-\mu = \frac{(\overline Y-\mu)-\mu\,(N/m-1)}{N/m}.
    \end{equation*}
    On $\{N/m\geq1/2\}$, whose probability tends to one by (a), $|\widehat\gamma-\mu|\leq2\big(|\overline Y-\mu|+\mu|N/m-1|\big)$, and $\mu\leq\nu^{1/2}$ (Cauchy--Schwarz) is bounded.
    Thus $\widehat\gamma-\mu=\mathrm O_P(m^{-1/2})$.
\end{proof}

\begin{lemma}[Hill bias]\label{lem:bias}
Fix $D\in\{P,Q\}$.
\begin{enumerate}
    \item[(i)] Under (P): $\mu_D=\gamma_D$ and $\nu_D=2\gamma_D^2$, i.e.\ $B_D=0$.
    \item[(ii)] Under (H), for every $u\geq r_0$ such that $Ku^{-\rho_D}\leq1/2$:
        \begin{equation*}
            |B_D|\leq 4K\gamma_D\,u^{-\rho_D}
            \qquad\text{and}\qquad
            \nu_D\leq 6\gamma_D^2 .
        \end{equation*}
    \end{enumerate}
\end{lemma}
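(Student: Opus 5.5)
The plan is to work directly with the conditional law of the log-excess $Y=\log(R^D/u)$ given $R^D>u$ and to express $\mu_D$ and $\nu_D$ as integrals of its conditional survival function. Drop the index $D$. For $t\geq0$,
\begin{equation*}
    \mathbb P(Y>t\mid R>u)=\frac{S(ue^t)}{S(u)}.
\end{equation*}
Since $Y\geq0$, the layer-cake formulas give
\begin{equation*}
    \mu=\int_0^\infty \frac{S(ue^t)}{S(u)}\,dt
    \qquad\text{and}\qquad
    \nu=\int_0^\infty 2t\,\frac{S(ue^t)}{S(u)}\,dt .
\end{equation*}
Everything then reduces to comparing this ratio with $e^{-\alpha t}$, and using $\int_0^\infty e^{-\alpha t}\,dt=\gamma$ and $\int_0^\infty 2t\,e^{-\alpha t}\,dt=2\gamma^2$.

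\textbf{Part (i).} Under (P), $S(ue^t)/S(u)=e^{-\alpha t}$ exactly. Hence $Y\mid R>u$ is exponential with rate $\alpha$, which immediately gives $\mu=\gamma$, $\nu=2\gamma^2$ and $B=0$.

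\textbf{Part (ii).} Under (H), the constant $c$ cancels in the ratio:
\begin{equation*}
    \frac{S(ue^t)}{S(u)}=e^{-\alpha t}\,\frac{1+\eta(ue^t)}{1+\eta(u)} .
\end{equation*}
Take $u\geq r_0$. Then $ue^t\geq r_0$ for all $t\geq0$, so the bound $|\eta(ue^t)|\leq K(ue^t)^{-\rho}\leq Ku^{-\rho}\leq 1/2$ holds along the whole integration range, and also $|\eta(u)|\leq Ku^{-\rho}\leq1/2$. This uniformity in $t$ is the only point needing care, and it is the reason the hypothesis is stated for $u\geq r_0$.

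\emph{Hill bias.} Write
\begin{equation*}
    B=\mu-\gamma=\int_0^\infty e^{-\alpha t}\,\frac{\eta(ue^t)-\eta(u)}{1+\eta(u)}\,dt .
\end{equation*}
Using $1+\eta(u)\geq1/2$, the integrand is bounded in absolute value by $2e^{-\alpha t}\big(|\eta(ue^t)|+|\eta(u)|\big)\leq 4Ku^{-\rho}e^{-\alpha t}$. Integrating yields $|B|\leq 4K\gamma u^{-\rho}$.

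\emph{Second moment.} Bound the ratio crudely by
\begin{equation*}
    \frac{1+\eta(ue^t)}{1+\eta(u)}\leq\frac{3/2}{1/2}=3 .
\end{equation*}
This gives $\nu\leq 3\int_0^\infty 2t\,e^{-\alpha t}\,dt=6\gamma^2$. This is also the uniform bound $\sup_n\nu_D<\infty$ required by \autoref{lem:stoch}, since $u_n\to\infty$ eventually puts $u_n$ in the admissible range.

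I do not expect a genuine obstacle here. The two points to verify carefully are the use of the tail-integral formulas for nonnegative variables and the fact that the second-order bound on $\eta$ applies at every point $ue^t$ of the integration range.
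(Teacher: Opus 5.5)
Your proof is correct and follows the paper's argument essentially line for line. Both use the tail-integral representations of $\mu_D$ and $\nu_D$ and the same bound $4Ku^{-\rho_D}e^{-\alpha_D t}$ on $\bigl|S(ue^t)/S(u)-e^{-\alpha_D t}\bigr|$; for the second moment, the paper gets the factor $3$ as $1+4Ku^{-\rho_D}\leq 3$ while you get it as $(3/2)/(1/2)$, and these are equivalent.
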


\begin{proof}
    Drop the index $D$.
    For the nonnegative variable $Z=\log(R/u)$ given $R>u$,
    \begin{equation}\label{eq:moments}
        \mu=\int_0^\infty\frac{S(ue^s)}{S(u)}\mathrm ds, \qquad \nu=\int_0^\infty 2s\,\frac{S(ue^s)}{S(u)}\mathrm ds .
    \end{equation}
    (i) Under (P), $S(ue^s)/S(u)=e^{-\alpha s}$, so $Z\sim\mathrm{Exp}(\alpha)$, $\mu=1/\alpha$ and $\nu=2/\alpha^2$. \\
    
    \noindent (ii) Under (H), for $u\geq r_0$ with $Ku^{-\rho}\leq1/2$ and all $s\geq0$,
    \begin{equation*}
        \left|\frac{S(ue^s)}{S(u)}-e^{-\alpha s}\right| = e^{-\alpha s}\,\frac{|\eta(ue^s)-\eta(u)|}{1+\eta(u)}
        \leq e^{-\alpha s}\,\frac{2Ku^{-\rho}}{1/2} = 4Ku^{-\rho}e^{-\alpha s},
    \end{equation*}
    using $|\eta(ue^s)|\leq K(ue^s)^{-\rho}\leq Ku^{-\rho}$.
    Integrating over $s\in(0,\infty)$ in \eqref{eq:moments} gives $|\mu-1/\alpha|\leq4Ku^{-\rho}/\alpha$.
    Moreover, the same bound and $4Ku^{-\rho}\leq2$ give $S(ue^s)/S(u)\leq3e^{-\alpha s}$, whence $\nu\leq3\int_0^\infty2se^{-\alpha s}\mathrm ds=6/\alpha^2$.
\end{proof}

\begin{lemma}[Estimated parameters]\label{lem:param}
    Assume that, for $D\in\{P,Q\}$, $m_D\to\infty$, $\sup_n\nu_D<\infty$ and $B_D\to0$.
    Then,
    \begin{enumerate}
        \item[(a)] $\mathbb P(E_n)\to1$;
        \item[(b)] For $D\in\{P,Q\}$,
            \begin{equation}\label{eq:gammaerr}
                \widehat\gamma_D-\gamma_D=\mathrm O_P(m_D^{-1/2})+B_D, \qquad \delta_D=\mathrm O_P(m_D^{-1/2}).
            \end{equation}
        \item[(c)] We have $\widehat\gamma_P=\mathrm O_P(1)$ and
            \begin{equation}\label{eq:betaerr}
                |\widehat\beta-\beta|=\mathrm O_P\big(m_P^{-1/2}+m_Q^{-1/2}\big)+O\big(|B_P|+|B_Q|\big).
            \end{equation}
    \end{enumerate}
\end{lemma}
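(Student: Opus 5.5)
We combine \autoref{lem:stoch} with the decomposition $\widehat\gamma_D-\gamma_D=(\widehat\gamma_D-\mu_D)+B_D$, then treat $\widehat\beta$ by a ratio (delta-method-type) argument. Throughout we use only the hypotheses $m_D\to\infty$, $\sup_n\nu_D<\infty$ and $B_D\to0$. In particular $\mu_D=\gamma_D+B_D$ stays bounded and eventually lies in $[\gamma_D/2,2\gamma_D]$.

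For (a), we use the union bound $\mathbb P(E_n^c)\leq\mathbb P(N_P=0)+\mathbb P(N_Q=0)$. By \autoref{lem:stoch}(b) this is at most $1/m_P+1/m_Q\to0$.

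For (b), on $E_n$ we write $\widehat\gamma_D-\gamma_D=(\widehat\gamma_D-\mu_D)+B_D$. \autoref{lem:stoch}(c) gives that the first term is $\mathrm O_P(m_D^{-1/2})$. The claim $\delta_D=\mathrm O_P(m_D^{-1/2})$ is exactly \autoref{lem:stoch}(a). Events of vanishing probability, such as $E_n^c$, do not affect $\mathrm O_P$ statements.

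For (c), boundedness in probability follows from (b): $\widehat\gamma_P=\gamma_P+\mathrm O_P(m_P^{-1/2})+B_P$, and $B_P\to0$, so $\widehat\gamma_P=\mathrm O_P(1)$. For $\widehat\beta$ we use the exact identity
\begin{equation*}
    \widehat\beta-\beta=\frac{\widehat\gamma_P}{\widehat\gamma_Q}-\frac{\gamma_P}{\gamma_Q}=\frac{(\widehat\gamma_P-\gamma_P)\gamma_Q-\gamma_P(\widehat\gamma_Q-\gamma_Q)}{\widehat\gamma_Q\,\gamma_Q}.
\end{equation*}
The numerator is at most $\gamma_Q|\widehat\gamma_P-\gamma_P|+\gamma_P|\widehat\gamma_Q-\gamma_Q|$. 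By (b) this is $\mathrm O_P(m_P^{-1/2}+m_Q^{-1/2})+O(|B_P|+|B_Q|)$.

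The only delicate point is keeping the denominator $\widehat\gamma_Q$ away from zero. Since $\widehat\gamma_Q-\gamma_Q=\mathrm O_P(m_Q^{-1/2})+B_Q$ and both terms tend to zero, $\widehat\gamma_Q\to\gamma_Q>0$ in probability. Hence the event $\{\widehat\gamma_Q\geq\gamma_Q/2\}$ has probability tending to one. On it, $|\widehat\beta-\beta|\leq 2\gamma_Q^{-2}\big(\gamma_Q|\widehat\gamma_P-\gamma_P|+\gamma_P|\widehat\gamma_Q-\gamma_Q|\big)$, with constants independent of $n$.

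Finally, the deterministic bias part $B_D$ appears outside the $\mathrm O_P$ with a fixed constant $2\max(\gamma_P,\gamma_Q)/\gamma_Q^2$. This gives the stated form \eqref{eq:betaerr}.
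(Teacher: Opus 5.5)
Your proof is correct and follows the paper's argument. Part (a) uses the same union bound with part (b) of the stochastic-part lemma, part (b) uses the same split $\widehat\gamma_D-\gamma_D=(\widehat\gamma_D-\mu_D)+B_D$, and part (c) uses the same ratio identity for $\widehat\beta-\beta$. The one difference is that you control $1/\widehat\gamma_Q$ on the explicit event $\{\widehat\gamma_Q\geq\gamma_Q/2\}$ rather than through the continuous mapping theorem; this has the small benefit of keeping the bias contribution genuinely deterministic, with an explicit constant, as the stated form $O(|B_P|+|B_Q|)$ requires.
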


\begin{proof}
    $(a)$ By \autoref{lem:stoch}(b), $\mathbb P(E_n^c)=\mathbb P(\{N_P=0\}\cup \{N_Q=0\})\leq1/m_P+1/m_Q\to0$.
    Then, $\mathbb P(E_n)\to1$.

    \noindent $(b)$ Throughout, a sequence that is $\mathrm \mathrm O_P(a_n)$ (resp. $\mathrm o_P(1)$) on events of probability tending to one is $\mathrm O_P(a_n)$ (resp. $\mathrm o_P(1)$).
    Writing $\widehat\gamma_D-\gamma_D=(\widehat\gamma_D-\mu_D)+B_D$, \autoref{eq:gammaerr} follows from \autoref{lem:stoch}(a)(c). \\
    
    \noindent $(c)$ In particular, we have $\widehat\gamma_D\overset{P}{\to}\gamma_D>0$.
    By the continuous mapping theorem applied to $g:x\mapsto1/x$ at $x=\gamma_Q>0$, we get $1/\widehat\gamma_Q=\mathrm O_P(1)$; also $\widehat\gamma_P=\mathrm O_P(1)$.
    Finally, since
    \begin{equation}\label{eq:beta}
        \widehat\beta-\beta=\frac{\widehat\gamma_P}{\widehat\gamma_Q}-\frac{\gamma_P}{\gamma_Q}= \frac{\gamma_Q(\widehat\gamma_P-\gamma_P)-\gamma_P(\widehat\gamma_Q-\gamma_Q)}{\gamma_Q\,\widehat\gamma_Q},
    \end{equation}
    $|\widehat\beta-\beta|=\mathrm O_P\big(|\widehat\gamma_P-\gamma_P|+|\widehat\gamma_Q-\gamma_Q|\big)$, hence \eqref{eq:betaerr}.
\end{proof}

In the following lemma, since under the fixed-threshold assumption (P) the targets of the estimators do not depend on $n$, we suppress the subscript $n$ in $T^\circ_n$ and $b_n$.

\begin{lemma}[Approximation bias]\label{lem:approx} The following approximation properties hold.
    \begin{enumerate}
        \item[(i)] Under (P), let $u_\star=u\max\{1,(\pi_Q/\pi_P)^{1/\alpha_Q}\}$.
            Then $T_R(r)=T^\circ(r)$, i.e.\ $b(r)=0$, for every $r>u_\star$.
        \item[(ii)] Under (H) for $P$ and $Q$, let $C=(c_P/c_Q)^{\gamma_P}$. Then, as $r\to\infty$,
            \begin{equation*}
                \log T_R(r)=\log C+\beta\log r+\varepsilon(r),
                \quad\text{with}\quad
                \varepsilon(r)=O\big(r^{-\beta\rho_P}+r^{-\rho_Q}\big).
            \end{equation*}
            and, whenever $u_n\to\infty$, $\sup_{r\geq u_n}|b_n(r)|=O(u_n^{-\rho_\star})$, with $\rho_\star=\min\{\rho_Q,\ \rho_P\min(1,\beta)\}$.
    \end{enumerate}
\end{lemma}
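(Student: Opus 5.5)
Part (i) is a direct computation. Under (P), for $r\geq u$ we have $S_Q(r)=\pi_Q(r/u)^{-\alpha_Q}\leq\pi_Q$. On $[u,\infty)$, $S_P$ takes exactly the values in $(0,\pi_P]$. So whenever $S_Q(r)\leq\pi_P$, the generalized inverse is the explicit Pareto inverse, $S_P^{-1}(p)=u(\pi_P/p)^{\gamma_P}$. I would check two things: that the infimum is not attained below $u$, using that $S_P$ is nonincreasing and $S_P(y)\geq\pi_P\geq p$ for $y\leq u$; and the boundary case of equality. Plugging in gives $T_R(r)=u(\pi_P/\pi_Q)^{\gamma_P}(r/u)^{\beta}=T^\circ(r)$. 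The condition $S_Q(r)\leq\pi_P$ is equivalent to $r\geq u(\pi_Q/\pi_P)^{1/\alpha_Q}$, and together with $r\geq u$ this is $r\geq u_\star$. Hence $b\equiv0$ for $r>u_\star$.

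For part (ii), the first step is to invert $S_P$ asymptotically. Write $p=S_Q(r)=c_Qr^{-\alpha_Q}(1+\eta_Q(r))$, which tends to $0$ as $r\to\infty$. Continuity of $S_P$ and $S_P\to0$ give $S_P(T_R(r))=p$ for large $r$; if $S_P$ is only continuous, I would work with the generalized inverse and use $S_P(S_P^{-1}(p))=p$. Also $T_R(r)\to\infty$. Applying (H) for $P$ at $y=T_R(r)$,
\begin{equation*}
c_P\,y^{-\alpha_P}\bigl(1+\eta_P(y)\bigr)=c_Q\,r^{-\alpha_Q}\bigl(1+\eta_Q(r)\bigr),
\end{equation*}
and taking logarithms,
\begin{equation*}
\log y=\gamma_P\log(c_P/c_Q)+\beta\log r+\gamma_P\bigl[\log(1+\eta_P(y))-\log(1+\eta_Q(r))\bigr].
\end{equation*}
The error is $O(|\eta_P(y)|+|\eta_Q(r)|)=O(y^{-\rho_P}+r^{-\rho_Q})$. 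A crude bound first shows $y\asymp r^\beta$ (the bracket is bounded), so $y^{-\rho_P}=O(r^{-\beta\rho_P})$. This proves $\varepsilon(r)=O(r^{-\beta\rho_P}+r^{-\rho_Q})$.

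The second step, which is the main obstacle, is to compare with the oracle map at level $u_n$. We have
\begin{equation*}
\log T^\circ_n(r)=\log u_n+\gamma_P L_n+\beta\log(r/u_n),
\end{equation*}
and by (H),
\begin{equation*}
L_n=\log(c_P/c_Q)-(\alpha_P-\alpha_Q)\log u_n+\log\frac{1+\eta_P(u_n)}{1+\eta_Q(u_n)}.
\end{equation*}
Since $\gamma_P(\alpha_P-\alpha_Q)=1-\beta$, the $\log u_n$ terms cancel exactly, leaving
\begin{equation*}
\log T^\circ_n(r)=\log C+\beta\log r+O(u_n^{-\rho_P}+u_n^{-\rho_Q}).
\end{equation*}
Subtracting, uniformly over $r\geq u_n$,
\begin{equation*}
|b_n(r)|\leq|\varepsilon(r)|+O(u_n^{-\rho_P}+u_n^{-\rho_Q})=O\bigl(u_n^{-\beta\rho_P}+u_n^{-\rho_Q}+u_n^{-\rho_P}\bigr).
\end{equation*}
The bound uses the monotonicity of the terms $r^{-\cdot}$ in $r$. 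The right-hand side is $O(u_n^{-\min\{\rho_Q,\rho_P\min(1,\beta)\}})=O(u_n^{-\rho_\star})$.

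The delicate point is making the $O(\cdot)$ in $\varepsilon$ uniform. This needs an explicit threshold $r_1$ beyond which $T_R(r)\geq r_0$ and $|\eta_D|\leq1/2$, so that $|\log(1+x)|\leq2|x|$ applies. It also needs the implicit constant in $y\gtrsim r^\beta$ to be fixed once; I would obtain it from the bracket being at most $2\log 3$ in absolute value. Then $u_n\geq r_1$ for all large $n$ makes everything uniform.
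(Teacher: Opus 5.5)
Your proposal is correct and follows essentially the paper's proof. In (i) you use the explicit Pareto inverse $S_P^{-1}(p)=u(\pi_P/p)^{\gamma_P}$ for $p<\pi_P$. In (ii) you take logarithms in $S_P(T_R(r))=S_Q(r)$ under (H), cancel the $\log u_n$ terms exactly in $\log T^\circ_n$, and decompose $b_n(r)=\varepsilon(r)-\gamma_P\ell_n$. Two differences are cosmetic. First, you obtain $T_R(r)\geq c\,r^{\beta}$ from boundedness of the bracket rather than from $\varepsilon(r)\to0$. Second, your boundary caveat is warranted: at $S_Q(r)=\pi_P$ the generalized inverse can fall below $u$ if $S_P$ is flat just below $u$, which is why the strict condition $r>u_\star$, giving $S_Q(r)<\pi_P$, is the right one.
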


\begin{proof}
    Recall from the setting that $S_D(r)=\mathbb P(R^D>r)$ is the continuous, positive, nonincreasing survival function of the radius under $D\in\{P,Q\}$, that $S_D^{-1}(p)=\inf\{y\geq0:\ S_D(y)\leq p\}$, $p\in(0,1)$, is its generalized inverse, so that $T_R=S_P^{-1}\circ S_Q$, and that $u$ is the common deterministic threshold, fixed under (P) and equal to $u_n\to\infty$ under (H). \\
    
    \noindent $(i)$ Let $p\in(0,\pi_P)$.
    For $y<u$, $S_P(y)\geq S_P(u)=\pi_P>p$; for $y\geq u$, $S_P(y)\leq p$ iff $y\geq u(\pi_P/p)^{\gamma_P}$.
    Hence $S_P^{-1}(p)=u(\pi_P/p)^{\gamma_P}$.
    For $r>u_\star$, taking $p=S_Q(r)=\pi_Q(r/u)^{-\alpha_Q}<\pi_P$, so $T_R(r)=u(\pi_P/\pi_Q)^{\gamma_P}(r/u)^{\alpha_Q\gamma_P}=T^\circ(r)$. \\
    
    \noindent $(ii)$ Recall that, under (H),
    \begin{equation*}
        \eta_D(r)=\frac{S_D(r)}{c_D\,r^{-\alpha_D}}-1\quad\text{and}\quad |\eta_D(r)|\leq K r^{-\rho_D}\quad\text{for } r\geq r_0,
    \end{equation*}
    is the relative error between $S_D$ and the Pareto tail $c_Dr^{-\alpha_D}$.
    In particular $1+\eta_D(r)>0$, since $S_D>0$, and $\eta_D(r)\to0$ as $r\to\infty$. \\
    
    \emph{Expansion of $T_R$.} For $r$ large enough, $S_Q(r)\in(0,1)$, and $y=T_R(r)$ satisfies $S_P(y)=S_Q(r)$ (see the remark following (S1)-(S3)).
    Since $S_Q(r)\to0$ and $S_P$ is positive and nonincreasing, this forces $T_R(r)\to\infty$ as $r\to\infty$.
    Writing (H) on both sides of $S_P(y)=S_Q(r)$ gives $c_Py^{-\alpha_P}(1+\eta_P(y))=c_Qr^{-\alpha_Q}(1+\eta_Q(r))$.
    Taking logarithms and multiplying by $\gamma_P=1/\alpha_P$, using $\gamma_P\alpha_Q=\beta$, we get
    $\log T_R(r)=\log C+\beta\log r+\varepsilon(r)$ with
    \begin{equation*}
        \varepsilon(r) = \gamma_P\Big[\log\big(1+\eta_P(T_R(r))\big)-\log\big(1+\eta_Q(r)\big)\Big].
    \end{equation*}
    As $r\to\infty$, both arguments $T_R(r)$ and $r$ tend to infinity, so $\eta_P(T_R(r))\to0$ and $\eta_Q(r)\to0$ by (H).
    Hence, $\varepsilon(r)\to0$. \\
    
    \emph{Rate.} Since $\varepsilon(r)\to0$, $e^{\varepsilon(r)}\geq1/2$, \ie $T_R(r)=Cr^\beta e^{\varepsilon(r)}\geq Cr^\beta/2$, for $r$ large.
    Take moreover $r$ large enough so that $r\geq r_0$, $T_R(r)\geq r_0$, $KT_R(r)^{-\rho_P}\leq1/2$ and $Kr^{-\rho_Q}\leq1/2$.
    Then $|\eta_P(T_R(r))|,|\eta_Q(r)|\leq1/2$, and $|\log(1+x)|\leq2|x|$ for $|x|\leq1/2$ gives
    \begin{equation*}
        |\varepsilon(r)| \leq2\gamma_PK\big(T_R(r)^{-\rho_P}+r^{-\rho_Q}\big) \leq2\gamma_PK\big((C/2)^{-\rho_P}r^{-\beta\rho_P}+r^{-\rho_Q}\big) = \mathrm O\big(r^{-\beta\rho_P}+r^{-\rho_Q}\big).
    \end{equation*}
    \emph{Approximation bias.} On the other hand, $L_n=\log(c_P/c_Q)+(\alpha_Q-\alpha_P)\log u_n+\ell_n$ with $\ell_n=\log(1+\eta_P(u_n))-\log(1+\eta_Q(u_n))=\mathrm O(u_n^{-\rho_P}+u_n^{-\rho_Q})$, so that, using $\gamma_P\alpha_P=1$,
    \begin{equation*}
        \log (T^\circ_n(r))= \log u_n+\gamma_PL_n+\beta\log\Big(\frac r{u_n}\Big) = \log C+\beta\log r+\gamma_P\ell_n .
    \end{equation*}
    Therefore $b_n(r)=\varepsilon(r)-\gamma_P\ell_n$, and taking the supremum over $r\geq u_n$ gives $\sup_{r\geq u_n}|b_n(r)|=O\big(u_n^{-\beta\rho_P}+u_n^{-\rho_Q}+u_n^{-\rho_P}\big) =\mathrm O(u_n^{-\rho_\star})$.
    Hence the result.
\end{proof}

\subsection{The exact Pareto case}

Before proving \autoref{thm:consistency}, we treat the well-specified case where both radial tails are exactly Pareto above a fixed threshold.
This result is not stated in the main text, where it is only mentioned after \autoref{thm:consistency}: it isolates the purely statistical error, since both bias terms vanish, and yields the parametric rate together with
almost sure and absolute consistency.
Its proof also establishes the error decomposition~\autoref{eq:decomp} on which the proof of the theorem relies.

\begin{proposition}[Exact Pareto tails]\label{prop:pareto}
    Assume (S1)--(S3) and (P) for $P$ and $Q$, and $n_P,n_Q\to\infty$; let $\underline n=\min(n_P,n_Q)$, $u_\star$ as in \autoref{lem:approx}(i), and $(r_n)$ a sequence with $r_n>u_\star$.
    Then
    \begin{equation}\label{eq:rateA}
        \sup_{u_\star< r\leq r_n}|\Delta_n(r)| = O_P\left(\underline n^{-1/2}\Big(1+\log\frac{r_n}{u}\Big)\right).
    \end{equation}
    In particular:
    \begin{enumerate}
        \item[(i)] $\widehat T_R(r)\to T_R(r)$ almost surely for every $r> u_\star$, and $\sup_{r\in[u,M]}|\widehat T_R(r)-T^\circ(r)|\to0$ almost surely for every $M>u$;
        \item[(ii)] \emph{Relative uniform consistency} $\sup_{u_\star< r\leq r_n}\big|\widehat T_R(r)/T_R(r)-1\big|\overset{P}{\longrightarrow}0$ as soon as $\log r_n=o(\underline n^{1/2})$;
        \item[(iii)] \emph{Absolute uniform consistency} $\sup_{u_\star< r\leq r_n}\big|\widehat T_R(r)-T_R(r)\big|\overset{P}{\longrightarrow}0$ as soon as $r_n^{\beta}\,\big(1+\log(r_n/u)\big)=o(\underline n^{1/2})$.
    \end{enumerate}
\end{proposition}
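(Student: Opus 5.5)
The plan is to establish the error decomposition announced before the technical lemmas (the one referred to as \eqref{eq:decomp}/\eqref{eq:master}), and then feed it the parameter rates of \autoref{lem:param} together with the vanishing bias of \autoref{lem:approx}(i).

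\textbf{Decomposition.} On $E_n$, take logarithms in \eqref{eq:That} and subtract $\log T^\circ(r)$. Writing $\widehat L_n=\log(\widehat\pi_P/\widehat\pi_Q)=L_n+\delta_P-\delta_Q$, this gives
\begin{equation*}
\log\widehat T_R(r)-\log T^\circ(r)=(\widehat\gamma_P-\gamma_P)L_n+\widehat\gamma_P(\delta_P-\delta_Q)+(\widehat\beta-\beta)\log\frac{r}{u}.
\end{equation*}
Adding $-b(r)$ produces $\Delta_n(r)$. For $r\in[u,r_n]$ we have $0\le\log(r/u)\le\log(r_n/u)$, which yields the displayed master bound with the sup of $|b|$ as the deterministic term. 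Under (P), \autoref{lem:approx}(i) gives $b\equiv0$ on $(u_\star,\infty)$, so on $I=(u_\star,r_n]$ only the stochastic part remains. Moreover $L_n=\log(\pi_P/\pi_Q)$ is a fixed constant.

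\textbf{Rates.} \autoref{lem:bias}(i) gives $B_D=0$ and $\nu_D=2\gamma_D^2$, which is bounded. Since $m_D=n_D\pi_D\asymp n_D$, and $n_D\ge\underline n$, \autoref{lem:param} gives $\mathbb P(E_n)\to1$, $\widehat\gamma_D-\gamma_D=O_P(\underline n^{-1/2})$, $\delta_D=O_P(\underline n^{-1/2})$, $\widehat\gamma_P=O_P(1)$ and $|\widehat\beta-\beta|=O_P(\underline n^{-1/2})$. Plugging these in gives \eqref{eq:rateA}; the $E_n^c$ event is negligible.

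\textbf{Consequences.} For (i), the strong law of large numbers gives $\widehat\pi_D\to\pi_D$ almost surely. It also gives $\frac1{n_D}\sum\log(R_i/u)\mathbf 1\{R_i>u\}\to\pi_D\gamma_D$ almost surely, so $\widehat\gamma_D\to\gamma_D$ almost surely. This needs each $n_D$ to be a growing prefix of one i.i.d.\ sequence; I would state that coupling explicitly, since almost-sure statements require it. Continuity of $(\pi,\gamma)\mapsto\widehat T_R(r)$ then gives pointwise convergence, with limit $T^\circ(r)=T_R(r)$ for $r>u_\star$. Uniformity on $[u,M]$ follows because the log error is bounded by a constant times $(1+\log(M/u))$ times the parameter errors, and $\widehat T_R$ is bounded on $[u,M]$.

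For (ii), use $|e^x-1|\le|x|e^{|x|}$ together with $\sup|\Delta_n|\to0$ in probability, which holds when $\log r_n=o(\underline n^{1/2})$.

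For (iii), write $|\widehat T_R-T_R|=T_R\,|e^{\Delta_n}-1|$ and use $T_R(r)\le Cr_n^\beta$ on $I$. The bound then follows from the rate \eqref{eq:rateA} under the stated condition, provided $\sup|\Delta_n|\to0$ in probability. That last requirement is implied by the condition because $r_n^\beta\ge u_\star^\beta>0$.

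The main obstacle is the almost-sure part (i), specifically the sample-coupling subtlety and the uniformity. Everything else is bookkeeping.
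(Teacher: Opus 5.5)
Your proposal is correct and follows essentially the same route as the paper. It uses the same decomposition \eqref{eq:decomp}/\eqref{eq:master} with $b\equiv 0$ on $(u_\star,r_n]$, the same rates from \autoref{lem:bias}(i) and \autoref{lem:param}, and the same $|e^x-1|\le|x|e^{|x|}$ and $T_R(r_n)=O(r_n^\beta)$ arguments for (ii)--(iii). The differences are minor: you obtain uniformity on $[u,M]$ in (i) from an explicit bound on the log error rather than from joint continuity plus compactness. You also make explicit the nested-sample coupling that the paper's almost-sure SLLN step tacitly assumes, which is a worthwhile clarification.
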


\emph{Error decomposition.} We first decompose $\Delta_n$.
On $E_n$, since $\log(\widehat\pi_P)-\log(\widehat\pi_Q)=L_n+\delta_P-\delta_Q$, we have, for
every $r\geq u$,
\begin{equation*}
    \log(\widehat T_R(r))=\log u+\widehat\gamma_P\,(L_n+\delta_P-\delta_Q)+\widehat\beta\log\frac ru, \quad\text{and}\quad \log( T^\circ_n(r))=\log u+\gamma_P L_n+\beta\log\Big(\frac{r}{u}\big).
\end{equation*}
Using that $\log(T^\circ_n(r))-\log (T_R(r))=-b_n(r)$, we get, on $E_n$ and for every $r\geq u$,
\begin{equation}\label{eq:decomp}
    \Delta_n(r) = (\widehat\gamma_P-\gamma_P)\,L_n
    + \widehat\gamma_P(\delta_P-\delta_Q)
    + (\widehat\beta-\beta)\log\Big(\frac{r}{u}\Big) - b_n(r).
\end{equation}
Consequently, since $0\leq\log(r/u)\leq\log(r_n/u)$ for $r\in[u,r_n]$, for every $r_n\geq u$ and every set $I\subset[u,r_n]$, on $E_n$,
\begin{equation}\label{eq:master}
    \sup_{r\in I}|\Delta_n(r)| \leq |\widehat\gamma_P-\gamma_P|\,|L_n|
    + \widehat\gamma_P\big(|\delta_P|+|\delta_Q|\big)
    + |\widehat\beta-\beta|\log\frac{r_n}{u} + \sup_{r\in I}|b_n(r)|.
\end{equation}

\emph{Rate \autoref{eq:rateA}.} By \autoref{lem:bias}(i), $B_D=0$ and $\nu_D=2\gamma_D^2$, while
$m_D=n_D\pi_D\geq\pi_D\,\underline n\to\infty$.
\autoref{lem:param} thus applies and gives $\widehat\gamma_D-\gamma_D$, $\delta_D$ and $\widehat\beta-\beta$ all of order $O_P(\underline n^{-1/2})$, with $\widehat\gamma_P=O_P(1)$.
Moreover $L_n=\log(\pi_P/\pi_Q)$ is constant and $b(r)=0$ for $r>u_\star$ (\autoref{lem:approx}(i)).
Plugging these estimates into \autoref{eq:master} with $I=(u_\star,r_n]$ gives \autoref{eq:rateA}.

(i) Let $Y_i^D:=\log(R_i^D/u)\mathbf 1_{\{R_i^D>u\}}$.
By the strong law of large numbers, $\widehat\pi_D\to\pi_D$ and $n_D^{-1}\sum_iY^D_i\to\pi_D\gamma_D$ almost surely, so $\widehat\gamma_D\to\gamma_D$ almost surely by the continuous mapping theorem ($(x,y)\mapsto x/y$ is continuous at $(\pi_D\gamma_D,\pi_D)$).
Set $\theta=(\gamma_P,\gamma_Q,\pi_P,\pi_Q)\in(0,\infty)^4$ (resp. $\widehat\theta=(\widehat\gamma_P,\widehat\gamma_Q,\widehat\pi_P,\widehat\pi_Q)$) and $\Phi(\theta;r)=u(\pi_P/\pi_Q)^{\gamma_P}(r/u)^{\gamma_P/\gamma_Q}$, so that we have $T^\circ=\Phi(\theta;\cdot)$ (resp. $\widehat T_R=\Phi(\widehat\theta;\cdot)$).
Since $\Phi$ is jointly continuous on $(0,\infty)^4\times[u,M]$ and $[u,M]$ is compact, the map $\theta\mapsto\Phi(\theta;\cdot)$ is continuous from $(0,\infty)^4$ into $\big(C([u,M]),\|\cdot\|_\infty\big)$.
The continuous mapping theorem then gives $\|\widehat T_R-T^\circ\|_{\infty,[u,M]}\to0$ almost surely; pointwise convergence to $T_R(r)$ for $r>u_\star$ follows from $T^\circ=T_R$ there.

(ii) If $\log r_n=\mathrm o(\underline n^{1/2})$, the right-hand side of \autoref{eq:rateA} is $\mathrm o_P(1)$.
Then $\sup|\widehat T_R/T_R-1|=\sup|e^{\Delta_n}-1|\leq Z_ne^{Z_n}$ with $Z_n=\sup|\Delta_n|\overset{P}{\to}0$.

(iii) Since $T_R$ is nondecreasing, $\sup_{u_\star< r\leq r_n}|\widehat T_R(r)-T_R(r)| =\sup T_R|e^{\Delta_n}-1|\leq T_R(r_n)\,Z_ne^{Z_n}$, where $T_R(r_n)=T^\circ(r_n)=\mathrm O(r_n^\beta)$ and $Z_n=\mathrm O_P\big(\underline n^{-1/2}(1+\log(r_n/u))\big)$ by \autoref{eq:rateA}.
The product is $\mathrm o_P(1)$ under the stated condition, since $e^{Z_n}=\mathrm O_P(1)$.

\subsection{Proof of \autoref{thm:consistency}} Since $u_n\to\infty$, we may assume $u_n\geq\max(r_0,1)$ and
$Ku_n^{-\rho}\leq1/2$. \autoref{lem:bias}(ii) then gives $|B_D|=O(u_n^{-\rho})$ and $\nu_D\leq6\gamma_D^2$, so \autoref{lem:param} applies: $|\widehat\gamma_P-\gamma_P|+|\widehat\beta-\beta| =\mathrm O_P(m_P^{-1/2}+m_Q^{-1/2}+u_n^{-\rho})$ and $\widehat\gamma_P(|\delta_P|+|\delta_Q|)=\mathrm O_P(m_P^{-1/2}+m_Q^{-1/2})$.
By the proof of \autoref{lem:approx}(ii), $|L_n|\leq|\log(c_P/c_Q)|+|\alpha_Q-\alpha_P|\log u_n+1=O(1+\log r_n)$ for $n$ large, and $\log(r_n/u_n)\leq\log r_n$.
Finally, $\sup_{r\geq u_n}|b_n(r)|=O(u_n^{-\rho_\star})$ by \autoref{lem:approx}(ii).
The decomposition \autoref{eq:decomp}, and hence the bound \autoref{eq:master}, established in the proof of
\autoref{prop:pareto} without using (P), hold here with $u=u_n$.
Plugging the above estimates into \autoref{eq:master} with $I=[u_n,r_n]$ gives \autoref{eq:rate}.
The relative consistency follows as in \autoref{prop:pareto}(ii), noting that $u_n^{-\rho_\star}\to0$.

\section{Experimental Details}
\label{sec:supp-exp-details}

\subsection{Datasets and preprocessing}

The isotropic synthetic target is a symmetric \(150\)-dimensional \(\alpha\)-stable law with \(\alpha=1.7\).
The second synthetic target is a \(150\)-dimensional, \(16\)-component stable mixture with \(\alpha=1.7\), rank-\(6\) center structure, imbalance parameter \(1.2\), mean scale \(7.5\), base scale \(0.55\), and fixed structure seed \(0\).
The HRRR datasets contain hourly precipitation in millimeters at \(150\) fixed eastern or western US locations~\citep{dowell2022hrrr}.
The wildfire dataset contains positive final fire sizes in acres from the USDA FPA-FOD records for 1992--2024~\citep{short2026fpa}.
Tabular models standardize each feature using training-split statistics; generated samples are transformed back to their original units before correction and evaluation.
Sample and evaluation counts are given in \autoref{tab:experiment-settings}.

\subsection{Reference models and sampling baselines}

For each dataset, DDPM, DLPM, and \(t\)-EDM are trained independently using \(25{,}000\) training samples and \(1{,}000\) validation samples.
DDPM~\citep{ho2020denoising} uses \(v\)-prediction and \(128\) sampling steps. Writing \(x_t=a_t x_0+b_t\epsilon\), with \(\epsilon\sim\mathcal N(0,I)\), its prediction target is \(v_t=a_t\epsilon-b_t x_0\).
DDIM~\citep{song2021denoising} reuses the trained DDPM checkpoint with deterministic sampling (\(\eta=0\)) and is therefore not trained separately.
DLPM~\citep{shariatian2025dlpm} uses \(\epsilon\)-prediction of the injected \(\alpha\)-stable perturbation with stability index \(\alpha=1.8\) and \(128\) sampling steps.
Finally, \(t\)-EDM~\citep{pandey2025heavytailed} uses Student-\(t\) perturbations with \(\nu=2.2\) degrees of freedom and a \(64\)-step stochastic Heun sampler.

Synthetic checkpoints use a time-conditioned MLP with width \(240\), depth \(4\), and time-embedding dimension \(128\); tabular checkpoints use width \(128\), depth \(3\), and time-embedding dimension \(64\).
Both use normalized network blocks and no dropout.
Models are trained using AdamW with weight decay \(10^{-6}\), cosine learning-rate decay, gradient clipping at \(10\), and early stopping with patience \(100\) for at most \(640\) epochs.
Training batch sizes are \(1{,}024\) for synthetic data and \(512\) for tabular data.

Hyperparameters are selected in a \(96\)-epoch pilot using mean validation loss over three trials for each synthetic dataset and two for each real dataset.
The learning rate is selected from \(\{10^{-4},2\cdot10^{-4},5\cdot10^{-4}\}\) for synthetic data and \(\{2\cdot10^{-4},5\cdot10^{-4}\}\) for real data.
For DDPM, the noise scale \(\sigma_{\max}\) is selected from \(\{2,5,8\}\) and \(\{2,5\}\), respectively.
A fresh model is then trained with the selected hyperparameters, retaining the checkpoint with the best validation loss.

\begin{table}[t]
  \centering
  \small
  \begin{tabular}{@{}lll@{}}
    \toprule
    Method & Predictor / noise & Sampling configuration \\
    \midrule
    DDPM & \(v\)-prediction & Stochastic DDPM, \(128\) steps; pilot-selected \(\sigma_{\max}\) \\
    DDIM & Same DDPM \(v\)-prediction weights & Deterministic DDIM, \(\eta=0\) \\
    DLPM & \(\epsilon\)-prediction & Native sampler, \(128\) steps, \(\alpha=1.8\) \\
    \(t\)-EDM & Student-\(t\) perturbations & Stochastic Heun, \(64\) steps, \(\nu=2.2\) \\
    Zero controls & Zeroed DLPM/\(t\)-EDM networks & Corresponding sampler settings unchanged \\
    RTR & Post-hoc map on DDPM/DDIM endpoints & Same endpoint samples followed by radial rescaling \\
    Test--test & Two disjoint reference halves & Each half uses half the evaluation sample \\
    \bottomrule
  \end{tabular}
  \caption{Sampling methods used in the benchmark. The \(t\)-EDM schedule additionally uses \(\sigma_{\min}=0.005\), \(\sigma_{\max}=5\), and \(\sigma_{\mathrm{data}}=1\); \(\eta\) denotes the DDIM stochasticity parameter.}
  \label{tab:additional-methods}
\end{table}

The zero controls retain the corresponding noise schedule and sampler while setting all network parameters to zero, thereby isolating tail behavior induced by the prior and sampler alone.
Trained and zero-network \(t\)-EDM endpoints use matched initial latents.
The test--test baseline compares two disjoint halves of the reference evaluation sample.
All trained checkpoints are frozen before RTR is fitted; no retraining or fine-tuning of the underlying generator is performed.

\subsection{Radial Tail Repair fitting}

For each trial, RTR uses \(5{,}000\) reference samples and \(5{,}000\) generated samples for each corrected model.
Synthetic reference samples are freshly drawn, while tabular reference samples are drawn without replacement from the training split.
The radial--angular independence scan evaluates \(30\) candidate tail fractions from \(\max(0.01,N_{\min}/N)\) to \(0.30\), where \(N=5{,}000\) is the fitting-sample size and \(N_{\min}=300\) is the minimum number of exceedances.

At each candidate threshold, we use the mean of \(20\) permutation-test \(p\)-values as the radial--angular independence diagnostic.
Each test uses \(100\) random permutations and distance covariance between log-radius distances and unit-direction distances.
For a candidate tail fraction \(f\), the diagnostic uses a subsample of size \(\min\{n_{\mathrm{tail}},\max(20,\mathrm{round}(500f))\}\), where \(n_{\mathrm{tail}}\) is the number of observations above the candidate threshold.
A cutoff of \(0.4\) selects the onset.

A Hill estimate and the empirical exceedance fraction define the fitted Pareto survival above the onset. RTR uses the larger of the reference and generated onsets and a smooth gate with \(\kappa=1.25\), applying only positive radial expansion when the fitted reference tail is heavier than the generated tail. For the one-dimensional wildfire data, radial--angular independence is not identifiable, so this experiment is treated as a scalar-tail application.

\subsection{Evaluation metrics and execution}

Evaluation uses \(21\) logarithmically spaced tail probabilities from \(10^{-1}\) to the dataset-specific \(p_{\min}\) in \autoref{tab:experiment-settings}.
Coordinate thresholds are computed from the reference sample, using absolute coordinate values for synthetic data and the original values for tabular data.
We additionally evaluate radial TCE using \(\|x\|_2\), maximum-event TCE using \(\max_j|x_j|\) for synthetic data and \(\max_j x_j\) for tabular data, and the signed maximum-event log-mass ratio. TCE areas are computed by trapezoidal integration over \(\operatorname{logit}(1-p)\).
Full metric definitions are given in~\autoref{sec:supp-metrics}.

Biased RBF-MMD\(^{2}\) is computed using at most \(1{,}000\) reference and generated samples (\(512\) for HRRR).
Its bandwidth is fixed per dataset from reference pairs only, with \(\gamma=1/(2m_2)\), where \(m_2\) is the median positive squared pairwise distance.
Reference thresholds and the MMD bandwidth are computed once per trial and shared across methods.
Synthetic evaluation uses fresh reference draws, while real-data evaluation uses held-out samples.

Endpoint samples are generated in batches of \(4{,}096\); feature-processing block sizes are \(4\) for synthetic data, \(10\) for HRRR, and \(1\) for wildfire.
The final benchmark uses one trial with seed \(0\), because each synthetic dataset has a \(5\)-million-sample evaluation set and repeating the full evaluation would be computationally costly.
Endpoint generation and evaluation are submitted as separate per-dataset SLURM jobs, each using one NVIDIA V100 GPU, one task, eight CPUs, and a two-hour wall-time limit.

\section{Additional Experimental Results}
\label{sec:supp-exp-additional}

We provide additional tail diagnostics and metric summaries complementing the main results.
The HRRR-west coordinate TCE curve is reported in \autoref{fig:hrrr-west-tce}, while \autoref{fig:max-event-curves} and \autoref{fig:max-event-ratios} report maximum-event TCE and signed tail-mass ratios across all datasets.
Aggregate paired changes are given in \autoref{tab:delta-synthetic} and \autoref{tab:delta-real}, with the corresponding absolute metric values in \autoref{tab:values-synthetic} and \autoref{tab:values-real}.
For paired changes, we report corrected minus uncorrected values, so negative values indicate improvement.

\begin{figure}[t]
  \centering
  \includegraphics[width=0.3\linewidth]{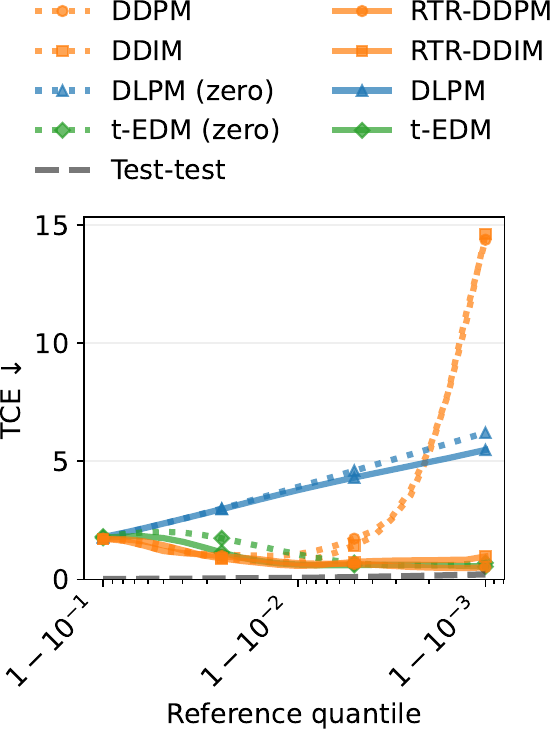}
  \caption{Coordinate TCE for HRRR west.}
  \label{fig:hrrr-west-tce}
\end{figure}

As shown in \autoref{fig:hrrr-west-tce}, HRRR west exhibits the same qualitative behavior as HRRR east: the light-tailed baselines increasingly underrepresent the tail, while RTR substantially reduces the discrepancy.

\begin{figure}[p]
  \centering
  \captionsetup[subfigure]{labelformat=empty,justification=centering}
  \begin{subfigure}[t]{0.3\linewidth}\centering
    \includegraphics[width=\linewidth]{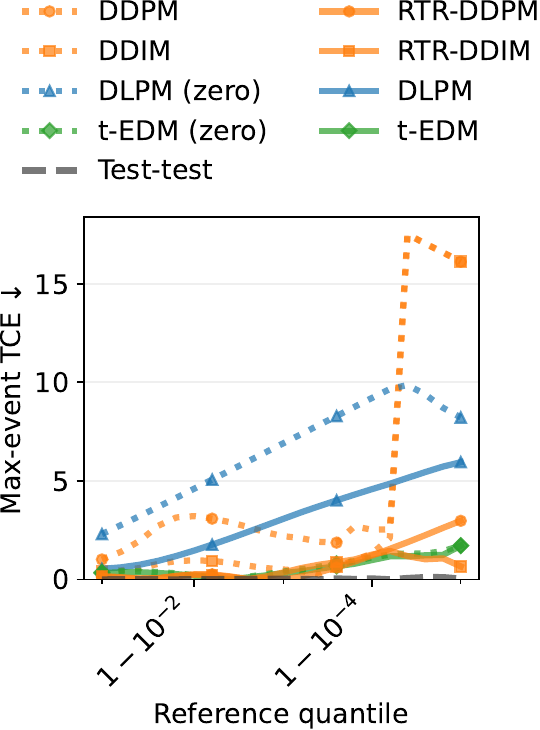}
    \caption{Isotropic stable.}
  \end{subfigure}\hspace{0.03\linewidth}
  \begin{subfigure}[t]{0.3\linewidth}\centering
    \includegraphics[width=\linewidth]{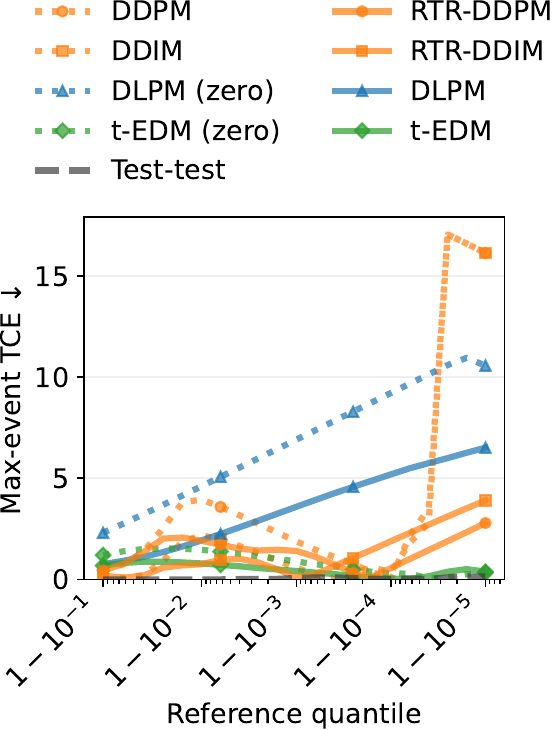}
    \caption{Stable mixture.}
  \end{subfigure}\par\medskip
  \begin{subfigure}[t]{0.3\linewidth}\centering
    \includegraphics[width=\linewidth]{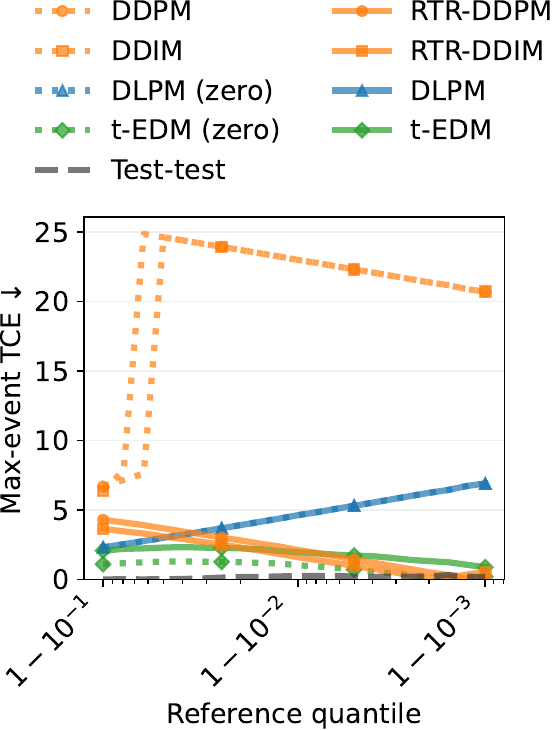}
    \caption{HRRR east.}
  \end{subfigure}\hfill
  \begin{subfigure}[t]{0.3\linewidth}\centering
    \includegraphics[width=\linewidth]{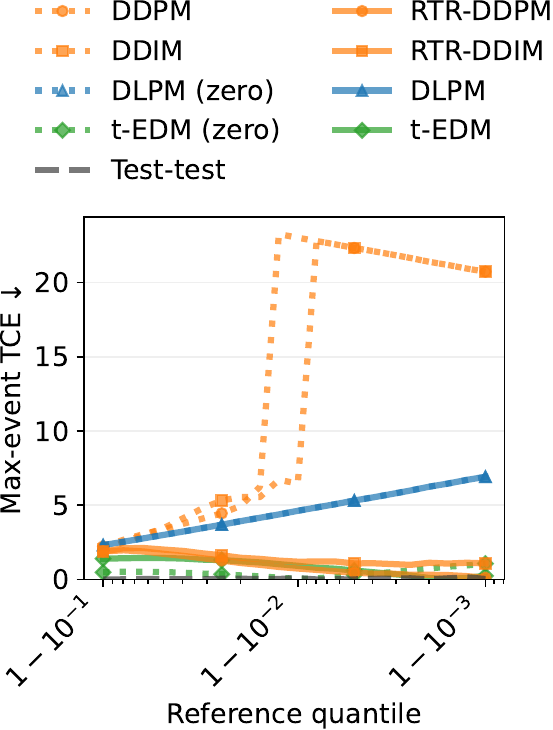}
    \caption{HRRR west.}
  \end{subfigure}\hfill
  \begin{subfigure}[t]{0.3\linewidth}\centering
    \includegraphics[width=\linewidth]{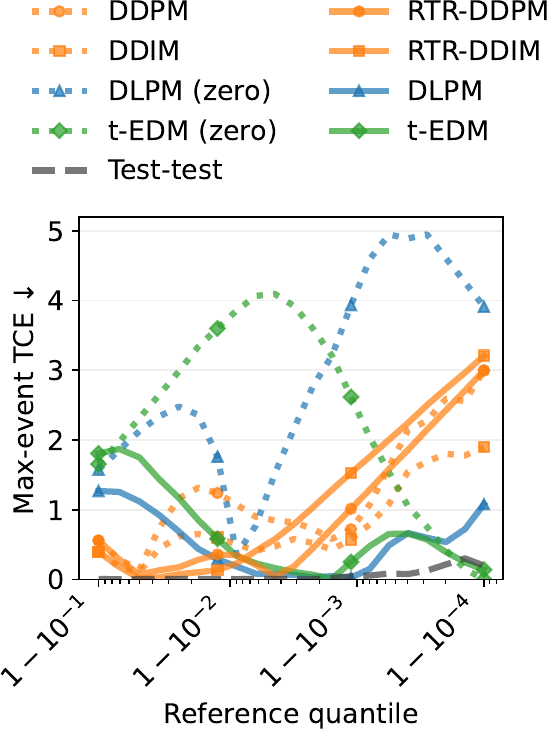}
    \caption{Wildfire.}
  \end{subfigure}
  \caption{Maximum-event TCE across the five datasets. These curves complement coordinate TCE by measuring coverage of joint extreme events.}
  \label{fig:max-event-curves}
\end{figure}

\begin{figure}[p]
  \centering
  \captionsetup[subfigure]{labelformat=empty,justification=centering}
  \begin{subfigure}[t]{0.3\linewidth}\centering
    \includegraphics[width=\linewidth]{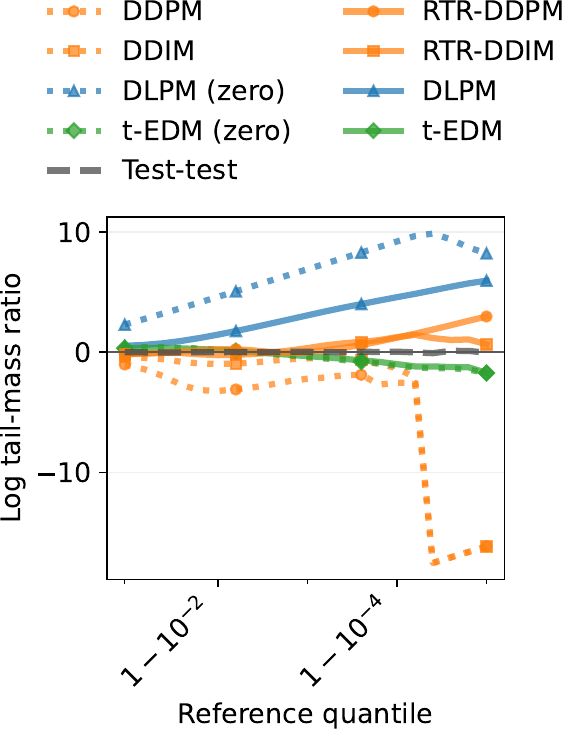}
    \caption{Isotropic stable.}
  \end{subfigure}\hspace{0.03\linewidth}
  \begin{subfigure}[t]{0.3\linewidth}\centering
    \includegraphics[width=\linewidth]{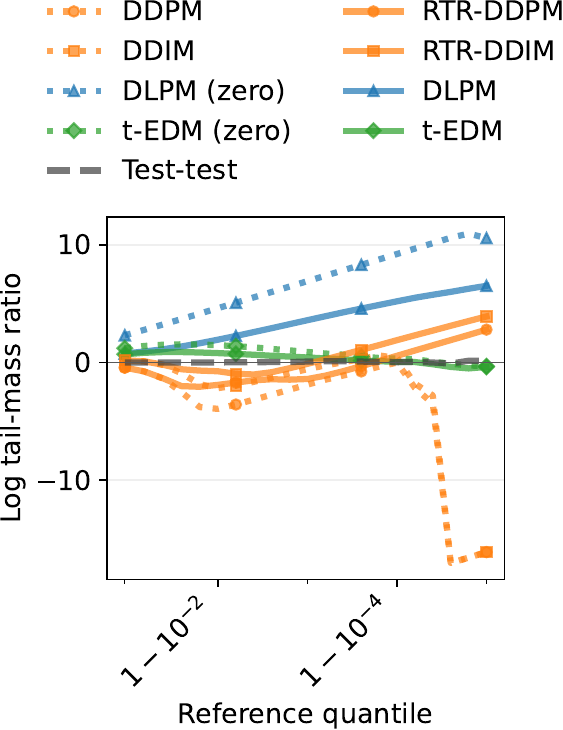}
    \caption{Stable mixture.}
  \end{subfigure}\par\medskip
  \begin{subfigure}[t]{0.3\linewidth}\centering
    \includegraphics[width=\linewidth]{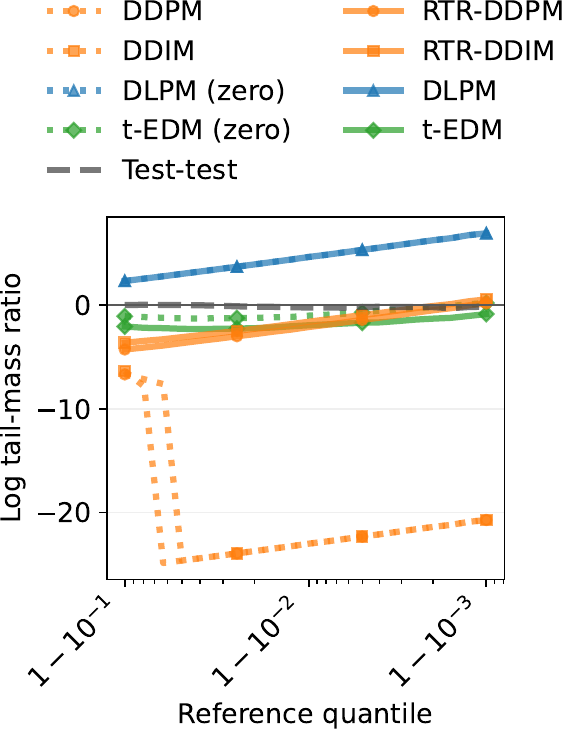}
    \caption{HRRR east.}
  \end{subfigure}\hfill
  \begin{subfigure}[t]{0.3\linewidth}\centering
    \includegraphics[width=\linewidth]{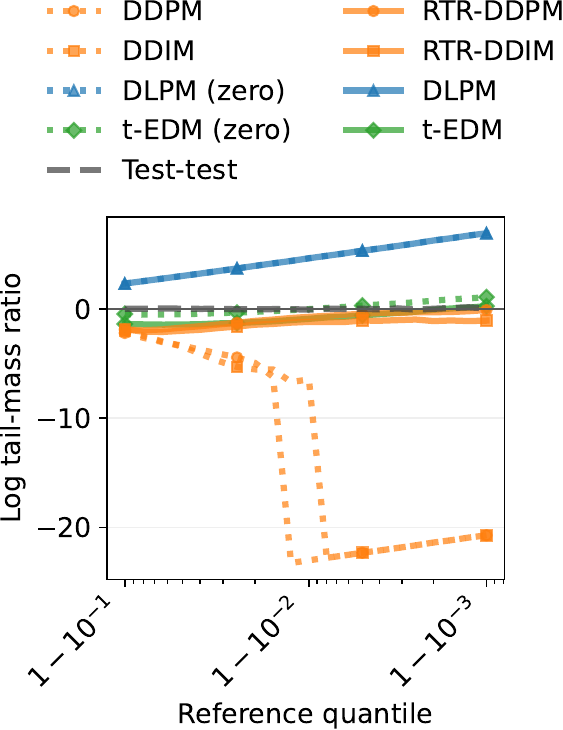}
    \caption{HRRR west.}
  \end{subfigure}\hfill
  \begin{subfigure}[t]{0.3\linewidth}\centering
    \includegraphics[width=\linewidth]{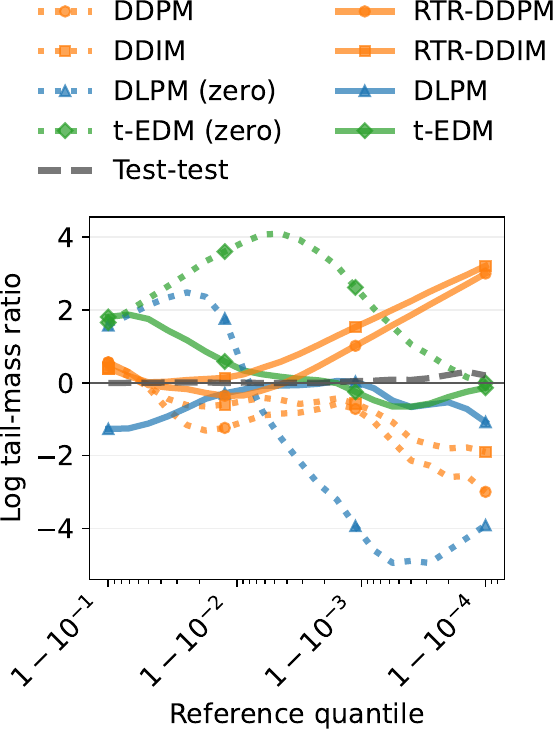}
    \caption{Wildfire.}
  \end{subfigure}
  \caption{Signed maximum-event log tail-mass ratio across the five datasets. Zero indicates calibrated tail mass, while negative and positive values indicate under- and overcoverage, respectively.}
  \label{fig:max-event-ratios}
\end{figure}

The maximum-event results in \autoref{fig:max-event-curves} follow the same broad pattern as coordinate TCE, showing that the improvement extends to joint extreme events.
The signed ratios in \autoref{fig:max-event-ratios} additionally distinguish undercoverage from overcoverage, which absolute TCE cannot capture.
In particular, they reveal cases where RTR crosses from missing tail mass to overcorrection, most visibly for wildfire.

\newcommand{\resultdeltabarplot}[1]{\includegraphics[width=0.15\linewidth]{figures/#1}}
\newcommand{\resultvaluebarplot}[1]{\includegraphics[width=0.25\linewidth]{figures/#1}}

\begin{table}[p]
  \centering
  \small
  \resizebox{0.65\linewidth}{!}{%
    \begin{tabular}{@{}lcccc@{}}
      \toprule
      Dataset & \shortstack{Coordinate\\TCE area} & \shortstack{Radial\\TCE area} & \shortstack{Max-event\\TCE area} & MMD$^2$ \\
      \midrule
      Isotropic stable & \resultdeltabarplot{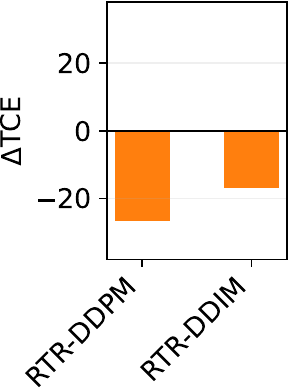} & \resultdeltabarplot{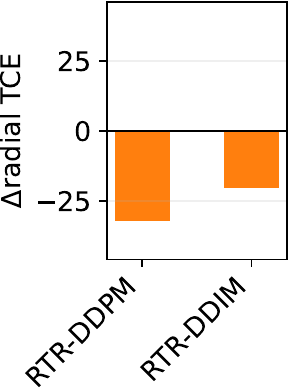} & \resultdeltabarplot{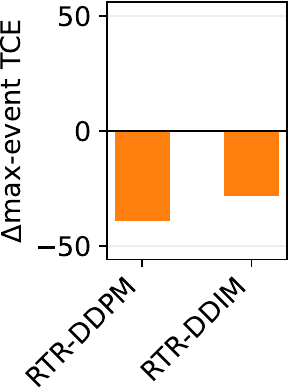} & \resultdeltabarplot{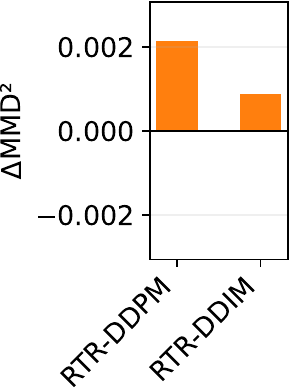} \\
      Stable mixture & \resultdeltabarplot{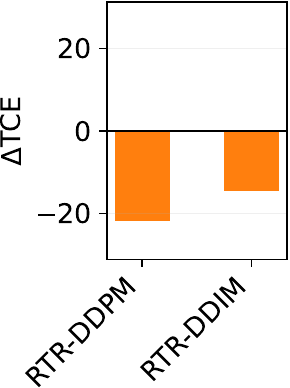} & \resultdeltabarplot{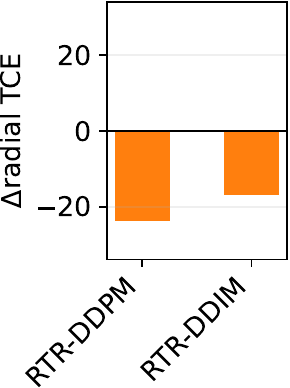} & \resultdeltabarplot{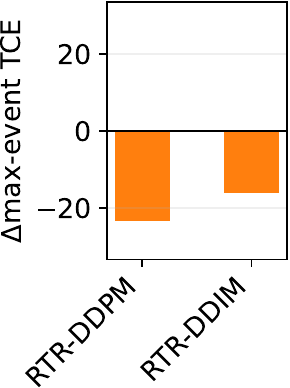} & \resultdeltabarplot{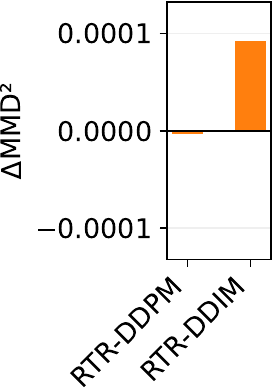} \\
      \bottomrule
    \end{tabular}%
  }
  \caption{Paired metric changes on the synthetic datasets, computed as RTR minus the corresponding uncorrected DDPM or DDIM value. Negative values indicate improvement.}
  \label{tab:delta-synthetic}
\end{table}

\begin{table}[p]
  \centering
  \small
  \resizebox{0.65\linewidth}{!}{%
    \begin{tabular}{@{}lcccc@{}}
      \toprule
      Dataset & \shortstack{Coordinate\\TCE area} & \shortstack{Radial\\TCE area} & \shortstack{Max-event\\TCE area} & MMD$^2$ \\
      \midrule
      HRRR east & \resultdeltabarplot{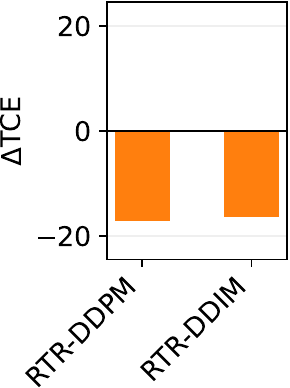} & \resultdeltabarplot{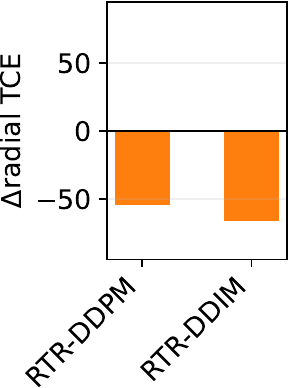} & \resultdeltabarplot{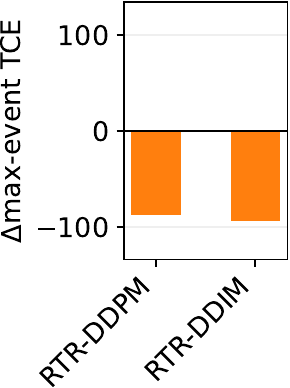} & \resultdeltabarplot{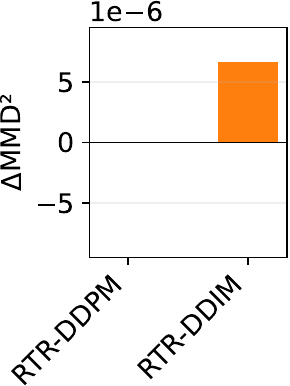} \\
      HRRR west & \resultdeltabarplot{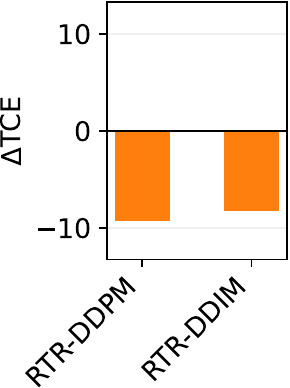} & \resultdeltabarplot{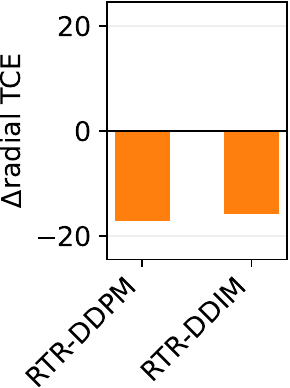} & \resultdeltabarplot{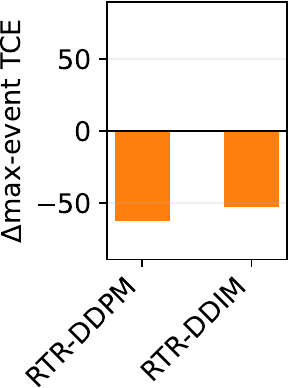} & \resultdeltabarplot{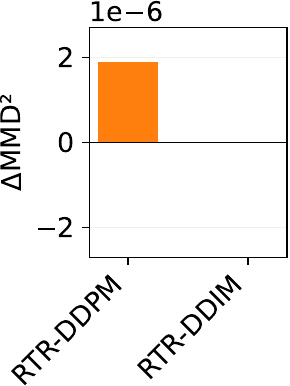} \\
      Wildfire & \resultdeltabarplot{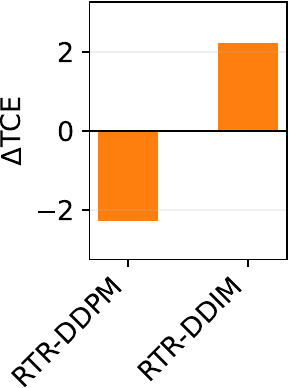} & \resultdeltabarplot{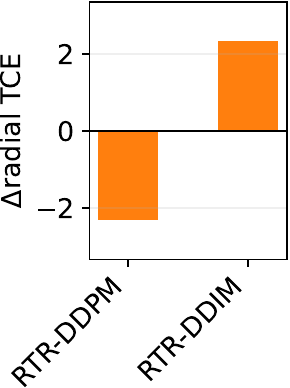} & \resultdeltabarplot{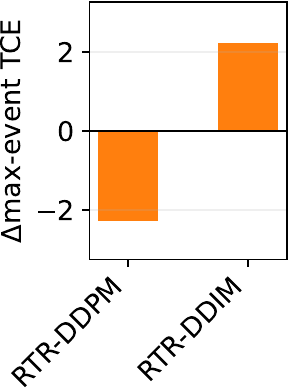} & \resultdeltabarplot{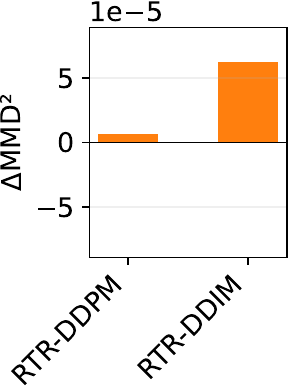} \\
      \bottomrule
    \end{tabular}%
  }
  \caption{Paired metric changes on the real datasets, computed as RTR minus the corresponding uncorrected DDPM or DDIM value. Negative values indicate improvement.}
  \label{tab:delta-real}
\end{table}

The paired changes in \autoref{tab:delta-synthetic} and \autoref{tab:delta-real} quantify the effect of RTR on exactly the same generated endpoints as the corresponding uncorrected model.

\begin{table}[p]
  \centering
  \small
  \resizebox{0.9\linewidth}{!}{%
    \begin{tabular}{@{}lcccc@{}}
      \toprule
      Dataset & \shortstack{Coordinate\\TCE area} & \shortstack{Radial\\TCE area} & \shortstack{Max-event\\TCE area} & MMD$^2$ \\
      \midrule
      Isotropic stable & \shortstack{\autoref{fig:iso-area}} & \resultvaluebarplot{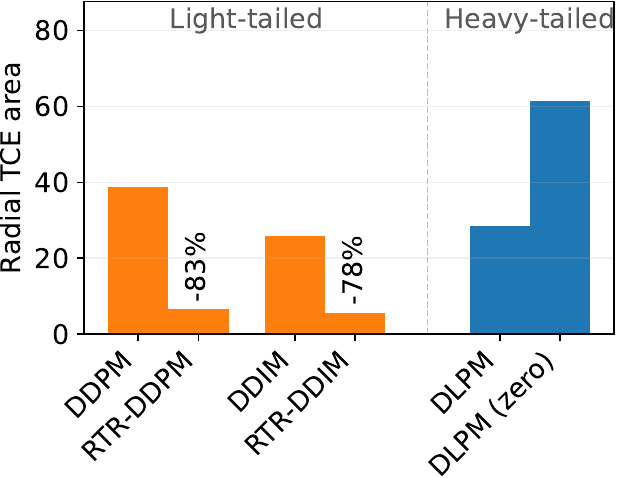} & \resultvaluebarplot{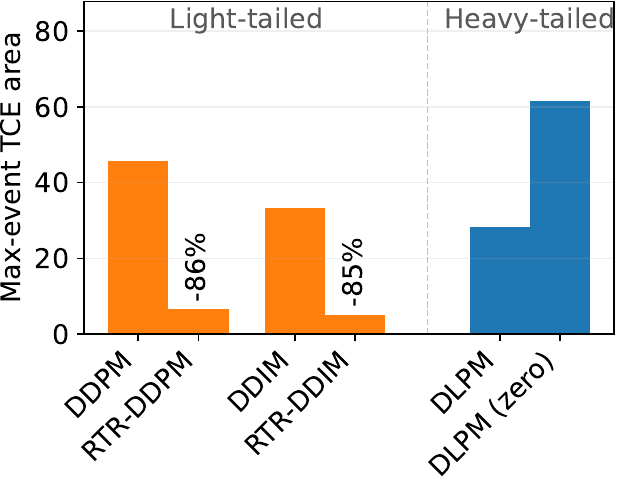} & \shortstack{\autoref{fig:iso-mmd}} \\
      Stable mixture & \shortstack{\autoref{fig:mixture-area}} & \resultvaluebarplot{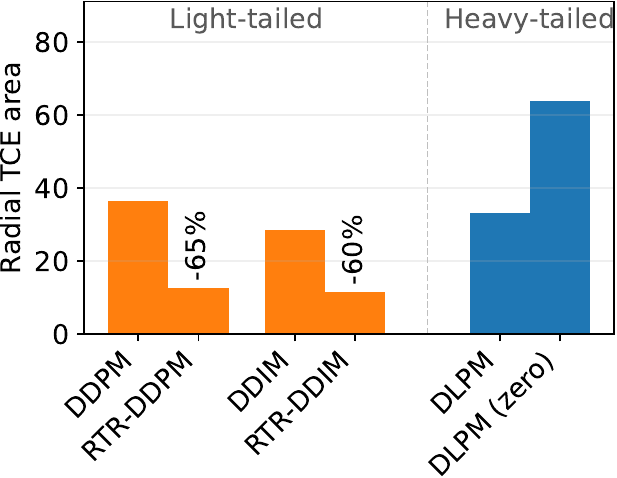} & \resultvaluebarplot{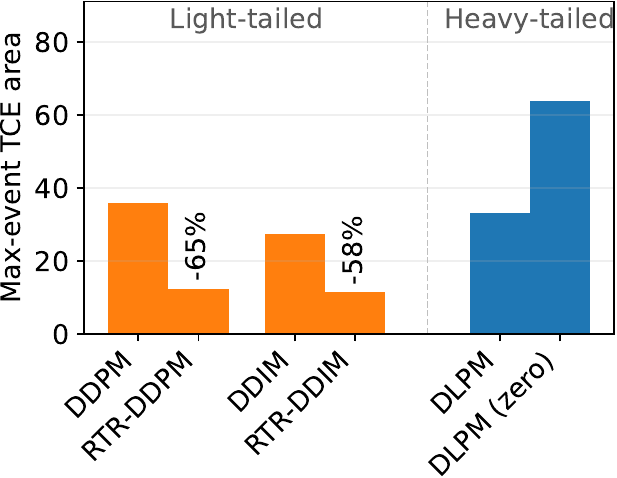} & \shortstack{\autoref{fig:mixture-mmd}} \\
      \bottomrule
    \end{tabular}%
  }
  \caption{Absolute metric values on the synthetic datasets. Coordinate TCE area and MMD$^2$ plots shown in the main text are referenced directly.}
  \label{tab:values-synthetic}
\end{table}

\begin{table}[p]
  \centering
  \small
  \resizebox{0.9\linewidth}{!}{%
    \begin{tabular}{@{}lcccc@{}}
      \toprule
      Dataset & \shortstack{Coordinate\\TCE area} & \shortstack{Radial\\TCE area} & \shortstack{Max-event\\TCE area} & MMD$^2$ \\
      \midrule
      HRRR east & \resultvaluebarplot{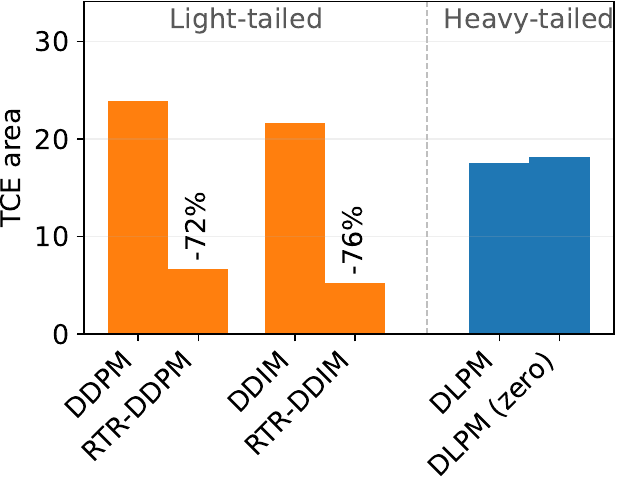} & \resultvaluebarplot{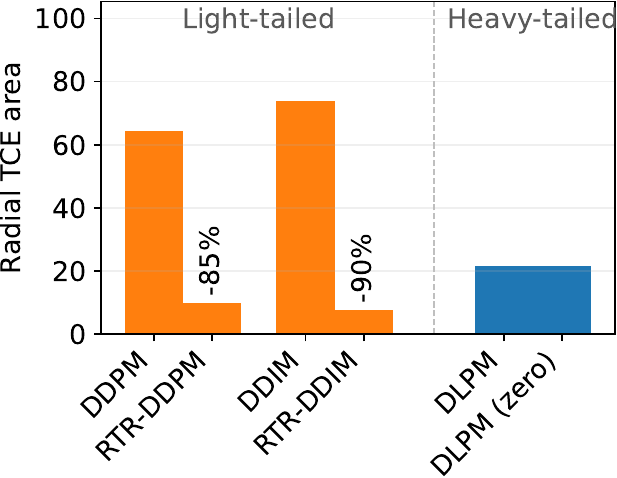} & \resultvaluebarplot{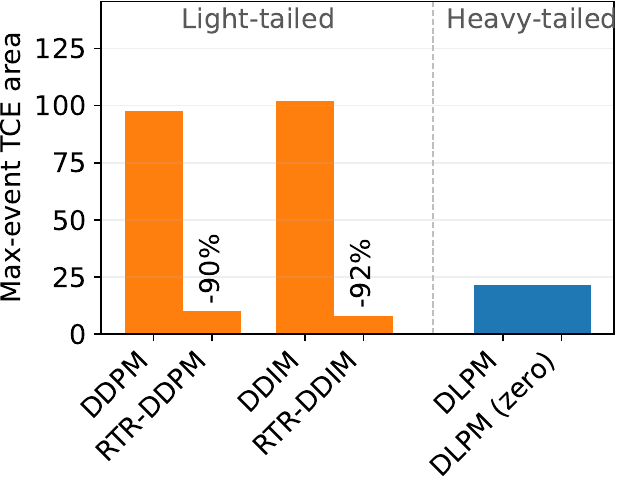} & \resultvaluebarplot{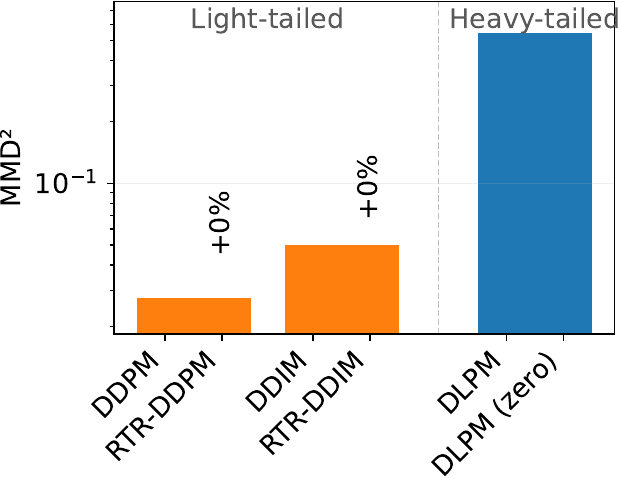} \\
      HRRR west & \resultvaluebarplot{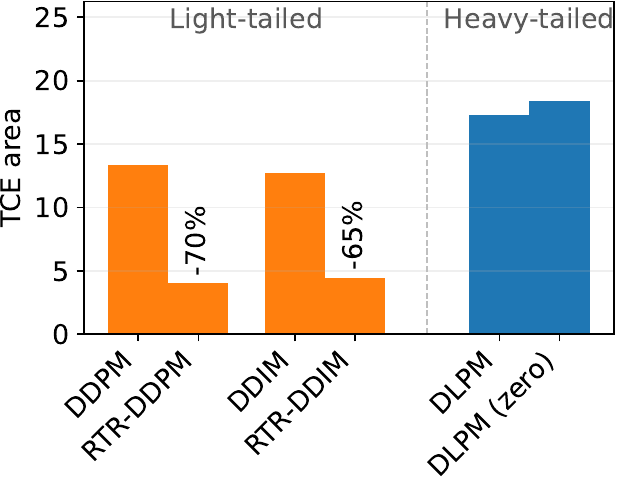} & \resultvaluebarplot{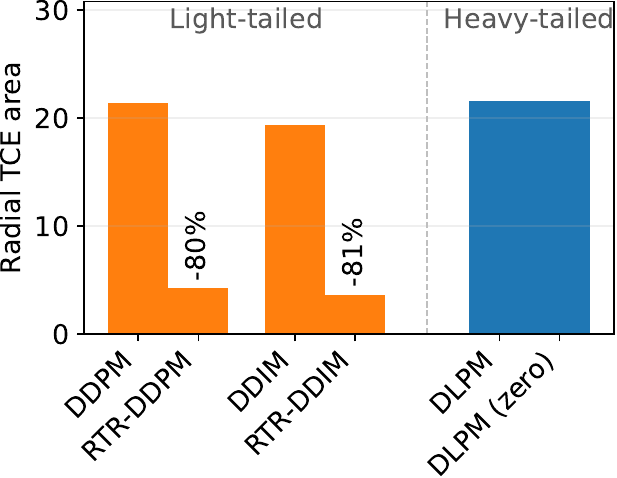} & \resultvaluebarplot{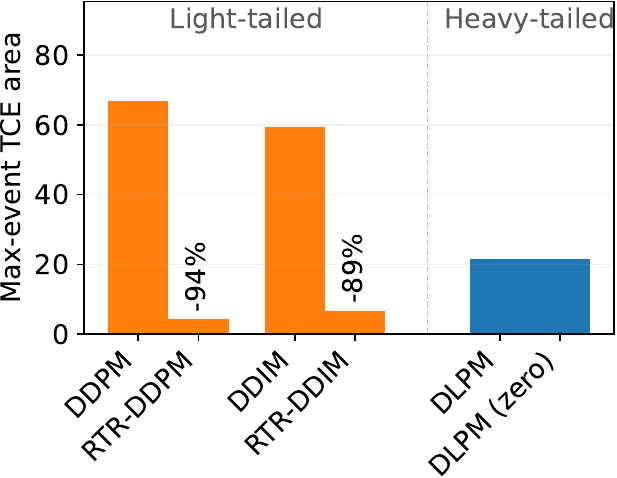} & \resultvaluebarplot{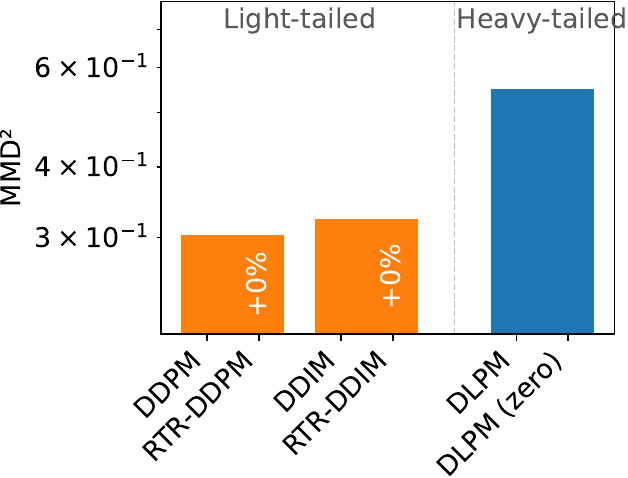} \\
      Wildfire & \resultvaluebarplot{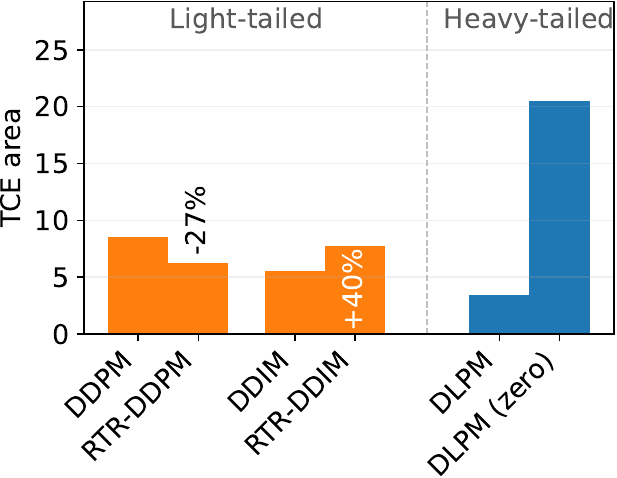} & \resultvaluebarplot{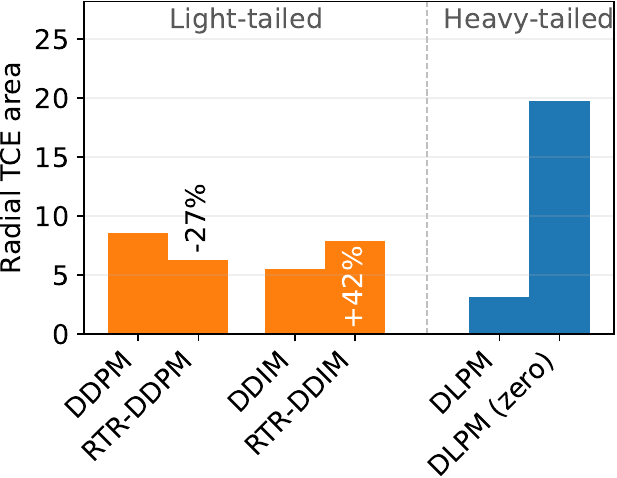} & \resultvaluebarplot{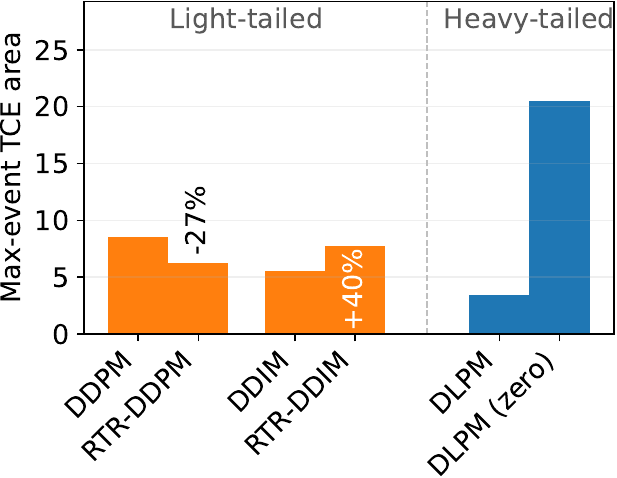} & \resultvaluebarplot{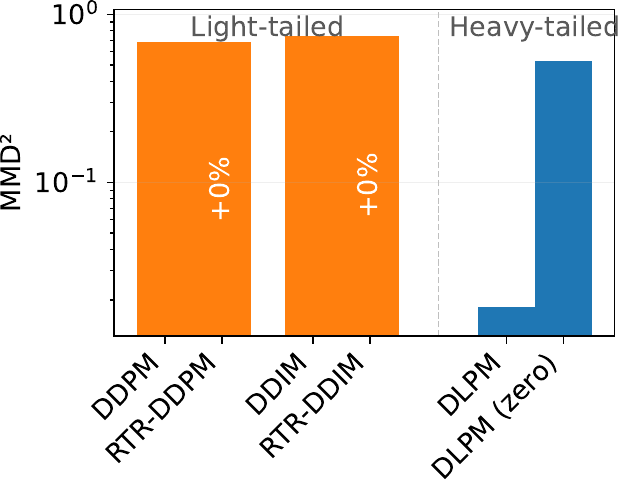} \\
      \bottomrule
    \end{tabular}%
  }
  \caption{Absolute metric values on the real datasets.}
  \label{tab:values-real}
\end{table}

The absolute values in \autoref{tab:values-synthetic} and \autoref{tab:values-real} provide the scale of the corresponding changes.
Across the high-dimensional synthetic and HRRR benchmarks, \autoref{tab:delta-synthetic} and \autoref{tab:delta-real} show consistent reductions in coordinate, radial, and maximum-event TCE, generally with much smaller changes in MMD$^2$.
The wildfire results are less uniform and expose the finite-sample and Pareto-approximation effects discussed in \autoref{sec:discussion}.
\section{Evaluation Metrics}
\label{sec:supp-metrics}

Let \(X=\{x_i\}_{i=1}^{N_P}\) and \(Y=\{y_i\}_{i=1}^{N_Q}\) denote the reference and generated evaluation samples, as in the main text.
Tail coverage is evaluated at \(21\) log-spaced probabilities \(p_1>\cdots>p_{21}\), from \(10^{-1}\) to the dataset-specific \(p_{\min}\) in Table~\ref{tab:experiment-settings}.
For any scalar statistic \(F\), let
\begin{align}
    t_F(p) &= \widehat{\operatorname{Quantile}}_{1-p}\{F(x_i)\}_{i=1}^{N_P},\\
    \widehat S_{D,F}(p) &= \frac{1}{|D|}\sum_{z\in D}\mathbf 1\{F(z)>t_F(p)\},
    \qquad D\in\{X,Y\},\\
    L_F(p) &= \log\frac{\widehat S_{Y,F}(p)+\varepsilon}
    {\widehat S_{X,F}(p)+\varepsilon},
    \qquad \varepsilon=10^{-12}.
\end{align}
Thus, \(L_F(p)=0\) indicates calibrated tail mass, while negative and positive values indicate under- and overcoverage, respectively.

\paragraph{Coordinate, radial, and maximum-event TCE.}
For coordinate \(j\), we use
\[
F_j(x)=
\begin{cases}
|x_j|, & \text{synthetic data},\\
x_j, & \text{tabular data},
\end{cases}
\]
which gives the coordinate TCE reported as TCE in the main text:
\begin{equation}
    \mathrm{TCE}_{\mathrm{coord}}(p)
    = \frac{1}{d}\sum_{j=1}^d |L_{F_j}(p)|.
\end{equation}
To assess multivariate tail behavior beyond individual coordinates, we additionally define
\begin{align}
    F_{\mathrm{rad}}(x) &= \|x\|_2,\\
    F_{\mathrm{max}}(x) &=
    \begin{cases}
        \max_j |x_j|, & \text{synthetic data},\\
        \max_j x_j, & \text{tabular data},
    \end{cases}
\end{align}
and report
\begin{align}
    \mathrm{TCE}_{\mathrm{rad}}(p) &= |L_{F_{\mathrm{rad}}}(p)|,\\
    \mathrm{TCE}_{\mathrm{max}}(p) &= |L_{F_{\mathrm{max}}}(p)|,\\
    B_{\mathrm{max}}(p) &= L_{F_{\mathrm{max}}}(p).
\end{align}
The first two measure absolute radial and maximum-event tail discrepancies.
The signed quantity \(B_{\mathrm{max}}\) additionally distinguishes missing tail mass from excess tail mass.

\paragraph{Integrated TCE.}
For each TCE curve \(E\in\{\mathrm{TCE}_{\mathrm{coord}},
\mathrm{TCE}_{\mathrm{rad}},\mathrm{TCE}_{\mathrm{max}}\}\), we summarize the full tail scan by trapezoidal integration over the reference-quantile logit scale:
\begin{equation}
    \mathcal A_E
    = \sum_{i=1}^{20}
    \frac{E(p_i)+E(p_{i+1})}{2}
    \bigl(u_{i+1}-u_i\bigr),
    \qquad
    u_i=\operatorname{logit}(1-p_i).
\end{equation}
Lower TCE and TCE area indicate better tail coverage.

\paragraph{RBF-MMD\texorpdfstring{\(^2\)}{2}.}
To assess global distributional agreement, we use the biased RBF-MMD\(^{2}\) estimator.
Let \(X_m\) and \(Y_m\) contain \(m\) evaluation samples, with \(m=1{,}000\) for the synthetic and wildfire datasets and \(m=512\) for HRRR. We compute
\begin{align}
    \widehat{\mathrm{MMD}}_{\mathrm b}^{2}(X_m,Y_m)
    &= \frac{1}{m^2}\sum_{i,j=1}^{m}k(x_i,x_j)
     + \frac{1}{m^2}\sum_{i,j=1}^{m}k(y_i,y_j)
     - \frac{2}{m^2}\sum_{i,j=1}^{m}k(x_i,y_j),\\
    k(a,b) &= \exp\!\left(-\gamma\|a-b\|_2^2\right).
\end{align}
The estimator includes diagonal kernel terms. The bandwidth is fixed from the reference sample only,
\begin{equation}
    \gamma=\frac{1}{2m_2},
\end{equation}
where \(m_2\) is the median positive squared pairwise distance in \(X_m\).
The same bandwidth is then used for all methods within a dataset and trial.
Lower MMD\(^{2}\) indicates closer global distributional agreement.

\paragraph{Paired changes.}
RTR and its corresponding DDPM or DDIM baseline are evaluated on the same generated endpoints before and after correction.
For any lower-is-better metric \(\mathcal M\), the change barplots report
\begin{equation}
    \Delta\mathcal M
    = \mathcal M(X,Y_{\mathrm{RTR}})
    - \mathcal M(X,Y_{\mathrm{vanilla}}).
\end{equation}
Hence, negative changes indicate improvement.
The corresponding absolute metric values are reported separately in \autoref{sec:supp-exp-additional}.

\section{Further discussion of related work}
\label{appendix-related-work}

\paragraph{Heavy-tailed generative models beyond diffusion.}
The generation of heavy-tailed data has been tackled through different generative modeling paradigms, including adversarial models, normalizing flows, and variational autoencoders.
One strategy separates marginal modeling from dependence learning: Quant GANs \citep{wiese2020quant} and evtGAN \citep{boulaguiem2022modeling} transform the marginal distributions before adversarial training and subsequently map generated samples back to the original scale.
In particular, evtGAN combines extreme-value models for the marginals with a GAN for the dependence structure.
A second strategy introduces heavy-tailed latent distributions, as in \citet{feder2020nonlinear} and Pareto GAN \citep{huster2021pareto}.
This addresses a representational limitation of generators that are globally Lipschitz: such maps cannot transform Gaussian inputs into distributions with polynomial tails \citep{liang2022fat}.
Other methods incorporate extreme-value structure directly into the model parameterization.
EV-GAN learns a representation of extreme quantiles using ReLU networks \citep{allouche2022evgan}, while \citet{hasan2022modeling} develop neural representations of stable tail dependence functions.
Variational autoencoders have likewise been adapted to multivariate and spatial extremes \citep{lafon2023vae,zhang2024flexible}.

Related developments in normalizing flows address both marginal tail behavior and its variation across directions.
\citet{liang2022fat} introduce anisotropic tail-adaptive flows, whereas \citet{hickling2025flexible} retain a Gaussian base distribution and introduce a final transformation capable of producing heavy tails.
A complementary approach targets quantities relevant to a particular application: Tail-GAN uses a training objective based on Value-at-Risk and Expected Shortfall to reproduce portfolio tail risk \citep{cont2025tail}.
These contributions show how tail information can inform preprocessing, latent distributions, model architectures, and training objectives.
Our approach instead uses the data to estimate a radial tail correction for an already trained generator, preserving its angular component without updating its parameters.

\paragraph{Fine-tuning pretrained generative models.}
A substantial literature studies how pretrained generative models can be adapted to improve sample quality, satisfy additional constraints, or specialize their outputs to downstream tasks.
Approaches include supervised fine-tuning on reward-filtered data \citep{wu2023human}, reinforcement learning \citep{fan2023dpok,black2024training}, and optimization through the sampling procedure.
For example, \citet{marion2025implicit} develop a single-loop method for optimization problems involving distributions defined implicitly by stochastic samplers, with applications to diffusion fine-tuning.
Stochastic optimal control provides another formulation of this adaptation problem: Adjoint Matching learns a control through a regression objective \citep{domingo2025adjoint}, while Adjoint Sampling extends this framework to learning samplers for unnormalized densities \citep{havens2025adjoint}.
These methods reuse a pretrained model but involve an additional optimization procedure to adapt its generative dynamics.
Our objective is more specific: we seek to correct tail behavior through a transformation of generated samples.

\paragraph{Guidance and auxiliary correction models.}
Adaptation can also retain the pretrained generator and introduce an auxiliary correction during sampling.
Doob's $h$-transform provides a probabilistic foundation for this construction: for a positive terminal weighting function, the conditional expectation of that weight along a reference diffusion determines the ideal drift correction.
DEFT learns a compact approximation of the generalized $h$-transform \citep{denker2024deft}, while \citet{ouyang2026alignment} develop guidance based on conditional reward expectations for diffusion and flow-matching models.
Recent work further investigates regularized estimation of the Doob correction \citep{chang2026doob,suchkov2026weirdo} and its incorporation into model parameters through distillation \citep{guo2026dohf}.
These approaches illustrate several ways to reuse pretrained models, although they generally require access to the sampling dynamics and, in some cases, the training of an additional network.

\paragraph{Particle methods for inference-time adaptation.}
Sequential Monte Carlo provides an alternative means of adapting a pretrained diffusion model through weighted particles and resampling.
\citet{wu2023practical} develop an asymptotically exact particle approximation of a conditional distribution specified through a diffusion model, and \citet{cardoso2024monte} apply this approach to Bayesian linear inverse problems.
Feynman--Kac correctors extend weighted simulation to annealed distributions, guidance, and combinations of pretrained models \citep{skreta2025feynman}.
These methods address sampling from a specified modification of a reference distribution.
Their sampling guarantees do not, by themselves, establish that the reference model accurately represents the tails of the data distribution.
We refer to \citet{strasman2026Theoretical} for a theoretical analysis of the convergence of particle methods in the context of diffusion models.
This distinction motivates corrections that explicitly use the data to identify discrepancies in tail behavior.

\paragraph{Correction using the original data.}
A complementary perspective uses the original data to estimate features that a trained generator has failed to reproduce.
Discriminator rejection sampling \citep{azadi2019discriminator} and Metropolis--Hastings GANs \citep{turner2019metropolis} use discriminator-based density-ratio estimates to correct generated samples without changing the generator.
For diffusion models, \citet{kim2023discriminator} learn a data-versus-model discriminator after pretraining and use it to correct the score during sampling.
\citet{beltranvelez2026reward} instead use a discriminator-derived density-ratio reward to fine-tune a flow-matching model.
Our method follows this broader principle of using data to correct a trained generative distribution, but concentrates on its tail.
It applies a one-dimensional transformation to the radial component in a region of approximate radial--angular independence, preserving the angular component and leaving the pretrained generator unchanged.

\end{document}